\DeclareMathOperator*{\argmin}{arg\,min}
\def\cA{\mathcal{A}}
\def\cS{\mathcal{S}}
\newcommand{\E}{\mathbb{E}}
 \newtheorem{theorem}{Theorem}[section]
 \newtheorem{corollary}[theorem]{Corollary}
 \newtheorem{lemma}[theorem]{Lemma}
\newtheorem*{claim*}{Claim}
\newtheorem{proposition}[theorem]{Proposition}
 \newtheorem{definition}[theorem]{Definition}
\newtheorem{fact}[theorem]{Fact}
\newif\ifcomments 
\newcommand{\newprob}{{\textsc{SafeZone}} }
\newcommand{\newprobNoSpace}{{\textsc{SafeZone}}}
\title{Finding Safe Zones of Markov Decision Processes Policies}
\author{%
  Lee Cohen \\
  TTI-Chicago \\%\thanks{Use footnote for providing further information
  %  about author (webpage, alternative address)---\emph{not} for acknowledging
  %  funding agencies.} \\
  %Department of Computer Science\\
  %Cranberry-Lemon University\\
  %Pittsburgh, PA 15213 \\
  %\texttt{hippo@cs.cranberry-lemon.edu} \\
  % examples of more authors
   \And
   Yishay Mansour \\
   Tel-Aviv University \\
   Google Research\\
  % Address \\
  % \texttt{email} \\
  % \AND
  % Coauthor \\
  % Affiliation \\
  % Address \\
  % \texttt{email} \\
   \And
   Michal Moshkovitz \\
   Bosch Center for AI \\
  % Address \\
  % \texttt{email} \\
  % \And
  % Coauthor \\
  % Affiliation \\
  % Address \\
  % \texttt{email} \\
}

\begin{document}

\maketitle

\begin{abstract}
Given a policy of a Markov Decision Process, we define a {\newprob} as a subset of states, such that most of the policy's trajectories are confined to this subset. 
The quality of a {\newprob } 
is parameterized by the number of states and the escape probability, i.e., the probability that a random trajectory will leave the subset. \textsc{SafeZones} are especially interesting when they have a small number of states and low escape probability. 
We study the complexity of finding optimal {\textsc{SafeZones}}, 
and show that in general, the problem is computationally hard. 
Our main result is a bi-criteria approximation learning algorithm with a factor of almost $2$ approximation for both  the escape probability and \newprob size, using a polynomial size sample complexity.

\end{abstract}

\section{Introduction}\label{sec:intro}
Most research in reinforcement learning (RL)  deals with learning an optimal  policy for some Markov Decision Process (MDP). One notable exception to that is \textit{Safe RL} which addresses the concept of safety. Traditional Safe RL focuses on finding the best policy that meets safety requirements, typically by either adjusting the objective to include the safety requirements and then optimizing for it, or incorporating additional safety constraints to the exploration. In both of these cases, the safety requirements should be pre-specified. \textit{Anomaly Detection} is the problem of identifying patterns in data that are unexpected, i.e., anomalies 
(see, e.g., \citet{Chandola09AnomalySurvey} for survey). 
This paper introduces the \newprob problem, which addresses the safety of a specific Markov decision process (MDP) policy by detecting anomalous events rather than finding a policy that satisfies some pre-defined safety constraints.

Consider a finite horizon MDP and a policy (a mapping from states to actions). 
The policy induces a Markov Chain (MC) on the MDP. Given a subset of states, a trajectory \textit{escapes it} if at least one of its states is not in the subset. The \textit{escape probability} of a subset is  the probability that a randomly sampled trajectory will escape it. A \newprob is a subset of states whose quality is measured by (1) its escape probability and (2) its size. If a \newprob has low escape  probability, we consider it \textit{safe} (hence escaping is the anomaly). We emphasize that safety is policy-dependent and that different policies could have different \textsc{SafeZones}. 

Trivial solutions for \newprob  include the entire states set (minimal escape probability of $0$, maximal size), and the empty set (minimal size, maximal escape probability of $1$). 
The goal is to find a \textsc{SafeZone} with a good balance: a relatively small size but still safe enough (small escape probability). More precisely, given an upper bound over the escape probability,  $\rho>0$, the goal of the learner is to find the smallest \newprob with escape probability at most $\rho$ using trajectory sampling. 
We address an unknown environment, by which we mean no prior knowledge of the transition function or the policy used. The learner is only given access to random trajectories generated by the induced Markov chain. For many applications, if a small \newprob exists, it is useful to find it.

One such example is designing a policy for a smaller state space that performs well in most cases but is undefined for some states, or formally, imitation learning with compact policy representation \cite{abel2018state,dong2019provably}. 
Suppose a company would like to automatically generate a `lite' edition of a software or an app (e.g., Microsoft Office Lite, Facebook Lite) that contains only part of the system's states, finding a \newprob  makes a lot of sense--- capturing popular users' trajectories. If instead of finding \newprob, one were to simply take the $90\%$ most popular states of users using Office, they might not include the state that allows for the precious option of saving, which emphasizes the importance of the problem. 

Another motivation for the problem is autonomous vehicles and specifically infrastructure design for them. Even though a lot of the progress in the field of autonomous driving is credited to sensors installed on the vehicles, relying solely on the vehicles' sensors has its limitations (e.g., \citet{YANG2020444}). In extreme weather, a vehicle might unintentionally deviate from the current lane and the vehicle sensors might not trigger a response in time. Vehicular-to-Infrastructure  (V2I) is a type of communication network between vehicles and road infrastructures that are designed to fill the need for an extra layer of safety.\footnote{ The `V2I  Deployment Coalition' is an initiative by the U.S. Department of Transportation with the vision of ``An integrated national infrastructure that provides the country a connected, safe and secure transportation system taking full advantage of the progress being made in the Connected and Autonomous Vehicle arenas.'' \hyperlink{https://transportationops.org/V2I/V2I-overview}{https://transportationops.org/V2I/V2I-overview}} 
An important part of the V2I communication is based on Road Side Units (RSUs), sensors that are installed alongside roads. Together with the sensors that are installed on the vehicles, they span the V2I communication. 
As the resources for RSUs distribution are limited and their enhanced safety is key for V2I and the autonomous vehicle adaptation, distributing RSUs in states of a (good) \newprob could enhance the safety of popular commutes efficiently. Namely, given data regarding commutes (trajectories) in an area, installing RSUs in its' \newprob states will ensure increased safety measures of a major part of the commutes, from starting point to destination, and potentially increase the trust in the system. 
In addition, if regulation were to prevent people from making autonomous commutes outside of the \newprob, having most of the autonomous commutes confined to the \newprob implies that most commutes can still be driverless.

Another useful application is  automatic robotic arms that assemble products. If something unusual happened during the assembly of a product, it might result in a malfunctioning product, and in that case, the operator should be notified (anomaly detection). On the other hand, 
it is not really autonomic if the operator is notified too frequently. If we find a `safe enough' \newprobNoSpace, we can make sure that we notify the operator only in the rare event the production process (trajectory) escapes it, which means that something went wrong with the product assembly. Furthermore, if the \newprob is small, the manufacturer can potentially test the  \newprob states and verify their compliance, 
ensuring that the majority of products are well constructed for a significantly lower testing budget.

Finally, the \newprob problem can be viewed through the lens of  explainable RL, where the goal is to explain a specific policy. \newprob is a new post-hoc explanation of the summarization type \cite{Alharin20SurveyInt}. For example, for the autonomous vehicle infrastructure design, governments could explain to their citizens the design that was chosen.

Our results include approximation algorithms for the \newprob problem, which we show is NP-hard. We are interested in a good trade-off between the escape probability of the \newprob and its size. 
Our algorithms are evaluated based on two criteria: their approximation factors (w.r.t. the escape probability bound and the optimal set size for this bound), and their trajectory sample complexity bounds (e.g., \citet{Even-DarMM06}).

\textbf{Contribution}:  In \cref{sec:model} we formalize the \newprob problem. In \cref{sec:start}, we explore naive approaches, namely greedy algorithms that  select \textsc{SafeZones} based on state distributions and trajectory sampling. In addition, we show particular cases in which their solutions are far from optimal, either in terms of high escape probability or significantly larger set size. In \cref{sec:algo} we design \textsc{Finding} \newprobNoSpace, an efficient approximation algorithm with provable guarantees. The algorithm  returns a \newprob which is slightly more than twice the size and twice the escape probability compared to the optimal. 
While the main focus of this work is the introduction of the problem and the aforementioned theoretical guarantees, we do demonstrate the problem empirically, to provide additional intuition to the readers. In \cref{sec:empirical_demo}, we compare the performance of the naive approaches to \textsc{Finding  }\newprob and show that different policies might lead to completely different \textsc{SafeZones}. In \cref{sec:hardness}, we show that the problem is hard, even for  known environment setting, namely even when the induced Markov chain is given, finding a  \newprob is NP-hard, even for horizon $H= 2$.

For brevity, some algorithms and (full) proofs are relegated to the appendix.

\subsection{Related Work}
MDPs have been studied extensively 
in the context of decision making in particular by the Reinforcement Learning (RL) community 
(see \citet{Puterman1994MarkovDP} for a broad background on MDPs,  
and \citet{SuttonBook2018} for background on reinforcement learning). 

\textbf{Safe RL}  
A related line of research is safe RL, where the goal of the learner is to find the best policy that satisfies safety guarantees. The two main methodologies to handle such problems are: (1) altering the objective to include the safety requirement and optimizing over it, and (2) adding additional safety constraints to the exploration part. See \cite{SafetyPfrommer21,SafetyEmam21,SafetyXu21,SafetyHendrycks21, SafetyHasanzadeZonuzy21,bennett2023provable,PrajapatTZ022,LiuCILW0Z22} for recent works and \cite{SafetyGraciaSurvey15,SafetyAmodei16Pratical} for surveys. Recent work by \citet{sootla22a} augments the environment to accommodate some pre-specified safety constraints and thus satisfies them almost surely. 
In our work, the goal is not to find the optimal policy, but rather, given a policy, finding its \newprobNoSpace. The \newprob is not characterized by specific requirements, and might not be unique. Moreover, beyond the MDP, the solution very much depends on the policy. 
%

% \paragraph{Imitation Learning.}  
\textbf{Imitation Learning.}   
In imitation learning, the learner observes a policy behavior and wants to imitate it (see \cite{Ahmed17ImitationSurvey} for a survey).  Similar to imitation learning, we are given access to samples of a given policy. In contrast, rather than imitating  the policy we find the policy's \newprobNoSpace, which is an important property of the policy. 

\textbf{Approximate MDP equivalence.} 
Another related research line is that of finding an (almost) equivalent minimal model for a given MDP, 
where the goal is 
that the optimal policy on the (almost) equivalent model induces an (approximately) optimal policy in the original MDP, e.g., \cite{GIVAN03MDP, Even-DarM03}. 
This line of works and ours differ in that we do not try to modify the MDP (e.g., cluster similar states), but rather to find a \newprobNoSpace, a property that is defined for the existing MDP and a specific policy. 

\textbf{Explainability.} 
In explainability, the goal is to provide a post-hoc explanation to a specific (given) model \cite{molnar2019}, e.g., using decision trees \cite{blanc2021provably,moshkovitz2021connecting}, influential examples \cite{koh2017understanding}, or local approximation explanations \cite{li2020learning}. We focus on explainability for reinforcement learning, and specifically, we suggest a new summarization explanation through our \newprob \citep{amir2018highlights}.

\section{The Safe Zone Problem}\label{sec:model}
We model the problem using a Markov model with a finite horizon $H> 1$. 
Formally, there is a Markov chain {(MC)} 
$\langle  \cS,P,s_0 \rangle$ where $ \cS$ is the set of states, $s_0\in  \cS$ is the initial state, and $P:  \cS \times  \cS \rightarrow [0,1]$ is the transition function that maps a pair of states into probability by  $P(s,s')=\Pr[s_{t+1}=s'|s_t=s]$.  We assume the transition function $P$ is induced by a policy $\pi: \cS \rightarrow \text{Simplex}^{\cA}$ on an MDP  $\langle \cS,s_0,P',\cA \rangle$ with transition function  $P':  \cS \times \cA \times  \cS \rightarrow [0,1]$ such that $P(s,s')=\sum_{a\in \cA} P'(s,a,s')\cdot \pi(a|s)$ for all $s,s'\in \cS$ (though any MC can be generated this way, thus our theoretical guarantees apply for general MCs).

A \emph{trajectory} $\tau=(s_0, \ldots, s_H)$ starts in the initial state $s_0$ and followed by a sequence of $H$ states generated by $P$, i.e., 
$\Pr[s_{i+1}=s'|s_i=s]=P(s,s')$ for all $i\in[H]$, where $[H]:=\{1,\dots,H\}$.  For example, in the context of autonomous vehicles, the trajectory is a commute. We abuse the notation and regard a trajectory $\tau$ both as a sequence and a set. 

Given a subset of states $F\subseteq \cS$, a trajectory $\tau$ \textit{escapes} $F$ if it contains at least one state $s\in\tau$ such that $s\notin F$, i.e., $\tau\not \subseteq F$. For example, if a commute passes through at least one area (state) that does not have an RSU sensor, it escapes the \newprob. We refer to the probability that a random trajectory escapes $F$ as \textit{escape probability} and denote it by $\Delta(F)=\Pr_\tau[\tau\not\subseteq F]$. We call $F$ a $\rho-$\textit{safe} (w.r.t. the model $\langle \cS,s_0,P\rangle$) if its escape probability, $\Delta(F)$,  is at most $\rho$. 
Formally, 
\begin{definition}
A set $F\subseteq \cS$ is $\rho-$safe if $\Delta(F):=\Pr_{\tau}[\tau\not\subseteq F]\leq \rho,$ where $\tau$ is a random trajectory. 
\end{definition}

A set $F\subseteq \cS$ is called \textit{ $(\rho,k)-$\newprob} if $F$ is $\rho-$safe and $|F|\leq k$. 
Given a safety  parameter $\rho \in (0,1)$, we denote the smallest size $\rho-$safe set by  $k^*(\rho)$:
$$k^*(\rho)=\min_{F\subseteq \cS \text{ is } \rho-\text{safe}}|F|.$$
Whenever the discussed parameter $\rho$ is clear from the context we use $k^*$ instead of $k^*(\rho)$.
We remark that there might be multiple different  $(\rho,k)-${\newprob} sets. 
The learner knows the set of states, $\cS$, the initial state, $s_0$, and the horizon $H$. However, the transition function $P$ and the minimal size of the  $\rho-$safe set, $k^*$, are unknown to the learner. Instead, the learner receives information about the model from sampling trajectories from the distribution induced by $P$.  

Given $\rho>0$, the ultimate goal of the learner would have been to find a  $(\rho,k^*(\rho))-$\newprobNoSpace. However, as we  %later
show in \cref{sec:hardness}, finding a $(\rho,k^*(\rho))-$\newprob is NP-hard, even when the transition function $P$ is known. This is why we loosen the objective to find a bi-criteria approximation $(\rho',k')-$\newprob. (Bi-criteria approximations are widely studied in approximation and online algorithms \cite{Vazirani2001, williamson_shmoys_2011}.) 
In our setting, given $\rho$ the objective is to find a set $F$ which is $(\rho',k')-$\newprob with minimal size $k'\geq k^*$ and minimal escape probability $\rho'\geq \rho$. In addition, we are interested in minimizing the sample complexity.

Notice that the learner can efficiently verify, with high probability, whether a set $F$ is approximately $\rho-$safe or not, as we formalize in the next proposition. The following proposition follows directly from \cref{lemma:3events}.

\begin{proposition}\label{prop:safeVerification}
There exists an efficient algorithm such that for every set $F\subseteq \cS$ and parameters $\epsilon, \lambda>0$, the algorithm samples  $O(\frac{1}{\epsilon^2}\ln\frac{1}{\lambda})$ random trajectories and returns $\widehat{\Delta}(F)$, such that with probability at most $\lambda$ we have $|\Delta(F) - \widehat{\Delta}(F)|\geq \epsilon$.
\end{proposition}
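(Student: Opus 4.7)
The plan is to estimate $\Delta(F)$ by straightforward Monte Carlo sampling and control the deviation with a standard concentration inequality. Concretely, I would sample $m$ i.i.d. trajectories $\tau_1, \ldots, \tau_m$ from the Markov chain (the learner is assumed to have sampling access), define the indicator $X_i = \mathbbm{1}[\tau_i \not\subseteq F]$, and return the empirical mean $\widehat{\Delta}(F) = \frac{1}{m}\sum_{i=1}^m X_i$. Testing whether a sampled trajectory escapes $F$ is an $O(H)$ computation (scan the states in the trajectory and check membership in $F$), so the algorithm is efficient.

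Since each $X_i \in \{0,1\}$ with $\mathbb{E}[X_i] = \Delta(F)$ and the $X_i$ are independent, Hoeffding's inequality gives
\begin{equation*}
\Pr\!\left[\,|\widehat{\Delta}(F) - \Delta(F)| \geq \epsilon\,\right] \;\leq\; 2\exp(-2m\epsilon^2).
\end{equation*}
Choosing $m = \bigl\lceil \tfrac{1}{2\epsilon^2}\ln\tfrac{2}{\lambda}\bigr\rceil = O\!\bigl(\tfrac{1}{\epsilon^2}\ln\tfrac{1}{\lambda}\bigr)$ makes the right-hand side at most $\lambda$, which is exactly the claimed bound. Alternatively, the paper indicates that this follows from a referenced three-events lemma (\cref{lemma:3events}); either route yields the same sample complexity and the same failure guarantee.

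There is no real technical obstacle: the bound is an off-the-shelf Chernoff/Hoeffding argument applied to Bernoulli indicators of the escape event, and independence of the sampled trajectories is immediate since each is drawn freshly from the chain. The only minor subtlety worth flagging is to be careful that the event ``$\tau \not\subseteq F$'' is measurable and easy to check from a sampled trajectory, which it is, and that the constant factors hidden in $O(\cdot)$ swallow the factor of $2$ coming from the two-sided Hoeffding bound. These are routine and do not affect the stated asymptotics.
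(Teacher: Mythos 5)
Your proof is correct and matches the paper's argument exactly: the paper's \cref{lemma:3events} is precisely the Hoeffding bound on the Bernoulli indicators $\mathbbm{1}[\tau\not\subseteq F]$ with the same sample size $\frac{1}{2\epsilon^2}\ln\frac{2}{\lambda}$. Nothing further is needed.
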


\subsection{A Note on Trajectory Escaping}  
The \newprob problem deals with escaping trajectories. In particular, given a \newprobNoSpace, a trajectory escapes it, no matter if only one of its states is outside the \newprob or all of them. A related, yet very different problem, is that of minimizing a subset size, such that the expected number of states outside the set is minimized.
This related problem, while significantly easier  (as it is solved by returning the most visited states), does not apply to the applications we described earlier. 
For example, consider the infrastructure design for autonomous vehicles. We want passengers to have a safe experience end-to-end. Hence the entire route must have that extra security layer provided by the RSUs. 
In Section~\ref{sec:start}, we show that the solution for the \newprob does not necessarily overlap with the most visited states. Furthermore, simply returning states that appeared in trajectory samples could result in a set size far from optimal.

\subsection{A Note on Multiple Policies}
 Our framework can accommodate an arbitrary number of policies, representing multiple agents. This is made feasible through the implementation of a single mixed policy. This mixed policy is designed to stochastically select a policy from this ensemble of policies, uniformly at random or according to any specified distribution over the participating agents and their respective policies.

\subsection{Summary of Contributions}

We summarize the results of all the algorithms that appear in the paper in    \cref{table:contributions}.  The bounds of \textsc{Greedy by Threshold} and \textsc{Greedy At Each Step} require the Markov Chain model as input, and a pre-processing step that takes $O(|S|^2H)$ time. Additionally, the bounds for the first three algorithms (the naive approaches) require additional knowledge of $k^*(\rho)$. The sample complexities of \textsc{Simulation} is bounded by $poly(k^*,\frac{1}{\rho})$, and of  \textsc{Finding \newprob} Algorithm is bounded by $poly(k^*,H,\frac{1}{\epsilon},\frac{1}{\delta})$ for some parameters $\epsilon,\delta\in (0,1)$.
\begin{wraptable}{T}{7.5cm}
\caption{Upper bounds for safety and set size. 
* Only for layered MDPs.}\label{table:contributions}
\begin{tabular}{ccc}\\\toprule  
\textbf{Algorithm} & \textbf{Safety} & \textbf{Set Size}\\ \midrule
 Greedy by Threshold 
&   $2\rho
$ &
$k^*H/\rho$
 \\
  \midrule
 Simulation
  & $2\rho
  $&  $O(k^*H\ln k^*)$
 \\
  \midrule
Greedy at Each Step*
 & $\rho H
 $    &$k^*$
 \\ \midrule
 Finding \newprob 
&$2\rho+2\epsilon$ & $(2+\delta)k^*$
\\
\end{tabular}
\end{wraptable} 

Beyond the upper bounds, we provide each of the first three algorithms (the naive approaches) instances that show that they are tight up to a constant.
 
The following theorem is an informal statement of our main theorem, \cref{thm:main_thm}.
\begin{theorem}
For every $\rho, \epsilon,\delta>0$, with probability $\geq 0.99$ there exists an algorithm that  returns a set which is 
$\left(2\rho + 2\epsilon,  (2+\delta)k^*\right)-\newprob.$ 
\end{theorem}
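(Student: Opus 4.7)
My plan is to reduce the problem to a linear-programming relaxation on sampled trajectories, followed by threshold rounding. First, draw $N=\mathrm{poly}(k^*,H,1/\epsilon,1/\delta)$ trajectories $\tau_1,\ldots,\tau_N$ from the induced Markov chain. By a Hoeffding plus union-bound argument that extends \cref{prop:safeVerification} to a uniform statement, $N$ can be chosen so that with probability at least $0.99$ we have $|\widehat{\Delta}(F)-\Delta(F)|\le \epsilon/2$ simultaneously for $F^*$ and for every $F\subseteq\cS$ of size at most $(2+\delta)k^*$.

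Next, form and solve the LP with variables $x_s\in[0,1]$ (fractional inclusion of state $s$) and $y_i\in[0,1]$ (fractional coverage of trajectory $\tau_i$):
\begin{equation*}
\min \sum_s x_s \quad \text{s.t.}\quad y_i\le x_s\;\;\forall s\in\tau_i,\quad \tfrac{1}{N}\sum_i y_i\ge 1-\rho-\epsilon/2.
\end{equation*}
Under the concentration event, the indicator assignment $x_s=\mathbbm{1}[s\in F^*]$, $y_i=\mathbbm{1}[\tau_i\subseteq F^*]$ is feasible because $\widehat{\Delta}(F^*)\le \rho+\epsilon/2$, so the LP optimum is at most $k^*$. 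Round by $F=\{s:x^*_s\ge 1/(2+\delta)\}$, which immediately gives $|F|\le(2+\delta)\sum_s x^*_s\le(2+\delta)k^*$.

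For the escape bound, at an LP optimum one may take $y^*_i=\min_{s\in\tau_i}x^*_s$, so $\tau_i\subseteq F$ whenever $y^*_i\ge 1/(2+\delta)$. Letting $X$ be the fraction of sampled trajectories with $y^*_i<1/(2+\delta)$, the coverage constraint yields $\tfrac{X}{2+\delta}+(1-X)\ge 1-\rho-\epsilon/2$, which rearranges via $(2+\delta)/(1+\delta)\le 2$ to $X\le 2(\rho+\epsilon/2)=2\rho+\epsilon$. Hence $\widehat{\Delta}(F)\le 2\rho+\epsilon$, and invoking the concentration event once more gives $\Delta(F)\le 2\rho+2\epsilon$.

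The main obstacle will be the uniform-convergence step, since the rounded set $F$ is data-dependent and one cannot simply fix $F$ in advance and apply Hoeffding. This must be handled by bounding the effective complexity of candidate sets---either by restricting attention a priori to subsets of size at most $(2+\delta)k^*$ (a union bound of size $|\cS|^{O(k^*)}$) or via a VC-style growth-function argument on the escape events $\{\tau\mapsto\mathbbm{1}[\tau\not\subseteq F]:F\subseteq\cS\}$---so that polynomially many trajectories suffice. The LP feasibility argument and the rounding calculation above are standard; the delicate point is the uniform deviation bound on which both legs of the argument rely.
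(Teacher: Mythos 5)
Your LP-rounding argument is correct, and it is a genuinely different route from the paper's. The paper's algorithm is \emph{online and adaptive}: it samples trajectories one at a time, accepts each escaping trajectory with probability $1/new_F(\tau)$ (rejection sampling from the tilted distribution $Q_F$), and stops when the estimated escape probability drops below $2\rho+\epsilon$. Its key lemma (Lemma~\ref{lemma:BiGi}) is a potential argument showing that, while $\Delta(F)>2\rho$, the expected number of states added outside $F^*$ per acceptance is at most the expected number added inside $F^*$, giving $\E[|F|]\leq 2k^*$; the high-probability size bound $(2+\delta)k^*$ then comes from Markov's inequality plus $\Theta(1/\delta)$ independent repetitions. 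This design deliberately avoids any uniform-convergence step: each call to \textit{EstSafety} estimates the escape probability of a \emph{single, fixed} set $F_i$, so plain Hoeffding plus a union bound over the $|F|$ calls suffices, and the sample complexity has no dependence on $|\cS|$ and no loss from a hypothesis-class bound. Your approach is a \emph{one-shot offline} LP relaxation with threshold rounding at $1/(2+\delta)$: feasibility of the indicator solution for $F^*$ gives $\mathrm{OPT}_{LP}\leq k^*$, the rounding gives $|F|\leq(2+\delta)k^*$ deterministically (so no amplification is needed), and the partial-coverage constraint plus $(2+\delta)/(1+\delta)\leq 2$ gives $\widehat\Delta(F)\leq 2\rho+\epsilon$. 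What your approach buys is conceptual simplicity and removal of the amplification/repetition step; what it costs, as you correctly flag, is that $F$ is data-dependent, so you must pay for uniform convergence over all candidate sets. A union bound over sets of size $\leq(2+\delta)k^*$ introduces a $\log|\cS|$ factor the paper does not have; your VC-style alternative does recover a $\mathrm{poly}(k^*,H,1/\epsilon)$ bound (the growth function of $\{\tau\mapsto\mathbbm{1}[\tau\not\subseteq F]:|F|\leq K\}$ restricted to $m$ sampled trajectories is at most $(mH+1)^{K}$, since only the $\leq mH$ states appearing in the sample matter), but you would need to carry this out, and the resulting constants and the knowledge of $k^*$ needed to set $N$ (or a doubling search for it) are details the paper's adaptive scheme sidesteps. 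Both routes prove the stated $(2\rho+2\epsilon,(2+\delta)k^*)$ bi-criteria guarantee.
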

In addition to the sample complexity, the running time of the algorithm is also bounded by  $poly(k^*,H,\frac{1}{\delta},\frac{1}{\epsilon})$. 

We empirically evaluate the suggested algorithms on a grid-world instance (where the goal is to reach an absorbing state), showing that \textsc{Finding \newprob} outperforms the naive approaches. Moreover, we show that different policies have qualitatively different \textsc{SafeZones}. 
Finally, an informal statement of  \cref{theorem:ReductionCorrectness} which appears in Appendix~\ref{sec:hardness} due to space limitations.
\begin{theorem}
\newprob is NP-hard.
\end{theorem}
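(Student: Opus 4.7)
The plan is to reduce from the $k$-\textsc{Clique} problem (given $G=(V,E)$ and $k$, decide whether $G$ has a clique on $k$ vertices), which is NP-hard, to the decision version of \newprob (given an MC, horizon $H$, size bound $k'$, and escape bound $\rho$, decide whether a $(\rho,k')$-\newprob exists). I will show that the reduction already goes through at horizon $H=2$, matching the claim in the excerpt.

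\textbf{Construction.} Given $(G,k)$, after discarding isolated vertices, take $\cS=\{s_0\}\cup V$ and define transitions
\[
P(s_0,v)=\frac{\deg(v)}{2|E|},\qquad P(u,v)=\frac{1}{\deg(u)} \text{ for } uv\in E.
\]
With $H=2$, every trajectory has the form $(s_0,u,v)$ with $uv\in E$, and has probability $\frac{\deg(u)}{2|E|}\cdot\frac{1}{\deg(u)}=\frac{1}{2|E|}$; summing over the $2|E|$ oriented edges gives total mass $1$. The purpose of the degree-proportional weighting out of $s_0$ is that each oriented edge contributes exactly the same mass to any escape probability, independently of which vertex it is incident to.

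\textbf{Correctness.} Any useful $F$ must contain $s_0$ (else the escape probability equals $1$). Writing $F_V:=F\cap V$ and $e(F_V)$ for the number of edges of $G$ with both endpoints in $F_V$, a direct count gives
\[
\Pr[\tau\subseteq F]=\frac{2\,e(F_V)}{2|E|}=\frac{e(F_V)}{|E|}.
\]
Set $\rho:=1-\binom{k}{2}/|E|$ and ask for a $(\rho,k+1)$-\newprob. A $k$-clique $C$ in $G$ yields the safe set $\{s_0\}\cup C$ of size $k+1$ with $e(F_V)=\binom{k}{2}$, hence $\Pr[\tau\subseteq F]=1-\rho$. Conversely, any $(\rho,k+1)$-\newprob $F$ satisfies $|F_V|\le k$ and $e(F_V)\ge\binom{k}{2}$, and the trivial bound $e(F_V)\le\binom{|F_V|}{2}$ forces $|F_V|=k$ with $F_V$ a $k$-clique in $G$. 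The construction and the resulting $\rho$ (a rational with polynomial-size numerator and denominator) are computable in polynomial time, completing the reduction.

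\textbf{Expected main obstacle.} The delicate point is engineering the starting distribution $P(s_0,\cdot)$ so that each oriented edge of $G$ carries equal mass under the induced walk. Without the degree-proportional reweighting from $s_0$, the quantity $\Pr[\tau\subseteq F]$ depends on the degree sequence restricted to $F_V$ rather than purely on $e(F_V)$; the reduction would then land in a weighted variant of densest-$k$-subgraph rather than cleanly into $k$-\textsc{Clique}, and matching the threshold exactly would become fiddly. Minor care is also needed for degenerate corner cases: isolated vertices are removed during preprocessing, and the boundary values $\rho\in\{0,1\}$ are avoided by working in the regime $0<\binom{k}{2}<|E|$, which is the interesting regime for $k$-\textsc{Clique} anyway.
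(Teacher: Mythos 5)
Your proof is correct, but it takes a genuinely different route from the paper. The paper reduces from \textsc{RegularClique} on $d$-regular graphs: it uses the plain random walk $P(v\mid u)=\tfrac1d\mathds 1_{(u,v)\in E}$, runs $|V|$ separate \newprob instances (one per choice of start vertex) with threshold $\rho=1-\bigl(\tfrac{k_c-1}{d}\bigr)^2$, and certifies the reverse direction by a two-way case analysis on whether the start vertex is an endpoint of a missing edge inside the candidate set. You instead reduce from general $k$-\textsc{Clique} via a single Karp-style instance: by adding a fresh start state $s_0$ with degree-proportional out-weights, you make every oriented edge carry equal mass $\tfrac1{2|E|}$, so $\Pr[\tau\subseteq F]$ collapses to the clean closed form $e(F_V)/|E|$; the reverse direction then follows immediately from the extremal bound $e(F_V)\le\binom{|F_V|}{2}$, with no case split. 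Your version removes the regularity assumption (equivalently, replaces it with the degree-proportional first transition, which plays the same normalizing role that $d$-regularity plays in the paper), dispenses with the per-vertex loop, and yields a tighter correctness argument; the paper's version uses a more elementary Markov chain (no auxiliary state) at the cost of a more hands-on probability calculation and restriction to regular graphs. Both establish NP-hardness at horizon $H=2$.
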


\section{Gentle Start}\label{sec:start}
This section explains and analyzes various naive algorithms to the {\newprob} problem. 
We show that even if the transition function is known in advance, these naive algorithms result in outputs that are far from optimal. 
To describe the algorithms, we define for each state $s$ the probability to appear in a random trajectory and denote it by $p(s)=\Pr_\tau[s\in\tau]\in[0,1]$. 
Note that  $\sum_{s\in\cS}p(s)$ is a number between $1$ and $H$ (e.g., $p(s_0)=1$), and can be estimated efficiently using dynamic programming if the environment and policy are known and sampling otherwise.  To be precise, some of the algorithms assume the  probabilities $\{p(s)\}_{s\in\cS}$ are received as input.

\paragraph{Greedy by Threshold Algorithm.}
The algorithm gets, in addition to $\rho$, the distribution $p$ and a parameter $\beta>0$ as input. It returns a set $F$ that contains all states $s$ with probability at least $\beta$, i.e., $p(s)\geq\beta$. 
We formalize this idea as  \cref{alg:high_probability_states} in Appendix~\ref{sup:start}. For $\beta=\frac{\rho}{k^*}$, the  output of the algorithm is $\left(2\rho,\frac{k^*H}{\rho}\right)-\newprob.$ More generally, we prove the following lemma. 

\begin{restatable}{lemma}{lemmaGreedyThreshold}\label{lemma:Threshold}
 For any $\rho,\beta\in(0,1)$, 
the \textsc{Greedy by Threshold Algorithm} returns a set that is $(\rho+k^*\beta,\frac{H}{\beta})-\newprob.$ In particular, for $\beta=\frac{\rho}{k^*
}$, this set is $\left(2\rho,\frac{k^*H
}\rho \right)-\newprob.$ 
\end{restatable}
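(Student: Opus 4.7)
The plan has two independent parts: bounding the size of the returned set $F = \{s \in \cS : p(s) \geq \beta\}$, and bounding its escape probability.

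For the size bound, I would use the observation (made in the paper just before the algorithm description) that $\sum_{s \in \cS} p(s) \leq H$, which follows from linearity of expectation since $\sum_s p(s)$ equals the expected number of distinct states visited by a random trajectory of horizon $H$. Since every $s \in F$ contributes at least $\beta$ to this sum, $\beta \cdot |F| \leq \sum_{s \in F} p(s) \leq H$, yielding $|F| \leq H/\beta$.

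For the escape bound, the key idea is to compare $F$ against an arbitrary optimal $\rho$-safe set $F^*$ of size $k^*$. A trajectory $\tau$ escapes $F$ only if either (i) it already escapes $F^*$, or (ii) it stays inside $F^*$ but hits some state in $F^* \setminus F$ (i.e., a state of $F^*$ whose appearance probability is strictly below $\beta$, hence not selected by the algorithm). Applying a union bound and then bounding the probability of (ii) by another union bound across states,
\begin{align*}
\Delta(F) \;\leq\; \Delta(F^*) + \Pr_\tau[\tau \cap (F^* \setminus F) \neq \emptyset]
\;\leq\; \rho + \sum_{s \in F^* \setminus F} p(s)
\;\leq\; \rho + |F^* \setminus F|\cdot \beta
\;\leq\; \rho + k^*\beta,
\end{align*}
using $p(s) < \beta$ for every $s \notin F$ in the third inequality and $|F^* \setminus F| \leq |F^*| = k^*$ in the last. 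This shows $F$ is $(\rho + k^*\beta, H/\beta)$-\newprobNoSpace. Plugging $\beta = \rho/k^*$ gives the $(2\rho, k^*H/\rho)$-\newprob specialization.

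The main conceptual step is the decomposition of the escape event in part two; the rest is a bookkeeping union bound plus the expected-number-of-distinct-states inequality. Everything else is straightforward, and no trajectory sampling is required since the algorithm is given $p$ as input.
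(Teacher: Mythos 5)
Your proof is correct and takes essentially the same approach as the paper: bound the size via $\sum_s p(s) \leq H$, and bound the escape probability by decomposing on whether $\tau$ escapes $F^*$ or merely hits a low-probability state of $F^*$. In fact your write-up is slightly cleaner, since the paper's displayed inequality contains the event $\tau \subseteq F^*\setminus F$ where it should read $\tau \cap (F^*\setminus F) \neq \emptyset$, a slip you implicitly correct.
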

While it is clear why there are instances for which the safety is tight, Lemma~\ref{lemma:ThresholdLowerBound} in  Appendix~\ref{sup:start} shows that the set size is tight as well.

\paragraph{Simulation Algorithm.} The algorithm samples  $O(\frac{\ln k^*}{\beta})$ random trajectories and returns a set $F$ with all the states in these trajectories. It is formalized in  Appendix~\ref{sup:start} as \cref{alg:Simulation}. 

\begin{restatable}{lemma}{lemmaSimulation}\label{lemma:Simulation}
Fix $\rho,\beta\in(0,1)$.
With probability at least $0.99$, \textsc{Simulation} Algorithm returns a set that is
$\left(\rho+k^*
\beta,O(k^*
+\frac{\rho H \ln k^*
%k
}{\beta})\right)-\newprob.$ In particular, for $\beta=\frac{\rho}{k^*}$, this set is $\left(2\rho,O(k^*
H\ln k^*
)\right)-\newprob.$
\end{restatable}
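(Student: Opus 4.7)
The plan is to compare the output $F$ of the simulation algorithm to a fixed optimal $\rho$-safe set $F^*$ of size $k^*$. Split $F^*$ into a heavy part $F^*_{\mathrm{heavy}}=\{s\in F^*:p(s)\geq\beta\}$ and a light part $F^*_{\mathrm{light}}=F^*\setminus F^*_{\mathrm{heavy}}$. The two key events I will want to hold simultaneously (each with probability at least $0.995$, so both hold with probability at least $0.99$) are: (i) every heavy state of $F^*$ appears in at least one sampled trajectory, so $F^*_{\mathrm{heavy}}\subseteq F$; and (ii) the number of sampled trajectories that escape $F^*$ is not much larger than its expectation $\rho m$, where $m=O(\ln k^*/\beta)$ is the sample size.

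For the safety bound, given (i) holds, note that $\Delta(F)\leq \Delta(F^*_{\mathrm{heavy}})$ since $F^*_{\mathrm{heavy}}\subseteq F$. A trajectory $\tau$ escapes $F^*_{\mathrm{heavy}}$ only if either $\tau\not\subseteq F^*$ (probability $\leq\rho$) or $\tau\cap F^*_{\mathrm{light}}\neq\emptyset$. For the second term a union bound gives $\Pr[\tau\cap F^*_{\mathrm{light}}\neq\emptyset]\leq \sum_{s\in F^*_{\mathrm{light}}}p(s)<|F^*_{\mathrm{light}}|\beta\leq k^*\beta$. Thus $\Delta(F)\leq \rho+k^*\beta$. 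For (i) itself, each $s\in F^*_{\mathrm{heavy}}$ is missed by a single trajectory with probability at most $1-\beta$, hence missed by all $m$ samples with probability at most $(1-\beta)^m\leq e^{-\beta m}$. Choosing $m=C\ln k^*/\beta$ for a large enough constant $C$ and applying a union bound over the at most $k^*$ heavy states gives (i) with the desired probability.

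For the size bound I will condition on (ii). Trajectories that stay inside $F^*$ contribute at most $|F^*|=k^*$ distinct states to $F$ in total. Each trajectory that escapes $F^*$ can add at most $H+1$ new states. Applying Markov's inequality to the binomial count of escaping trajectories among the $m$ samples, with probability at least $0.995$ at most $O(\rho m)$ trajectories escape. Therefore $|F|\leq k^*+O(\rho m (H+1))=k^*+O\!\left(\rho H\ln k^*/\beta\right)$, matching the claimed bound. The specialization to $\beta=\rho/k^*$ is a direct substitution.

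The main technical obstacle is really just the careful double bookkeeping: one event (covering the heavy states) needs $m$ large enough in terms of $\ln k^*$ and $\beta$, while the other (controlling the new states brought in by ``bad'' trajectories) needs $m$ not too large and uses only a first-moment/Markov-type tail. Both constants can be absorbed into the $O(\cdot)$, but one must verify that the sample size $O(\ln k^*/\beta)$ is simultaneously enough for (i) under a union bound over $k^*$ states and small enough that the $\rho m$ escaped trajectories yield the stated $O(k^*+\rho H\ln k^*/\beta)$ size bound rather than the trivial $O(Hm)$ bound.
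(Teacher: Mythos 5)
Your proposal follows essentially the same route as the paper's proof: you split $F^*$ into heavy states ($p(s)\geq\beta$) and light states, show the heavy part lands in $F$ with high probability via the $(1-\beta)^m$ bound plus a union bound over $k^*$ states, then bound escape probability by $\rho+k^*\beta$ exactly as the paper does (the paper invokes its Lemma~\ref{lemma:Threshold}; you reproduce the same union-bound argument inline). For the size, the paper bounds $\E[|F|]\leq k^*+\rho m H$ and applies Markov directly to $|F|$, whereas you apply Markov to the number of escaping trajectories and then bound $|F|$ deterministically given that count — a cosmetic rearrangement of the same first-moment argument. Both are correct; no gaps.
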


While the algorithm achieves a low escape probability, only $2\rho$, in Lemma~\ref{lemma:SimulationLowerBound} in the appendix we prove that the size of $F$ is tight up to a constant, i.e., we show an MDP instance where 
$|F|=\Omega(k^*H\ln k^*$). 
The algorithms presented so far were approximately safe (i.e., low escape probability), 
but the returned set size was large. 
Without any further assumptions, the following algorithm provides a $(\rho H,H k^*)-$\newprobNoSpace, thus not improving the previous algorithms. However, when considering MDPs with a special structure it  provides an  optimal sized \newprob, at the price of large escape probability. 
\paragraph{Greedy at Each Step Algorithm.} 
For the analysis of the next algorithm, we assume the MDP is \emph{layered}, i.e., there are no states that appear in more than a single time step and denote $\cS=\bigcup_{i=1}^H\cS_i$. I.e.,  the  transitions $P(s,s')$ are nonzero only for $s'\in \cS_{i+1}$ and $s\in\cS_i$.
The \textsc{Greedy at Each Step Algorithm}, sometimes simply called greedy, takes at each time step $i$ the minimal number of states such that the sum of their probabilities is at least $1-\rho$. It is formalized in  Appendix~\ref{sup:start} as \cref{alg:greedy_alg}.

\begin{restatable}{lemma}{lemmaGreedy}\label{lemma:Greedy}
For any $\rho\in (0,1)$, if the MDP is layered, \textsc{Greedy at Each Step Algorithm} returns a set that is $(\rho H,k^*)-\newprob$. 
\end{restatable}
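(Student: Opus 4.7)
\bigskip

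\noindent\textbf{Proof proposal for Lemma \ref{lemma:Greedy}.}

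The plan is to decompose both the escape probability and the size of the returned set $F$ layer by layer, using the fact that in a layered MDP, the state spaces $\cS_1,\ldots,\cS_H$ are disjoint. Write $F = \bigcup_{i=1}^H F_i$ where $F_i \subseteq \cS_i$ is the set the algorithm picks at step $i$; by construction, $F_i$ is a minimum-cardinality subset of $\cS_i$ with $\sum_{s\in F_i} p(s) \geq 1-\rho$. Since the layers partition the trajectory's state slots, analyzing each layer separately will be clean.

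For the \emph{safety} claim, I would bound the escape event by a union bound across layers. A random trajectory escapes $F$ iff it visits some state outside $F$ at some time step, i.e., iff the event $E_i := \{s_i \notin F_i\}$ occurs for some $i \in [H]$. Because $\{p(s)\}_{s\in\cS_i}$ is the marginal distribution of $s_i$ (recall layeredness makes each $\cS_i$ the support of this marginal), we get $\Pr[E_i] = 1 - \sum_{s\in F_i} p(s) \leq \rho$ by the greedy selection rule. A union bound then yields $\Delta(F) \leq \sum_{i=1}^H \Pr[E_i] \leq \rho H$, giving $\rho H$-safety.

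For the \emph{size} claim, the key step is to compare $F_i$ to the layer-$i$ slice of an optimal solution. Let $F^*$ be any $\rho$-safe set of size $k^*$ and set $F^*_i := F^* \cap \cS_i$. The key observation is that $\Pr[s_i \in F^*_i] \geq \Pr[\tau \subseteq F^*] \geq 1-\rho$, since $\tau \subseteq F^*$ implies $s_i \in F^*_i$. Thus $\sum_{s \in F^*_i} p(s) \geq 1-\rho$, which means $F^*_i$ is itself a feasible candidate for the greedy choice at layer $i$. By the minimality of $F_i$, we have $|F_i| \leq |F^*_i|$. Summing over layers and using the disjointness of the $\cS_i$'s gives
\[
|F| = \sum_{i=1}^H |F_i| \leq \sum_{i=1}^H |F^*_i| = |F^*| = k^*,
\]
completing the proof.

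The argument is essentially a two-line union bound plus a layer-wise optimality comparison, so I do not anticipate a serious technical obstacle. The only subtle point worth flagging is the implicit use of layeredness in two places: (i) it ensures the marginal of $s_i$ is supported on $\cS_i$ so that $\sum_{s\in \cS_i} p(s) = 1$, which is what makes the greedy threshold of $1-\rho$ meaningful at each layer, and (ii) it guarantees $|F^*| = \sum_i |F^*_i|$, without which the size bound would need to account for states appearing in multiple layers.
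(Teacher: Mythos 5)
Your proof is correct and takes essentially the same route as the paper's: safety via a per-layer union bound, and the size bound via comparing $F_i$ to $F^*\cap\cS_i$ layer by layer using the minimality of the greedy choice. The only cosmetic difference is that the paper phrases the size argument as a contradiction ("if $|F|>k^*$ then some layer $i$ has $\Pr[s_i\in F^*]<1-\rho$, contradicting $\rho$-safety of $F^*$"), whereas you state the contrapositive directly, which is arguably cleaner since it makes explicit the step that $F^*_i$ is a feasible candidate for the greedy at layer $i$.
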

In \cref{lemma:Greedy_each_step_lower_bound} in the appendix we provide a  lower bound on the escape probability, matching up to a constant.\\
\textbf{Weaknesses of the naive algorithms.} We showed algorithms that identify \newprob with escape probability much greater than $\rho$ or size much greater than $k^*$, and instances with tight lower bounds for each of them.  
This holds even when providing extra information about the model or the optimal size of the $\rho-$safe set, i.e., $k^*$.

\section{Algorithm for Detecting Safe Zones}\label{sec:algo}
In this section, we suggest a new algorithm that builds upon and improves the added trajectory selection of the \textsc{Simulation} Algorithm. 
One reason for why 
\textsc{Simulation} returns a large set is that it treats every sampled trajectory identically, regardless of how many states are being added, which could be as large as $H$. 
More precisely, fix any $(\rho,k^*)-$\newprob set, $F^*$, and consider a trajectory $\tau$ that escapes it,  i.e., $\tau\not\subseteq F^*$. If $\tau$ was sampled, its states are added to the constructed set $F$, which might increase the size of $F$ by up to $H$ states that are not in $F^*$, without significantly improving the safety. 
In contrast, when selecting which trajectory to add to $F$, we would consider the number of states it adds to the current set. For the sake of readability, we refer to any state which is not in the current set $F$ as \textit{new}, and denote by $new_F(\tau)$  the number of new states in $\tau$ w.r.t. $F$, i.e., 
\[
new_F(\tau):=|\tau \setminus F|.
\]
Note that for every $F\subseteq \cS$,
$\Pr_\tau[new_F(\tau)\ne 0]=\Delta(F)$. 

The new algorithm does not sample each trajectory uniformly at random, but samples from a new distribution, which will be denoted by $Q_F$.  

While favoring trajectories with higher probabilities, which we already get by the sampling process, another key idea would guide this new distribution: 
To prefer trajectories that \emph{gradually} increase the size of $F$. To implement this idea, we will ensure that  the probability of adding a trajectory $\tau$ to $F$ should be \textit{inversely proportional} to $new_F(\tau)$. 

Formally, the support of $Q_F$ is the trajectories with new states, i.e.,   $X=\{\tau|new_F(\tau)\neq 0\}$. 
For every $\tau\in X$ 
$$
Q_F(\tau) \propto \frac{\Pr[\tau]}{new_F(\tau)},
$$
where $\Pr[\tau]$ is the probability of trajectory $\tau$ under the Markov Chain with dynamics $P$.
Note that the new distribution depends on the current set $F$, and changes as we modify it. 
Intuitively, adding trajectories to $F$ according to $Q_F$ instead of adding trajectories sampled directly from the dynamics (as we do in \textsc{Simulation}) would increase the expected ratio between the added safety and the number of new states we add to $F$, thus improving the set size guarantee of the output set. 
We elaborate on 
this in \cref{sec:proofIdeas}.

 Our main algorithm is \textsc{Finding \newprobNoSpace},  \cref{alg:algorithm-label2}. The algorithm receives, in addition to the safety parameter $\rho$, parameters  $\epsilon,\lambda \in(0,1)$, and  maintains a set $F$ that is initiated to $\{s_0\}$. On a high level, to implement the idea of adding trajectories to $F$ according to $Q_F$, we use \textit{rejection sampling}. Namely, in each iteration of the while--loop we first sample a trajectory $\tau$, and 
if $new_F(\tau)\ne 0$, we \textit{accept } it with probability $1/new_F(\tau)$. If the trajectory is accepted, it is added to $F$. More precisely, if  $new_F(\tau)\ne 0$, we sample a Bernoulli random variable, $accept\sim Br(1/new_F(\tau))$. If $accept=1$, we add $\tau$ to $F$. This process of adding trajectories to $F$ generates the desired distribution, $Q_F$. 
Whenever a trajectory is added to $F$, we estimate the escape probability $\Delta(F)$ (w.r.t. the updated set, $F$).

The algorithm stops adding states to $F$  and returns it as output  when it becomes ``safe enough''. To be precise, let $\widehat{\Delta}(F)$ denote the result of the escape probability estimation (by sampling trajectories as suggested in  Proposition~\ref{prop:safeVerification}). If  $\widehat{\Delta}(F)\leq 2\rho+\epsilon$ , it means that  $F$ is $(2\rho+2\epsilon)-$safe with probability $\geq 1-\lambda_j>1-\lambda$, in which case the algorithm terminates and returns $F$ as output.

To implement the estimation $\widehat{\Delta}(F)$, the algorithm calls \textit{EstimateSafety} Subroutine. The subroutine samples $N_j=\Theta(\frac{1}{\epsilon^2}\ln\frac{2}{\lambda_j})$ trajectories, and returns the fraction of trajectories  that escaped $F$. 
For cases in which the transition function $P$ is known to the learner, we provide an alternative implementation for \textit{EstimateSafety} which computes the exact probability ${\Delta}(F)$ (see \cref{lemma:EstimateSafetyExact} in \cref{sup:exact}).

\begin{minipage}[t]{0.46\textwidth}
\begin{algorithm}[H]
%\begin{algorithm}[ht]
    \caption{\textsc{Finding \newprob}}
    \label{alg:algorithm-label2}
    \begin{algorithmic}%[1]
        \STATE Input: $\rho\in(0,1)$ \label{algLine:input}
        \STATE Parameters: $\epsilon,\lambda\in(0,1)$
        \STATE $F\leftarrow \{s_0\}, j\leftarrow 1,  \widehat{\Delta}(F)\leftarrow 1$    \label{algLine:init}    \WHILE{$\widehat{\Delta}(F)>  2\rho+\epsilon$ }
        \STATE $\tau\leftarrow $  sample a random trajectory
        \STATE Compute $new_F(\tau)$
        \IF{$new_F(\tau)\ne 0$}
            \STATE sample $accept\sim Br(1/new_F(\tau))$
            \IF{$accept=1$}
                \STATE $F \leftarrow F\cup \tau$
                \STATE $\lambda_j\leftarrow \frac{3\lambda}{2(j\pi)^2}$, $j\leftarrow j+1$
                \STATE $\widehat{\Delta}(F) \leftarrow EstSafety(\epsilon,\lambda_j, F)$
            \ENDIF
        \ENDIF
        \ENDWHILE
        \STATE return $F$
    \end{algorithmic}
\end{algorithm}
\end{minipage}
\begin{minipage}[t]{0.46\textwidth}
\begin{algorithm}[H]
%\begin{algorithm}
    \caption{\textit{EstSafety} Subroutine}
    \label{alg:estimate}
    \begin{algorithmic}
        \STATE Input: subset $F$
        \STATE Parameters: $\epsilon,\lambda_j\in(0,1)$ 
        % \STATE $S,\hat{w}\leftarrow$ empty arrays of size $H+1$ 
        \STATE $\widehat{\Delta}(F)\leftarrow 0$
        \STATE $\mathcal T \leftarrow $ sample 
        $N_j= \frac{1}{2\epsilon^2}\ln\frac{2}{\lambda_j}$
        trajectories
        \FOR {$\tau\in \mathcal T$}
            \IF{$\tau\not\subseteq F$}
                \STATE $\widehat{\Delta}(F)\leftarrow  \widehat{\Delta}(F)+\frac{1}{N_j}$
            \ENDIF
        \ENDFOR
        \STATE return $\widehat{\Delta}(F)$
    \end{algorithmic}
\end{algorithm}
\end{minipage} 

\subsection{Algorithm Analysis}\label{subsec:analysisUnknown}
We define the event 
$\mathcal{E}=\{\forall i \; |  \widehat{\Delta}(F_{i-1})-\Delta(F_{i-1})|\leq
\epsilon\},$
which states that all our  \textit{EstimateSafety} Subroutine estimations are accurate. We show that $\mathcal{E}$ holds with high probability using Hoeffding's inequality. In most of the analysis, we condition on $\mathcal{E}$ to hold.

The following theorem is the central component in the proof of the main theorem that follows it.
\begin{restatable}{theorem}{lemmaMainlemma}\label{lemma:main_lemma}
Given $\rho,\epsilon,\lambda\in(0,1)$,  \textsc{Finding \newprob} Algorithm returns a subset $F\subseteq \cS$ such that: 
\begin{enumerate}
    \item The escape probability is bounded from above by $\Delta(F)\leq 2\rho+2\epsilon$, with probability $1-\lambda$.
    \item   
    The expected size of $F$ given $\mathcal{E}$ is  bounded by $\E[|F| \;|\;\mathcal{E}]\leq 2k^*$.
    \item  
    The sample complexity of the algorithm is bounded by $O\left(\frac{k^*}{\lambda \epsilon^2}\ln\frac{k^*}{\lambda}+\frac{Hk^*}{\rho\lambda}\right)$,  and the running time is bounded by  $O\left(\frac{Hk^*}{\lambda \epsilon^2}\ln\frac{k^*}{\lambda}+\frac{H^2k^*}{\rho\lambda}\right)$, with probability $1-\lambda$.
    \end{enumerate}
\end{restatable}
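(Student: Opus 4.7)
I would handle the three claims in sequence, all through a single good event $\mathcal{E}=\{\forall i:|\widehat\Delta(F_{i-1})-\Delta(F_{i-1})|\leq\epsilon\}$ as defined in the paper. For Part 1, each invocation of EstSafety uses $N_j=\Theta(\epsilon^{-2}\ln(2/\lambda_j))$ i.i.d.\ trajectories, so Hoeffding's inequality gives that the $j$th estimate is off by more than $\epsilon$ with probability at most $\lambda_j$. Since $\lambda_j=3\lambda/(2(j\pi)^2)$ and $\sum_{j\geq 1}j^{-2}=\pi^2/6$, a union bound yields $\Pr[\mathcal{E}^c]\leq\lambda/4$. Conditioned on $\mathcal{E}$, the termination test $\widehat\Delta(F)\leq 2\rho+\epsilon$ forces $\Delta(F)\leq 2\rho+2\epsilon$, giving Part 1.

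\textbf{Size (Part 2).} Fix an optimal $F^*$ with $|F^*|=k^*$ and $\Delta(F^*)\leq \rho$, and split $|F_T|\leq k^*+|F_T\setminus F^*|$. For each main-loop iteration $i$, let $G_i=|(\tau_i\cap F^*)\setminus F_{i-1}|\cdot \mathbf{1}[\text{accept}]$ and $B_i=|\tau_i\setminus (F_{i-1}\cup F^*)|\cdot \mathbf{1}[\text{accept}]$ denote the numbers of new states added that are inside and outside $F^*$, respectively. Using the acceptance rule $\Pr[\text{accept}\mid\tau]=\mathbf{1}[new_{F_{i-1}}(\tau)>0]/new_{F_{i-1}}(\tau)$, direct computation yields $\E[B_i\mid F_{i-1}]\leq\Pr_\tau[\tau\not\subseteq F_{i-1}\cup F^*]\leq\Delta(F^*)\leq\rho$ and, because every $\tau\subseteq F^*$ satisfies $|(\tau\cap F^*)\setminus F_{i-1}|=new_{F_{i-1}}(\tau)$, also $\E[G_i\mid F_{i-1}]\geq \Pr_\tau[\tau\subseteq F^*,\tau\not\subseteq F_{i-1}]\geq\Delta(F_{i-1})-\rho$. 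On $\mathcal{E}$ the loop continues only while $\Delta(F_{i-1})>2\rho$, so $\E[G_i-B_i\mid F_{i-1}]\geq 0$ in every active iteration and both vanish afterwards; thus $\sum_{i\leq t}(B_i-G_i)$ is a supermartingale. Since $\sum_i G_i\leq k^*$ deterministically (the counted states are distinct elements of $F^*$), optional stopping gives $\E[\sum_i B_i\mid\mathcal{E}]\leq\E[\sum_i G_i\mid\mathcal{E}]\leq k^*$, so $\E[|F_T|\mid\mathcal{E}]\leq 2k^*$.

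\textbf{Complexity (Part 3).} Each main-loop iteration is accepted with probability $Z_F=\sum_{\tau:new_F(\tau)>0}\Pr[\tau]/new_F(\tau)\geq\Delta(F)/H$, which on $\mathcal{E}$ exceeds $2\rho/H$ while running, so in expectation at most $H/(2\rho)$ iterations precede each acceptance. Combined with at most $|F_T|$ acceptances and $\E[|F_T|]\leq 2k^*+O(\lambda)$ (Part 2 together with $\Pr[\mathcal{E}]\geq 1-\lambda/4$), this bounds the expected main-loop sample count by $O(Hk^*/\rho)$. The $j$th EstSafety call uses $O(\epsilon^{-2}\ln(j/\lambda))$ trajectories; summing over at most $|F_T|$ calls gives expected total $O((k^*/\epsilon^2)\ln(k^*/\lambda))$ estimation samples. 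Splitting the remaining failure budget $3\lambda/4$ across two Markov inequalities (one per quantity) converts the expectations into the claimed high-probability bounds with the $1/\lambda$ dependence, and multiplying by the trajectory length $H$ produces the running-time expression.

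\textbf{Main obstacle.} The delicate step is Part 2: I must verify that the per-iteration inequalities for $G_i,B_i$ survive conditioning on $\mathcal{E}$ and that optional stopping applies to the random termination time $N$. Both hold because $\mathcal{E}$ is measurable with respect to the independent estimation trajectories---leaving the main-loop sampling distribution unchanged---and because Part 3 guarantees $N<\infty$ almost surely. A secondary bookkeeping task is to allocate the failure budget $\lambda$ across $\mathcal{E}^c$ and the two Markov steps so that Parts 1 and 3 hold simultaneously with probability $\geq 1-\lambda$; the remaining pieces are routine Hoeffding/Markov arguments.
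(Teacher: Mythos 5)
Your proposal is correct and essentially matches the paper's proof: the same event $\mathcal{E}$ with the same Hoeffding/union-bound argument for Part~1, the same good/bad decomposition with the per-iteration inequality $\E[B_i\mid F_{i-1}]\leq\rho\leq\Delta(F_{i-1})-\rho\leq\E[G_i\mid F_{i-1}]$ while the loop is active, and the same Wald/Markov arguments for the sample and time complexity. The only stylistic differences are that you make the optional-stopping step explicit and bound $\E[B_i],\E[G_i]$ directly against $\rho$, whereas the paper routes the identical calculation through an intermediate comparison inequality (its Lemma~\ref{lemma:xs}) and sums conditional expectations without naming the supermartingale.
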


To obtain the main theorem, we run  \textsc{Finding \newprobNoSpace} Algorithm several times and return the smallest output set, $F$, see the next section for more details.

\begin{restatable}{theorem}{thmMainthm}(main theorem)\label{thm:main_thm}
Given $\epsilon,\rho,\delta>0$, if we run \textsc{Finding \newprob}  for $\Theta(\frac{1}{\delta})$ times and return the smallest output set, $F\subseteq \cS$, then with probability $\geq 0.99$
\begin{enumerate}
    \item The escape probability is bounded by $\Delta(F)\leq 2\rho+2\epsilon.$
    \item  The size of $F$ is bounded from above by $|F|\leq (2+\delta)k^*.$
    \item The total sample complexity and running time are bounded by $ O(\frac{k^*}{\delta^2 \epsilon^2}\ln\frac{k^*}{\delta}+\frac{Hk^*}{\rho\delta^2})$,
    and $O(\frac{Hk^*}{\delta^2 \epsilon^2}\ln\frac{k^*}{\delta}+\frac{H^2k^*}{\rho\delta^2})$, 
    respectively.
\end{enumerate}
\end{restatable}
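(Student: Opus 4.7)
\textbf{Proof proposal for Theorem~\ref{thm:main_thm}.} The plan is a standard boosting argument: run \textsc{Finding \newprob} independently $T=\Theta(1/\delta)$ times with a carefully chosen per-run failure probability $\lambda$, and then return the smallest of the $T$ output sets. The per-run guarantees of Theorem~\ref{lemma:main_lemma} will be amplified by combining Markov's inequality (to turn the expected size bound into a high-probability bound) with a union bound (to preserve safety across all runs).

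First I would set the parameters. Let $T=\lceil c_1/\delta\rceil$ for a sufficiently large absolute constant $c_1$, and run the algorithm with per-run parameter $\lambda_0 := c_2\delta$ for a small enough constant $c_2>0$ (think $c_2=10^{-3}$). Let $F_1,\dots,F_T$ denote the outputs and $\mathcal{E}_1,\dots,\mathcal{E}_T$ the corresponding ``all estimates accurate'' events from Theorem~\ref{lemma:main_lemma}. Let $F$ be the smallest $F_i$.

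Next I would establish the safety guarantee by a union bound. By item~1 of Theorem~\ref{lemma:main_lemma}, for each $i$ the event $B_i=\{\Delta(F_i)\le 2\rho+2\epsilon\}$ holds with probability at least $1-\lambda_0$. Hence $\Pr[\bigcap_i B_i]\ge 1-T\lambda_0 \ge 1-c_1 c_2$, which can be made smaller than $0.005$ by choice of $c_2$. On this event the smallest $F_i$ is in particular safe, giving item~1 of the theorem.

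For the size guarantee, I would apply Markov's inequality inside each run. From $\E[|F_i|\mid\mathcal{E}_i]\le 2k^*$, Markov gives
\[
\Pr[\,|F_i|\le (2+\delta)k^*\mid \mathcal{E}_i\,]\ \ge\ 1-\frac{2}{2+\delta}\ =\ \frac{\delta}{2+\delta}.
\]
Since $\Pr[\mathcal{E}_i]\ge 1-\lambda_0$ (the same $\lambda_0$ controls the Hoeffding-based estimation event used in the proof of Theorem~\ref{lemma:main_lemma}), each individual run satisfies $S_i:=\{|F_i|\le(2+\delta)k^*\}$ with probability at least $p:=\frac{\delta}{2+\delta}(1-\lambda_0)=\Omega(\delta)$. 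By independence of the $T$ runs,
\[
\Pr\bigl[\exists\,i:\ S_i\bigr]\ \ge\ 1-(1-p)^T\ \ge\ 1-\exp(-pT),
\]
and $pT=\Omega(c_1)$, so choosing $c_1$ large enough makes this at least $1-0.005$. Whenever some $S_i$ occurs, the smallest output satisfies $|F|\le|F_i|\le(2+\delta)k^*$, yielding item~2. Combining both failure events via a final union bound gives overall success probability at least $0.99$.

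Finally, for item~3 I would sum the per-run resource bound from Theorem~\ref{lemma:main_lemma} across $T$ runs (again under a union bound over the $T$ events whose complement has total probability at most $T\lambda_0$, absorbed into the $0.99$ guarantee). Substituting $\lambda=\lambda_0=\Theta(\delta)$ and $T=\Theta(1/\delta)$ into
\(O\!\left(\tfrac{k^*}{\lambda\epsilon^2}\ln\tfrac{k^*}{\lambda}+\tfrac{Hk^*}{\rho\lambda}\right)\)
and multiplying by $T$ gives
\(O\!\left(\tfrac{k^*}{\delta^{2}\epsilon^{2}}\ln\tfrac{k^*}{\delta}+\tfrac{Hk^*}{\rho\delta^{2}}\right)\)
samples, and an extra factor of $H$ for the running time, matching the stated bounds. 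The main obstacle I anticipate is the careful bookkeeping of conditioning on $\mathcal{E}_i$ when turning the conditional expectation into an unconditional high-probability statement; the clean way to handle this is to use the bound $\Pr[|F_i|>(2+\delta)k^*\text{ and }\mathcal{E}_i]\le\tfrac{2}{2+\delta}$ from Markov applied to $|F_i|\mathbbm{1}[\mathcal{E}_i]$, and then absorb the small additive $\lambda_0$ contribution from $\Pr[\bar{\mathcal{E}}_i]$ into the overall failure budget.
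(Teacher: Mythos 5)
Your proposal is correct and follows essentially the same route as the paper's proof: run the algorithm $\Theta(1/\delta)$ times with per-run failure parameter $\lambda=\Theta(\delta)$, take the smallest output, use a union bound over all runs for safety and sample complexity, and combine Markov's inequality (conditioned on $\mathcal{E}_i$) with independence across runs to turn the expected-size bound into the high-probability bound $|F|\le(2+\delta)k^*$. The only thing I would tighten is the constant bookkeeping at the end: you allocate $0.005+0.005$ to the safety and size failure events but must also reserve a positive share of the $0.01$ failure budget for the sample-complexity bound (the paper splits the budget $\tfrac{0.01}{3}+\tfrac{0.01}{3}+\tfrac{0.01}{3}$), and the phrase ``which can be made smaller than $0.005$'' should refer to $c_1c_2$ rather than $1-c_1c_2$.
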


Finding an almost $2\rho$-safe \newprob, nearly $2k^*$ in size can be valuable. For example, it enhances the safety of popular commuting routes, promotes trust in autonomous vehicles, and aligns with potential regulatory restrictions, making most commutes driverless within this secure zone. 
Regarding sample complexity, we expect the dependency on $k^*$ to be nearly optimal. The reason why is the following. Consider an induced MC with  $k^*-1$  trajectories of size $2$, each starting from an initial state $s^0$, ends with a unique corresponding state $1,\ldots,k^*-1$, and has a probability of $\frac{1-\rho}{k^*-1}$. If the MC has significantly more than $k^*$ states with non-zero probability, it would take at least $K^*-1$ samples to find a $(\rho,k^*)$-\newprobNoSpace. As for the parameter 
$\delta$, this could be treated as a constant. For example, selecting $\delta=1/3$ yields that we need to run the algorithm $6\cdot \ln 300$  times and a solution of size $7/3k^*$ w.h.p.
\subsection{Proof Technique}\label{sec:proofIdeas}
\paragraph{Escape Probability Set Size Bounds.}
To ease the presentation of the proof, we assume that $\widehat{\Delta}(F)=\Delta(F)$. For full proofs, we refer to \cref{sup:algo}.
This case is interesting on its own since if the policy and transition function are known,  we can compute $\Delta(F)$  efficiently using dynamic programming (see \cref{sup:exact}). {As a result, event $\mathcal{E}$ always holds.} 
In addition, it is clear that 
the termination of the algorithm implies that 
$\widehat{\Delta}(F)=\Delta(F)\leq 2\rho$, thus $F$ is $(2\rho+2\epsilon)-$safe. The main challenge is bounding $
|F|$. 

A few notations before we start. 
Let $F^*$ denote a minimal $\rho-$safe set (of size $k^*$). 
Consider iteration $i$ inside the while--loop. The random variable $G_i(F)$ is the number of states in $F^*$ that are added to $F$ in iteration $i$ and $B_i(F)$ is the number of states added to $F$ in iteration $i$ that are not in $F^*$ ($G$ stands for \textit{good} and $B$ for \textit{bad}). 
For ease of presentation, from here on we write $G_i$ and $B_i$ instead of $G_i(F)$ and $B_i(F)$, respectively. Notice that the size of the output set is exactly $\sum_i B_i+G_i$ and that $\sum_i G_i\leq k^*$.

The main idea of the proof technique is to show that by adding  trajectories according to the new distribution $Q_F$, we ensure that, in expectation, there are at least as many good  states that are added to $F$ as bad states. 
Suppose the trajectory $\tau$ was chosen to be added to $F$ by the algorithm. If $\tau\subseteq F^*$, then $G_i$ is equal to $new_F(\tau)$ and $B_i=0$. If $\tau\not\subseteq F^*$, then $B_i\leq new_F(\tau)$. 
Summarizing these observations, we have the following bounds
$$G_i\geq new_F(\tau) \cdot \mathbb{I}[\tau \subseteq F^*] \text{ and } B_i \leq new_F(\tau) \cdot \mathbb{I}[\tau \not\subseteq F^*],$$
where $\mathbb{I}[\cdot]$ is the indicator function.

Moreover, a direct consequence of the probability in which $\tau$ is added to $F$ is that for any set of trajectories $T$, 
\begin{align}\label{eq:proof_new_states_in_trajectory}
\begin{split}
&\E_{\tau\sim Q_F}[new_F(\tau) \cdot \mathbb{I}[\tau \in T]]=\sum_{\tau\in T} Q_F(\tau) new_F(\tau)\\
&=\frac{1}{Z} \sum_{\tau\in T, new_F(\tau)\neq 0}  \left(\frac{\Pr[\tau]}{ new_F(\tau)}\right) new_F(\tau)\\
&= \frac{1}{Z} \Pr_\tau[\tau \in T\wedge new_F(\tau)\ne 0],
\end{split}
\end{align}
where $Z$ is the normalization factor of $Q_F$. 

To bound the size of $F$, we want to show that the algorithm does not add too many states outside of $F^*$.  
We therefore bound $\E[B_i]/\E[G_i]$, where the expectations are over the trajectory $\tau$ that is added to $F$ according to $Q_F$. 
Applying Equation~(\ref{eq:proof_new_states_in_trajectory}) twice, once with $T=\{\tau\;|\;\tau\subseteq F^*\}$ and once with $T=\{\tau\;|\;\tau\not\subseteq F^*\}$, we bound the ratio between $B_i$ and $G_i$ by 
\begin{equation}\label{eq:boundRatio}
\frac{\E[B_i]}{\E[G_i]}\leq \frac{\Pr_\tau[\tau \not\subseteq F^* \wedge new_F(\tau)\ne 0]}{\Pr_\tau[\tau \subseteq F^* \wedge new_F(\tau)\ne 0]}.
\end{equation}
We know that $\Pr_{\tau}[\tau \not\subseteq F^*]$ is always smaller than $\rho$, so the numerator is  $\leq\rho$. A lower bound for the denominator
 is $\Pr_{\tau}[new_F(\tau)\ne 0] - \Pr_{\tau}[\tau\not\subseteq F^*].$
In addition, whenever the algorithm is inside the main loop, the safety is at least $\Pr_{\tau}[new_F(\tau)\neq 0]=\Delta(F)>2\rho$. Thus, the denominator is at least $\rho.$ 
Hence, the RHS of (\ref{eq:boundRatio}) is less or equal to $1$, thus
\begin{align}\label{eq:bigiTech}
\E[B_i] \leq \E[G_i].
\end{align}
This completes the proof because we know that the algorithm does not add too many states outside of $F^*$. More precisely,
$$\E[|F|]=\E\left[\sum_i B_i+G_i\right]\leq \E\left[2\sum_i G_i\right]\leq 2 k^*.$$

\textbf{Sample Complexity.} To discuss the sample complexity, we drop the assumption that the MC is known to the learner and use \textit{EstimateSafety} Subroutine to approximate $\Delta(F)$. The number of calls to \textit{EstimateSafety} is bounded by the size of the output set, $F$. Hence, this part of the sample complexity is bounded by $|F|\cdot N_{|F|}$ and we show that is $O(\frac{k^*}{\epsilon^2}\log k^*)$. Another source of sampling is trajectories sampled for purposes of potentially adding them to $F$. Observe that  at any iteration the set $F$ has an escape probability of at least $2\rho$, and each trajectory that escapes $F$ is  accepted with a probability of at least $1/H$. This implies a lower bound for the probability that a random trajectory is accepted is  $2\rho/H$. This gives an upper bound of $\frac{2|F|\rho}{H}$ for the expected sample complexity.\\
 \textbf{Amplification.}\cref{lemma:main_lemma} shows that if $\mathcal{E}$ holds, then the set size, $|F|$, is bounded \emph{in expectation} by $2k^*$. As $\Pr[\mathcal{E}]\geq 1-\lambda$ implies, from Markov's inequality, that the size  $(2+\delta)k^*$ with small probability of about $\delta+\lambda= O(\delta)$. If we want to make sure that the actual size is at most $(2+\delta)k^*$ with high probability, we can repeat the process about $\Theta\left(\frac1\delta\right)$ times and take the smallest size set.
%

% \newpage
\section{Empirical Demonstration}\label{sec:empirical_demo}
\begin{figure}
% \begin{wrapfigure}{t!}{0.6\textwidth}
  \begin{center}
    \includegraphics[width=80mm]{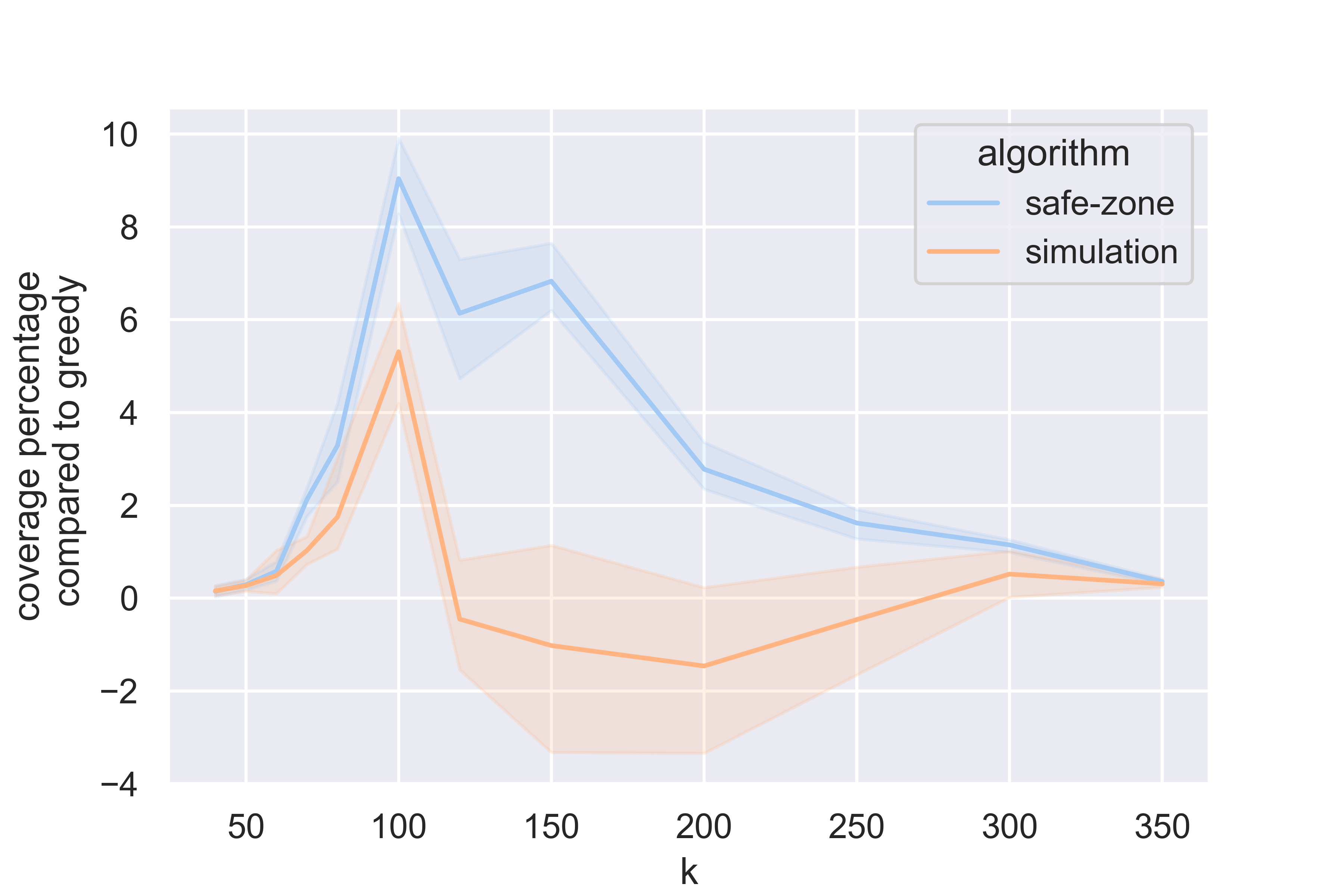}
  \end{center}
  \caption{Coverage percentage: %percentage:
difference from \textsc{Greedy} Algorithm.\label{fig:emp_results_comapred_greedy}}
% \end{wrapfigure}
\end{figure}

This section demonstrates the qualitative and quantitative 
performance of the described algorithms  in the paper.
For additional figures, we refer the reader to Appendix~\ref{sup:emp}.
\paragraph{The MDP.} 
We focus on a grid of size $N\times N$, for some parameter $N$.
The agent starts off at mid-left state, $(0,\lfloor\frac{N}{2} \rfloor)$ and wishes to reach the (absorbing) goal state at $(N-1,\lfloor\frac{N}{2} \rfloor)$ with a minimal number of steps. At each step, it can take one of four actions: up, down, right, and left by $1$ grid square. With probability $0.9$, the intended action is performed and with probability $0.1$ there is a drift down. The agent stops either way after $H=300$ steps. 

\subsection{\textsc{Finding \newprob} vs. naive approaches}\label{subsec:empirical_compare_new_alg}
To compare the \textsc{Finding \newprob} Algorithm to the naive approaches presented in Section~\ref{sec:start} we focus on the  policy that first goes to the right and when it reaches the 
rightmost column, it goes up. The policy is described in the appendix, Figure~\ref{fig:policy_1}. We take $N=30$ and $2000$ episodes.

We run the \textsc{Finding \newprob}, \textsc{Greedy}, and \textsc{Simulation} algorithms, and estimate their coverage based on a test set containing $2000$ random trajectories. Figure \ref{fig:emp_results_comapred_greedy} depicts the trajectories coverage of each algorithm minus the coverage of the \textsc{Greedy} algorithm. For a figure with absolute values, we refer the reader to \cref{fig:emp_results_abs_app} in the appendix. We see that the new algorithm exhibits better performance compared to its competitors. We also see  that taking less than $30\%$ of the states ($k=250$ out of $900$ states) is enough to get coverage of more than $80\%$ of the trajectories.

Figures~\ref{fig:emp_greedy_example},\ref{fig:emp_safe_zone_example} show the sets found for $k=60$ both by the \textit{Finding \newprob} 
 Algorithm and \textsc{Greedy}. We see that \textsc{Greedy} chooses an unconnected set for this small $k$, leading to a coverage of $0$. While the new algorithm chooses a few states which consist of several trajectories, thus leading to a coverage larger than $0$.

\section{Discussion and Open Problems}
In this paper, we have introduced the \newprob problem. We have shown that it is NP-hard, even when the model is known, and designed a nearly $(2\rho,2k^*)$ approximation algorithm for the case where the model and policy are unknown to the algorithm. Beyond improving the approximation factors (or showing that it cannot be done unless $P=NP$), a natural direction for future work is the following. Given a small $\rho>0$ and a (known or unknown to the learner) MDP, find a policy with a small $\rho-$safe subset. 
If the value of the policy, when restricted to the \newprob states, is close to the optimal value of the original MDP, restricting the policy to the \newprob states generates a compact policy representation with a value close to optimal, and most trajectories are completed in the \newprobNoSpace.

\section*{Acknowledgement}
This project has received funding from the European Research Council (ERC) under the European Union’s Horizon 2020 research and innovation program (grant agreement No. 882396), by the Israel Science Foundation (grant number 993/17), Tel Aviv University Center for AI and Data Science (TAD), and the Yandex Initiative for Machine Learning at Tel Aviv University.

% \section*{References}
\bibliography{references}
\bibliographystyle{apa-good.bst}

%%%%%%%%%%%%%%%%%%%%%%%%%%%%%%%%%%%%%%%%%%%%%%%%%%%%%%%%%%%%
\newpage
\section*{Supplementary Material}

\appendix

\section{Hardness}\label{sec:hardness}
In this section, we show that \newprob is NP-hard to solve, and this is why approximation is necessary. Moreover, \newprob is hard even if the MC and the optimal $\rho-$safe size, $k^*$ are known. 
Our starting point is the NP-hardness of regular cliques.
The \textsc{RegularClique}($G,k_c$) problem gets as an input (i) a regular graph $G$ with $n$ nodes where each node has degree $d$,  and (ii) an integer $k_c$. It returns whether $G$ contains a clique of size $k_c$. Whenever $G$ and $k_c$ are clear from the context we simply write  \textsc{RegularClique}. 
The following fact follows, e.g., from  \cite{brandes2016cliques}.
\begin{fact}
\textsc{RegularClique} is NP-hard.
\end{fact}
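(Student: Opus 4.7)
The plan is to reduce from the classical \textsc{Clique} problem, which is NP-hard. Given a \textsc{Clique} instance $(G,k)$ with $G$ having $n$ vertices and maximum degree $\Delta$, I will construct in polynomial time a $\Delta$-regular graph $G'$ and a parameter $k'$ such that $G$ has a clique of size $k$ if and only if $G'$ has a clique of size $k'$. Without loss of generality assume $k \geq 3$ (the cases $k\le 2$ are trivially in P), and set $k'=k$.

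The construction is to attach a triangle-free "padding" gadget to $G$ so that every vertex of $G$ ends up with degree exactly $\Delta$, while the padding gadget itself contains no triangle (and hence no clique of size $\geq 3$). Concretely, first measure the total degree deficit $D=\sum_{v\in V(G)}(\Delta-\deg_G(v)) = n\Delta - 2|E(G)|$. I will build an auxiliary bipartite graph $H$ with one side $V(G)$ and the other side a fresh set of padding vertices $U$, such that (i) in $H$ each $v\in V(G)$ has degree exactly $\Delta-\deg_G(v)$, and (ii) the padding vertices of $U$ can be extended (among themselves, forming only bipartite substructure) so that, in $H\cup(\text{internal padding edges})$, every $u\in U$ also has degree exactly $\Delta$. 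Then $G':=G\cup H\cup (\text{internal padding edges})$ is $\Delta$-regular. The internal padding edges can be taken as a $\Delta$-regular bipartite graph on the $U$-side (split into two halves), which is well known to exist for any degree sequence obeying the obvious parity condition; if parity is an obstacle, doubling the construction (i.e., taking two disjoint copies and pairing them up) fixes it.

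The correctness argument is short: any clique $C$ of $G'$ of size $\geq 3$ must avoid padding vertices altogether, because the entire subgraph of $G'$ induced by $V(G')\setminus V(G)=U$ and the $G$--$U$ edges together form a bipartite graph (with bipartition $V(G)\cup U_1$ vs.\ $U_2$ for a suitable split of $U$), and bipartite graphs are triangle-free. Hence any $k$-clique in $G'$ (with $k\ge 3$) lies entirely in $G$, and conversely any $k$-clique of $G$ remains a clique in the supergraph $G'$. Together with the polynomial size of the construction, this yields the reduction.

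The main obstacle I expect is the combinatorial bookkeeping of the padding: I need $H\cup(\text{internal padding edges})$ to simultaneously realize a prescribed degree sequence on $V(G)$ (the deficits $\Delta-\deg_G(v)$) and be $\Delta$-regular on $U$, all while staying bipartite. This can be handled by an Erd\H{o}s--Gallai / Gale--Ryser style argument, or by an explicit construction (take $|U|$ large and multi-layered so that deficits can be spread uniformly). Everything else---the polynomial time bound of the construction and the biconditional on clique existence---is immediate from the bipartite structure of the padding.
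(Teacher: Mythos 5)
The paper does not prove this fact itself—it simply cites \citet{brandes2016cliques}—so there is no internal argument to compare against; I will assess your reduction on its own merits. The padding idea is a reasonable direction, but there is a genuine gap in the correctness argument. You claim that because the subgraph consisting of $U$, the internal padding edges, and the $G$--$U$ edges is bipartite (with parts $V(G)\cup U_1$ and $U_2$), any clique of size $\ge 3$ in $G'$ avoids padding vertices. This does not follow: a triangle in $G'$ can use one edge of $E(G)$ and two padding edges. Concretely, if $u\in U_2$ is adjacent to two $G$-vertices $a,b$ with $ab\in E(G)$, then $\{a,b,u\}$ is a triangle in $G'$ even though none of its edges violate your bipartition of the padding subgraph (the offending edge $ab$ is simply not part of that subgraph). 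Your construction as stated does nothing to forbid a padding vertex from having two $G$-neighbors that are adjacent in $G$, so the backward direction ($G'$ has a $k$-clique $\Rightarrow$ $G$ does) is not established. The fix is easy but must be stated: arrange the padding so that each $u\in U$ has at most one neighbor in $V(G)$ (e.g., give each $v$ its own private set of $\Delta-\deg_G(v)$ padding vertices), and keep padding vertices with the same $G$-neighbor on the same side of the internal bipartition; your case analysis then goes through.

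You may also want to note that a far more elementary reduction avoids the degree-sequence bookkeeping entirely: \textsc{Independent Set} is NP-hard on cubic graphs, and the complement of a cubic graph on $n$ vertices is $(n-4)$-regular, with independent sets of $G$ corresponding exactly to cliques of $\overline{G}$. Since \textsc{RegularClique} as defined in the paper places no restriction on the degree of regularity, this one-line complementation already yields NP-hardness, whereas your padding route requires realizing prescribed degree sequences with bipartite, conflict-free gadgets. Both are valid once the gap above is closed, but complementation is the cleaner and more standard route here.
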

\textbf{Markov chain (random walk).} 
Fix a graph $G=(V,E)$ and a starting vertex $v_0\in V$. The graph induces a Markov Chain (random walk) in the following way. The states of the process correspond to the vertices $V$ in the graph $G$. The transition  function is defined as $P(v|u) = \frac{1}{d}\cdot \mathds{1}_{(u,v)\in E},$ where $d$ is the degree of any node. The process starts from node $v_0$ and then proceeds according to the transition function $P$ for $H$ steps. \\
\textbf{Reduction.}  To prove the hardness of \newprob, we show how to solve \textsc{RegularClique} given a solver to \newprobNoSpace. 
For each vertex $v\in V$, run an algorithm for {\newprob} with  horizon $H=2$, $k=k_c$, and $\rho=1-\left(\frac{k_c-1}{d}\right)^2$, and $v$ as the starting state. If there is at least one run of the algorithm that returns YES, then the final answer is YES. Otherwise, the answer is NO. 
Note that this reduction is efficient.\\
\begin{restatable}{theorem}{theoremReductionCorrectness}\label{theorem:ReductionCorrectness}
For every graph $G=(V,E)$ and an integer $k_c$ there exists a clique of size $k_c$ in $G$ $\Longleftrightarrow$  there exists $v\in V$ such that $\newprob(V,v_0=v,P,k_c,\rho)$ returns YES.
\end{restatable}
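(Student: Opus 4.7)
The plan is to prove both implications by unfolding the definition of the random walk: a horizon-$2$ trajectory from starting state $v$ is $\tau=(v_0,v_1,v_2)$ with $v_0=v$, $v_1$ a uniformly random neighbor of $v_0$, and $v_2$ a uniformly random neighbor of $v_1$. For any $F\subseteq V$ containing $v_0=v$, the non-escape probability factors as
\begin{equation*}
\Pr[\tau\subseteq F]
\;=\;\frac{1}{d^2}\sum_{u\in N(v)\cap F}|N(u)\cap F|,
\end{equation*}
and by the choice of $\rho$, the set $F$ is $(\rho,k_c)$-\newprob exactly when this quantity is at least $\bigl((k_c-1)/d\bigr)^2$.

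\textbf{Forward direction ($\Rightarrow$).} Suppose $C$ is a clique of size $k_c$ in $G$. Pick any $v\in C$ as the starting vertex and set $F=C$. Since $C$ is a clique, $v$ has exactly $k_c-1$ neighbors in $C$, so $|N(v)\cap F|=k_c-1$, and for every $u\in N(v)\cap F\subseteq C$ we likewise have $|N(u)\cap F|=k_c-1$. The displayed identity then gives $\Pr[\tau\subseteq F]=(k_c-1)^2/d^2$, i.e., $\Delta(F)=\rho$, so \newprob with starting state $v$ returns YES.

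\textbf{Backward direction ($\Leftarrow$).} Suppose \newprob returns YES for some starting vertex $v$, certified by a set $F$ with $|F|\le k_c$ and $\Delta(F)\le\rho$. First, $v\in F$ (else $\Delta(F)=1>\rho$), and we may assume $|F|=k_c$ (padding with arbitrary vertices can only decrease the escape probability). Let $a=|N(v)\cap F|$; since $v\notin N(v)$ (simple graph, no self-loops), $a\le k_c-1$. Analogously, for each $u\in N(v)\cap F$, we have $|N(u)\cap F|\le k_c-1$. Hence
\begin{equation*}
\Pr[\tau\subseteq F]
=\frac{1}{d^2}\sum_{u\in N(v)\cap F}|N(u)\cap F|
\;\le\;\frac{a(k_c-1)}{d^2}\;\le\;\frac{(k_c-1)^2}{d^2}.
\end{equation*}
The hypothesis $\Delta(F)\le\rho$ is exactly $\Pr[\tau\subseteq F]\ge(k_c-1)^2/d^2$, so both inequalities above must hold with equality. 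Equality in the first forces $a=k_c-1$, so $F\setminus\{v\}\subseteq N(v)$, i.e., $v$ is adjacent to every other vertex of $F$. Equality in the second forces $|N(u)\cap F|=k_c-1$ for each $u\in F\setminus\{v\}$, i.e., every such $u$ is adjacent to all remaining vertices of $F$. Thus every pair of vertices in $F$ is adjacent, so $F$ is a $k_c$-clique.

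\textbf{Main obstacle.} The only substantive step is the backward direction: the threshold $\rho=1-((k_c-1)/d)^2$ is chosen so that the elementary pair of inequalities above is saturated iff $F$ is a clique. The argument is therefore tight and turns on recognizing that both upper bounds must be met simultaneously, which in turn pins down the full combinatorial structure of $F$.
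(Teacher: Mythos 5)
Your proof is correct, and it takes a genuinely cleaner route than the paper's. The paper argues by contradiction and case exhaustion: if no $k_c$-clique exists, $\hat F$ contains a non-edge $(s_a,s_b)$, and the escape probability is lower-bounded separately in the two cases $s_0 \neq s_a$ and $s_0 = s_a$ (with an implicit assumption in the first case about $s_a$ being reachable from $s_0$). You instead work directly with the exact identity $\Pr[\tau\subseteq F]=\frac{1}{d^2}\sum_{u\in N(v)\cap F}|N(u)\cap F|$, upper-bound each factor by $k_c-1$ using only $|F|\le k_c$ and simplicity of $G$, and observe that the safety threshold $\rho=1-\bigl((k_c-1)/d\bigr)^2$ is exactly calibrated so that both bounds must be saturated, which pins down the clique structure. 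This avoids the paper's case split entirely, makes the ``tight iff clique'' mechanism transparent, and is arguably more airtight (the paper's case analysis leaves implicit what happens when the non-edge is disjoint from $s_0$'s neighborhood). One cosmetic nit: after deriving the inequality chain, your sentence ``Equality in the first forces $a=k_c-1$'' refers to the first of the two \emph{bounds} you stated ($a\le k_c-1$) rather than the first of the two displayed inequalities (which forces $|N(u)\cap F|=k_c-1$ for each $u$); both equalities are indeed forced, but you might reword to avoid ambiguity about which conclusion each equality delivers.
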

\begin{proof}
$(\Longrightarrow)$ If there is a clique of size $k_c$, then we can take the corresponding $k$ states. The probability to remain in this subset is at least $\left(\frac{k-1}{d}\right)^2$ (remember that $H=2$). Thus, an exact solver for $\newprob$ must return YES.

$(\Longleftarrow)$ Suppose there is no clique of size $k$. Assume by contradiction that the reduction (algorithm) returns YES. Let $s_0$ be a vertex which was the starting state from the running instance which the YES came from and let $\hat{F}$ denote the output of \newprob. We will show that the probability to remain in any subset of size $k$ is smaller than $\left(\frac{k-1}{d}\right)^2$.

Since there is no clique of size $k$ in $G$, we know that $\hat{F}$ is not a clique. It therefore follows that there exists at least two vertexes, $s_a,s_b\in V$ such that $(s_a,s_b)\notin E$. 

We will now bound the probability of escape from state $s_0$ by exhaustion.
\begin{enumerate}
    \item If $s_0\ne s_a$, then 
    \[
    \Pr[escape\;from\; s_0]\geq \Pr[t=1:(s_0,s'),s'\notin \hat{F}]
    \]
    \[
    +\Pr[t=1:(s_0,s),s\ne s_a]\cdot \Pr[t=2:(s,s'),s'\notin \hat{F}|t=1:(s_0,s), s\ne s_a]
    \]
    \[
    +\Pr[t=1:(s_0,s_a)]\cdot \Pr[t=2:(s_a,s'),s'\notin \hat{F}|t=1:(s_0,s_a)]
    \]
    \[
    =\frac{d-(k-1)}{d}+\frac{k-2}{d}\cdot \frac{d-(k-1)}{d}
    +\frac{1}{d}\cdot \frac{d-(k-2)}{d}
    \]
    \[
    = 1-\frac{k-1}{d}+\frac{k-2}{d}-\frac{(k-2)(k-1)}{d^2}+\frac{1}{d}-\frac{k-2}{d^2}=
    \]
    \[
    1-\frac{k-2}{d^2}(k-1+1)=1-\frac{k(k-2)}{d^2}
    \]
    Hence
    \[
    \Pr[staying]\leq \frac{k(k-2)}{d^2} <\frac{(k-1)^2}{d^2}.
    \]
    
    \item If $s_0= s_a$, then 
    \[
    \Pr[escape\;from\; s_0]\geq \Pr[t=1:(s_0,s'),s'\notin \hat{F}]
    \]
    \[
    +\Pr[t=1:(s_0,s),s\in \hat{F}]\cdot \Pr[t=2:(s,s'),s'\notin \hat{F}|t=1:(s_0,s), s\in \hat{F}]
    \]
    \[
    =\frac{d-(k-2)}{d}+\frac{k-2}{d}\cdot \frac{d-(k-1)}{d}
    \]
    \[
    = 1-\frac{k-2}{d}+\frac{k-2}{d}-\frac{(k-2)(k-1)}{d^2}
    \]
    \[
    =1-\frac{(k-2)(k-1)}{d^2}
    \]
    Hence
    \[
    \Pr[staying]\leq \frac{(k-2)(k-1)}{d^2} <\frac{(k-1)^2}{d^2}.
    \]
\end{enumerate}
\end{proof}
Given an environment, a policy, and a \newprob, one could compute exactly how much safe it is (see \cref{sup:exact} for details), from which we deduce our following corollary. 
\begin{corollary}
\newprob is NP-complete.
\end{corollary}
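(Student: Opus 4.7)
I will prove the two directions separately. For the forward direction, given a clique $C$ of size $k_c$ in the regular graph $G$ (with common degree $d$), I will pick any $v \in C$ as the starting state and set $F = C$. Since $H = 2$, I need to bound $\Pr_\tau[\tau \subseteq F]$. Starting at $v$, the next state lies in $F$ iff the random neighbor chosen is one of the $k_c-1$ clique-mates of $v$, which happens with probability $(k_c-1)/d$; conditional on $s_1 \in F$, the same analysis applied to $s_1$ gives $\Pr[s_2 \in F \mid s_1] = (k_c-1)/d$. Multiplying yields $\Pr[\tau \subseteq F] = ((k_c-1)/d)^2$, so $\Delta(F) = 1 - ((k_c-1)/d)^2 = \rho$, meaning $F$ is $(\rho, k_c)$-safe and the call $\newprob(V,v,P,k_c,\rho)$ must return YES.

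For the reverse direction, I will argue the contrapositive: assuming $G$ has no clique of size $k_c$, I show that for every starting vertex $v_0 \in V$ and every candidate set $F \subseteq V$ with $|F| = k_c$ and $v_0 \in F$, we have $\Pr[\tau \subseteq F] < ((k_c-1)/d)^2$, so $\Delta(F) > \rho$ and the algorithm must return NO. The key identity is
\begin{equation*}
\Pr[\tau \subseteq F] \;=\; \frac{1}{d^{2}} \sum_{u \in F \cap N(v_0)} |F \cap N(u)|,
\end{equation*}
obtained by conditioning on $s_1$. Let $c(u) := |F \cap N(u)|$, so each term satisfies $c(u) \leq k_c - 1$ with equality only when $u$ is adjacent to every other vertex of $F$.

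By the non-clique assumption, $F$ contains some non-edge $(s_a, s_b)$. I will split into two cases. \textbf{Case A:} $v_0$ is an endpoint of some non-edge in $F$, say $(v_0, s_b) \notin E$; then $s_b \notin N(v_0)$, so $|F \cap N(v_0)| \leq k_c - 2$, and the sum is at most $(k_c-2)(k_c-1) < (k_c-1)^2$. \textbf{Case B:} $v_0$ is adjacent in $G$ to every other vertex of $F$; then the non-edge $(s_a, s_b)$ lies inside $F \setminus \{v_0\}$, and both $c(s_a), c(s_b) \leq k_c - 2$, while the other $k_c - 3$ summands are at most $k_c - 1$, yielding a total of at most $2(k_c-2) + (k_c-3)(k_c-1) = (k_c-1)^2 - 2 < (k_c-1)^2$. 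In either case the strict inequality $\Pr[\tau \subseteq F] < ((k_c-1)/d)^2$ holds, as desired.

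The main obstacle is getting the case analysis tight enough to recover a strict inequality in every scenario — in particular, Case B, where $v_0$ is fully connected to $F$ and the ``defect'' comes only from one missing edge among the other vertices, requires accounting for the loss at both endpoints of the non-edge to shave off just enough mass. The forward direction and the bookkeeping to convert the $(s_0 \in F)$ assumption into the decomposition above are routine. Combining the two directions with the efficiency of the reduction (running \newprob at most $|V|$ times on $H=2$ instances) completes the polynomial-time reduction from \textsc{RegularClique}.
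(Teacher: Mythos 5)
Your reduction analysis is clean and correct, and it is in fact tighter and more transparent than the paper's own argument: the conditioning identity
\[
\Pr[\tau \subseteq F] = \frac{1}{d^{2}} \sum_{u \in F \cap N(v_0)} |F \cap N(u)|
\]
handles all subcases uniformly, whereas the paper's bound in the "$s_0 \ne s_a$" case implicitly assumes $s_a \in N(s_0)$ and that $s_0$ is adjacent to all other members of $\hat F$, which is only justifiable as a best-case lower bound on escape. Your Case B also makes explicit why both endpoints of the missing edge contribute, giving the $-2$ slack that the paper glosses over. One small omission: you restrict attention to $F$ with $v_0 \in F$; you should note the trivial case $v_0 \notin F$, where $\Pr[\tau \subseteq F] = 0$, to make the quantification over all candidate sets complete.

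However, you have only proved what the paper states as Theorem~\ref{theorem:ReductionCorrectness} (correctness of the reduction, hence NP-\emph{hardness}), not the corollary you were asked to prove, which asserts NP-\emph{completeness}. NP-completeness additionally requires membership in NP: given a candidate set $F$, one must be able to verify in polynomial time that $F$ is $\rho$-safe and $|F| \le k$. The paper discharges this via Lemma~\ref{lemma:EstimateSafetyExact}, which computes $\Delta(F)$ exactly in time $O(|\cS|^2 H)$ by adding a sink state and running dynamic programming on the modified Markov chain; since $H$ is part of the input (in unary, or here fixed to $2$), this is polynomial. Your proposal never argues NP membership at all, so as written it establishes only NP-hardness and leaves a genuine gap in the claimed result. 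To complete the corollary, append the observation that the certificate $F$ can be checked in polynomial time by exact computation of the escape probability.
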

We note that for $H=1$, the \textsc{Greedy at Each Step} Algorithm is optimal.

\section{Proofs of Section \ref{sec:start}}
\label{sup:start}

\subsection{Greedy by Threshold Algorithm}

A naive approach to the \newprob problem is to return all  states $s\in \cS$ with probability $p(s)\geq \beta$, 
for some parameter $\beta>0$, see Algorithm~\ref{alg:high_probability_states}.
\begin{algorithm}
    \caption{Greedy by Threshold}
    \begin{algorithmic}
        \STATE Parameter: $\beta>0, \{p(s)\}_{s\in \cS}$ 
        \STATE return $\{s \in \cS : p(s)\geq \beta\}$
    \end{algorithmic}
    \label{alg:high_probability_states}
\end{algorithm}

\lemmaGreedyThreshold*
\begin{proof}
There are at most $\frac{H}{\beta}$ states with probability  $p(s)\geq\beta.$ Thus $|F|\leq \frac{H}{\beta}.$

Denote by $F^*$ an optimal $(\rho,k^*)-\newprob$ set.
By the law of total probability,
\[
\Pr_\tau[\tau \not\subseteq F] \leq \Pr_\tau[\tau \not\subseteq  F^*] + \Pr_\tau[\tau \subseteq  F^*\setminus F].
\]
Looking at the R.H.S of the inequality, the left term is smaller than $\rho$ by the definition of $\newprob$. The right term is equal to the probability of reaching a state in $F^*$ that its probability is smaller than $\beta$, i.e., a state in $F^*\setminus F.$

Using union bound, this can be bounded by $k^*\beta.$
\end{proof}

\begin{restatable}{lemma}{lemmaGreedyThresholdLowerBound}\label{lemma:ThresholdLowerBound}
% \begin{lemma}\label{lemma:ThresholdLowerBound}
For every $\rho\in(0,\nicefrac12), H\in \mathbb N$, there exists an MDP and a minimal integer $k$ such that the MDP 
has a $(\rho,k)-${\newprob}, but for $\beta=\rho/k$ \textsc{Greedy by Threshold} Algorithm returns $F$ with escape probability $\leq 2\rho$ and of size %$|F|\leq H/\beta$. 
$|F|=\Omega(H/\beta)$. 
\end{restatable}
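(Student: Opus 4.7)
The plan is to construct an explicit Markov chain in which the minimal $\rho$-safe set has size exactly $k$, but many ``bad'' states outside it have probability exactly $\beta=\rho/k$, forcing Greedy by Threshold to output a much larger set. Concretely, introduce an initial state $s_0$ that transitions with probability $1-\rho$ to the head $g_1$ of a deterministic ``good'' chain $g_1 \to g_2 \to \cdots \to g_{k-1}$ (with $g_{k-1}$ self-looping for the remaining time steps), and with probability $\rho/k$ to the head $b_{j,1}$ of each of $k$ disjoint deterministic ``bad'' chains $b_{j,1} \to b_{j,2} \to \cdots \to b_{j,H-1}$ (with every $b_{j,H-1}$ self-looping). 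All other transitions are deterministic along the chain they belong to.

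First I would compute $p(s)$ for each reachable state: $p(s_0)=1$, $p(g_i)=1-\rho$ for $i\in[k-1]$, and $p(b_{j,i})=\rho/k=\beta$ for $j\in[k]$, $i\in[H-1]$. Since every reachable state satisfies $p(s)\geq\beta$, Greedy by Threshold returns the entire reachable set, of size $1+(k-1)+k(H-1)=kH$, for which $\Delta(F)=0\leq 2\rho$ trivially. Next I would verify $k^*=k$. The candidate $F^*=\{s_0,g_1,\ldots,g_{k-1}\}$ is $\rho$-safe by direct computation, so $k^*\leq k$. For the matching lower bound, observe that the good trajectory alone has probability $1-\rho>\rho$ under $\rho<\nicefrac{1}{2}$, so any $\rho$-safe set must contain every state it visits, which is already $k$ states.

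Combining these facts, $|F|=kH$ while the algorithm's set-size upper bound from Lemma~\ref{lemma:Threshold} is $H/\beta=Hk/\rho$. Since $\rho$ is the fixed problem constant lying in $(0,\nicefrac{1}{2})$, the ratio $|F|/(H/\beta)=\rho$ is a positive constant and we conclude $|F|=\Omega(H/\beta)$, matching the upper bound up to constants.

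The main obstacle I anticipate is the lower bound $k^*\geq k$, since a priori a ``hybrid'' $\rho$-safe set could try to drop some $g_i$ in exchange for covering entire bad chains. The coverage argument handles this cleanly: omitting any $g_i$ leaves the mass-$(1-\rho)$ good trajectory uncovered, and the total bad mass $\rho<1-\rho$ cannot compensate no matter how it is allocated. Everything else reduces to direct probability calculations on disjoint deterministic chains.
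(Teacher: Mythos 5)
Your construction and analysis match the paper's proof in essence: both build an instance where every reachable state has $p(s)\geq\beta$, forcing Greedy by Threshold to output the entire state set of size $\Theta(kH)=\Theta(\rho H/\beta)$, while a size-$k$ set covering the mass-$(1-\rho)$ ``good'' trajectory is the optimal $\rho$-safe set (your coverage argument for $k^*\geq k$ is the same as the paper's implicit one). The only structural difference is cosmetic---you use a single deterministic good chain whereas the paper uses $k-1$ self-looping good states---and in fact your version has the advantage that the transition probabilities out of $s_0$ visibly sum to $1$, whereas the paper's stated parameters (with $\lvert A\rvert = \frac{1-\rho}{\beta}H$ and $B$-transitions of $\frac{1-\rho}{k-1}$ each) do not.
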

% \end{lemma}
\begin{proof}
Fix $\rho\in(0,1)$. 
For ease of the presentation, we will assume that $\frac{1-\rho}{\beta}$ is an  integer (if not, it should be rounded to the nearest integer). 
Define $A$ to contain $\frac{1-\rho}\beta\cdot H$ states, $B$ to contain $k-1$ states, and $\cS = \{s_0\}\cup A\cup B.$
Consider the following MDP with states $\cS$ and starting state $s_0$. 
The transition function is defined as follows: 

\begin{itemize}
    \item For every $i\in A$, $\Pr[s^A_{1,i}|s_0]=\beta$ and for every $j\in[H-1]$, $\Pr[s^A_{j+1,i}|s^A_{j,i}]=1$.
    \item For $s\in B$, $\Pr[s|s_0]=\frac{1-\rho}{k-1}$
    \item For $s\in B$, $\Pr[s|s] = 1$
\end{itemize}

The MDP is illustrated in  \cref{fig:greedybytresholdfails2}.
Clearly, $\{s_0\}\cup B$ is a $(\rho,k)-${\newprob}.  
In addition, \textsc{Greedy by Threshold Algorithm} returns the set of all states, as for every state $s\in A$ we have that $p(s)=\beta$, 
$p(s_0)=1>\rho\geq \beta$, and for every $s\in B$ we have that $p(s)=\frac{1-\rho}{k-1}>\frac{\rho}{k}=\beta$.
Thus the size of the returned set is $\cS$ is $\Omega(H/\beta)$, which completes the proof. 
\end{proof}
\begin{figure}[!ht]
\centering
\includegraphics[width=1\linewidth]{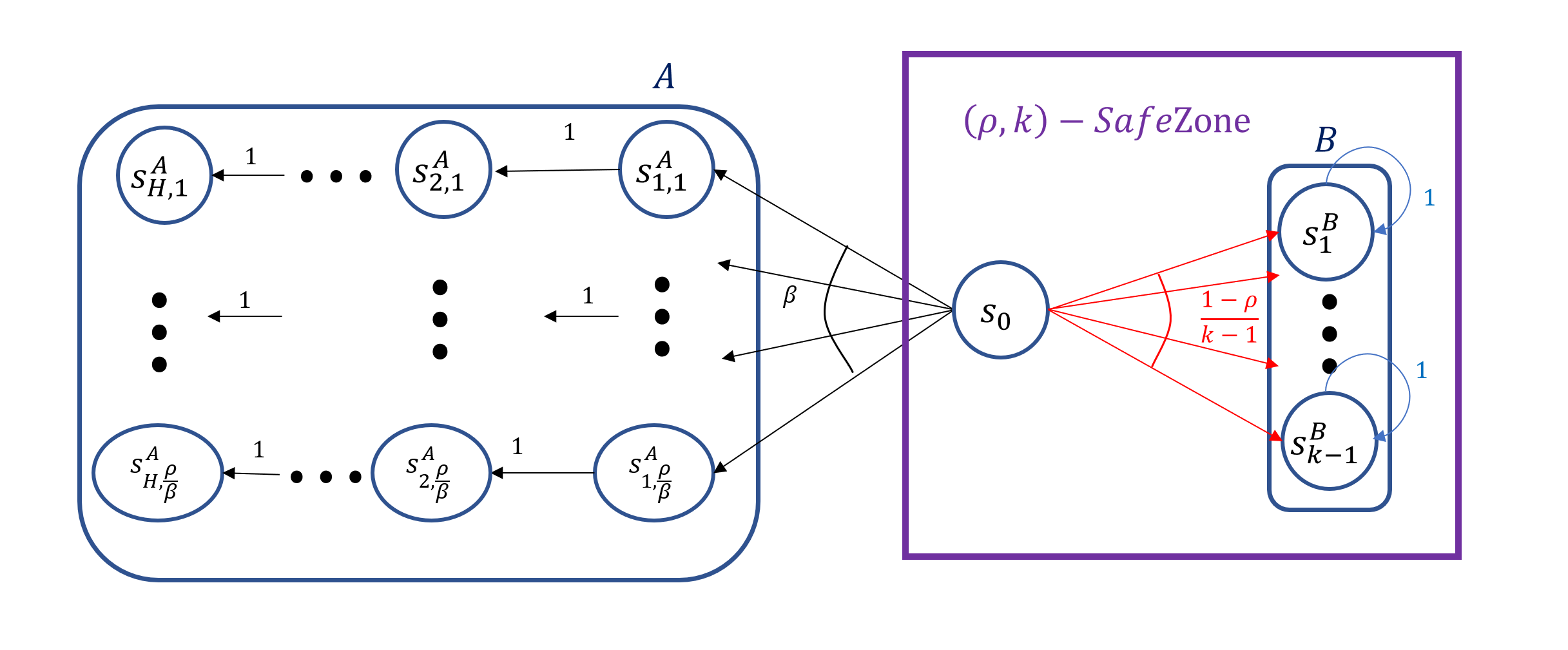}
\caption{Lower bound for \textsc{Greedy By Threshold} Algorithm.}
\label{fig:greedybytresholdfails2}
\end{figure}

\subsection{Simulation Algorithm}
\begin{algorithm}
    \caption{Simulation Algorithm} 
    \begin{algorithmic}
        \STATE Input: $m=\frac{1}{\beta} \ln \frac{k^*}{0.005}$
        \STATE $F\leftarrow \{s_0\}$ 
        \FOR{$i=1\ldots m$}
        \STATE $\tau\leftarrow $  choose a random trajectory
        \STATE $ F \leftarrow  F \cup \tau$
        \ENDFOR
        \STATE return $F$
    \end{algorithmic}
    \label{alg:Simulation}
\end{algorithm}

\lemmaSimulation*
\begin{proof}
Denote by $F^*$ the optimal $(\rho,k^*)-\newprob$ set. 
By the law of total  expectation, we can split 
$\E[|F|]$ into two parts, depending on whether trajectories are entirely in $F^*$ or not:

\begin{itemize}
    \item Trajectories that are entirely in $F^*$ contribute at most $k^*$ states to $F$. 
    \item A trajectory that is not contained in $F^*$ contributes at most $H$ states to $F$.
\end{itemize} 
Thus, 
$$\E[|F|]\leq k^* + \rho\cdot \left( \frac{1}{\beta} \ln \frac{k^*}{0.005}\right)\cdot H=O\left(k^*+\frac{\rho H\ln k^*}{\beta}\right).$$

We use Markov's inequality 
to get the desired bound on $|F|.$

For the safety, we first denote the set of all states in $F^*$ with probability at least $\beta$ as $\Gamma=\left\{s \in F^* \;|\; p(s)\geq \beta \right\}.$ We will show that with probability at least $0.9995$, it holds that $\Gamma\subseteq F,$ which will prove our claim, similarly to Lemma~\ref{lemma:Threshold}.

For a fixed state $s\in \Gamma$, the probability that $s\notin F$ is bounded by 
$(1-p(s))^{\frac1\beta \ln \frac{k^*}{0.005}}\leq e^{-\frac{\beta}{\beta}\cdot  \ln \frac{k^*}{0.005}}=\frac{0.005}{k^*}$.
Using union bound, the probability that there is a state $s\in\Gamma$ which is not in $F$ is bounded by $k^*\cdot\frac{0.005}{k^*} = 0.005.$ 

In other words, with probability at least $0.995$, $\Gamma\subseteq F$, thus implementing the greedy approach in  Algorithm~\ref{alg:high_probability_states} and proving that the probability that a random trajectory escapes $F$ is bounded by $\rho+k^*\beta.$
\end{proof}

\begin{restatable}{lemma}{lemmaSimulationLowerBound}\label{lemma:SimulationLowerBound}
For every $\rho,\gamma\in(0,1)$, $H,k\in \mathbb N$, and  $\beta=\frac{\rho}{k}$, there is an integer $r\in \mathbb N$  and MDP with  $(\rho,k)-$\newprobNoSpace, but with probability $\geq 1-\gamma$,  \textsc{Simulation} algorithm returns $F$ of size $\E[|F|]\geq kH\ln k$ with escape probability $\Delta(F)= O(\rho)$.
\end{restatable}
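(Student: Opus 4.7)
\textbf{Proof proposal for Lemma~\ref{lemma:SimulationLowerBound}.}

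My plan is to build an MDP in which almost every ``bad'' trajectory the \textsc{Simulation} algorithm draws lands on a fresh path whose entire length $H-1$ is added to $F$, so the $\Theta(k\ln k)$ bad draws inflate $|F|$ by a factor of $H$. Concretely, I would use a star-shaped instance: an initial state $s_0$; $k-1$ absorbing ``good'' states $g_1,\dots,g_{k-1}$, each reached from $s_0$ with probability $(1-\rho)/(k-1)$; and $r$ disjoint ``bad paths'' $B_1,\dots,B_r$, where each $B_j$ is a deterministic chain of $H-1$ fresh states, and $s_0$ transitions to the head of $B_j$ with probability $\rho/r$. The set $F^\star=\{s_0,g_1,\dots,g_{k-1}\}$ has size $k$ and escape probability exactly $\rho$, so the instance admits a $(\rho,k)$-\newprobNoSpace.

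Given $m=\Theta\bigl(\tfrac{k}{\rho}\ln k\bigr)$ samples, let $T$ be the number of bad-path draws; then $T\sim\mathrm{Bin}(m,\rho)$ with mean $m\rho=\Theta(k\ln k)$. A standard multiplicative Chernoff bound gives $T\geq m\rho/2$ with probability $\geq 1-\gamma/2$ (the edge case of very small $k$ is handled by a direct calculation, since the target bound $kH\ln k$ is then trivial or tiny). Next, picking $r=C(m\rho)^2/\gamma$ for a suitable constant $C$, the expected number of coincidences among the $T$ bad draws is at most $\binom{T}{2}/r\leq \gamma/(2C)$, so by Markov's inequality the number of collisions is at most $T/2$ with probability $\geq 1-\gamma/2$. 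Thus at least $T/2=\Omega(k\ln k)$ \emph{distinct} bad paths are absorbed into $F$, and since each contributes $H-1$ new states, a union bound yields $|F|\geq (T/2)(H-1)=\Omega(kH\ln k)$ with probability $\geq 1-\gamma$.

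For the safety guarantee, observe that whenever a bad path is sampled, all $H-1$ of its states are added to $F$ at once, so it is entirely contained in $F$. Therefore a random test trajectory escapes $F$ only when it enters one of the \emph{unsampled} bad paths, an event of probability at most $\rho$. Hence $\Delta(F)\leq \rho=O(\rho)$ deterministically, completing the bi-criteria lower bound.

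The main obstacle is the birthday-style collision step: one must pick $r$ large enough to push the collision probability under $\gamma/2$, yet the resulting MDP must still be a valid instance realizing a $(\rho,k)$-\newprobNoSpace. Choosing $r=\mathrm{poly}(k,\ln k,1/\gamma)$ accomplishes both simultaneously, since the state space is unconstrained in the lemma statement, and only the (relatively mild) Chernoff plus Markov combination is needed to turn the expected collisions bound into a high-probability distinctness guarantee.
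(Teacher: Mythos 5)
Your construction is essentially identical to the paper's: a star with $s_0$, $k-1$ absorbing good states, and $r$ disjoint bad paths of length $\Theta(H)$, with $r$ chosen large (poly in $m$ and $1/\gamma$) so that the sampled bad paths are all distinct by a birthday bound. The paper keeps the size claim as a conditional expectation ($\E[N]=\rho m$ distinct bad draws, each contributing $H$ fresh states), whereas you additionally apply Chernoff to $T\sim\mathrm{Bin}(m,\rho)$ and Markov on collisions to get a pointwise high-probability lower bound on $|F|$; that is a mild strengthening of the same argument.

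There is, however, a real gap in your safety step. You claim ``a random test trajectory escapes $F$ only when it enters one of the unsampled bad paths, an event of probability at most $\rho$. Hence $\Delta(F)\leq\rho$ deterministically.'' This overlooks the good absorbing states: a fresh trajectory escapes $F$ also when it lands on a good state $g_i$ that was never sampled. Nothing in the construction guarantees deterministically that every $g_i$ is hit within $m$ draws, and if even one is missed, it contributes an additional $(1-\rho)/(k-1)$ to $\Delta(F)$, which can dominate $\rho$. The correct statement is a high-probability one: with $m=\frac{1}{\beta}\ln\frac{k}{0.005}$ samples and $\beta=\rho/k$, each $g_i$ has sampling probability $(1-\rho)/(k-1)\geq\beta$ (for $\rho\leq 1/2$), so it is missed with probability at most $e^{-\beta m}=0.005/k$, and a union bound gives that all good states are collected with probability $\geq 0.995$. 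This is exactly what the paper gets by invoking Lemma~\ref{lemma:Simulation} for the safety half of the claim. You should replace ``deterministically'' by this high-probability argument (or simply cite Lemma~\ref{lemma:Simulation}) and fold the resulting $0.005$ failure probability into your $\gamma$ budget.
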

\begin{proof}
Fix $\rho,\gamma\in(0,1)$. 
Recall that $m=\frac{1}{\beta} \ln \frac{k^*}{0.005}$ and take $r=\lceil\frac{m^2}{\gamma}\rceil.$
Define $A$ to contain $rH$ states, $B$ to contain $k-1$ states, and $\cS = \{s_0\}\cup A\cup B.$

Consider the following MDP with states $\cS$ and starting state $s_0$. The transition function is defined as follows: 
\begin{itemize}
    \item For every $i\in A$, $\Pr[s^A_{1,i}|s_0]=\frac{\rho}{r}$ and for every $j\in[H-1]$, $\Pr[s^A_{j+1,i}|s^A_{j,i}]=1$.
    \item For $s\in B$, $\Pr[s|s_0]=\frac{1-\rho}{k-1}$
    \item For $s\in B$, $\Pr[s|s] = 1$
\end{itemize}
The MDP is illustrated in  Figure~\ref{fig:simoulationFails}.

\begin{figure}[!ht]
\centering
\includegraphics[width=1\linewidth]{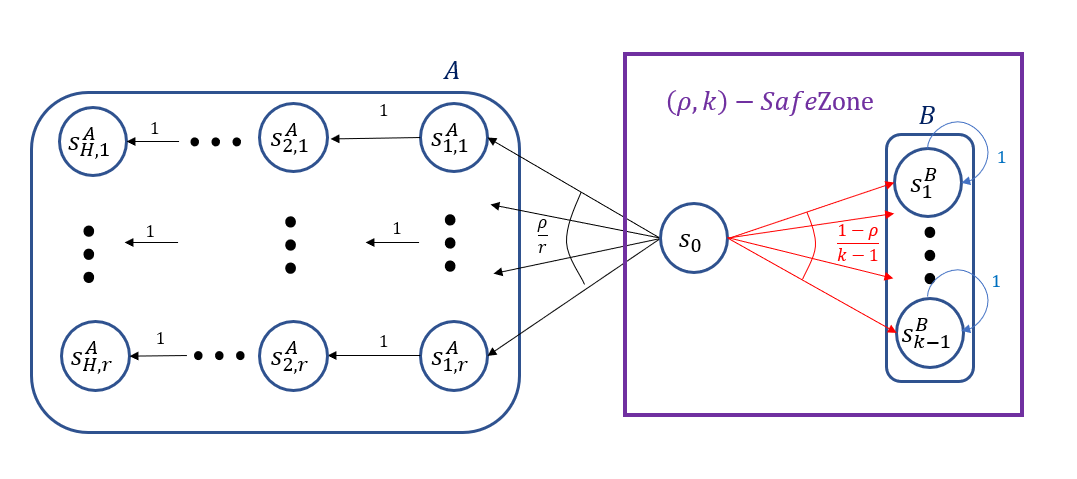}
\caption{Lower bound for \textsc{Simulation} Algorithm.}
\label{fig:simoulationFails}
\end{figure}

The set $B \cup \{s_0\}$ is $\rho-$safe 
with $k$ states. 

We will show that: 
\begin{itemize}
 \item  After adding $\geq \frac{1}{\beta}\ln k=\frac{k}{\rho}\ln k$ random trajectories, with probability $\geq 1-\gamma $ we have that $|F|\geq kH\ln k$.
\item After adding $m$ random trajectories, we have that with high probability $F^*\subseteq F$, thus $\Delta(F)\leq \Omega(\rho)$.
\end{itemize}
To prove the first property, we claim that with  probability$\geq 1-\gamma $, every time we add a trajectory $\tau$ such that $\tau\cap A\ne \emptyset$, we add $H$ new states. 

Notice that if we ignore $s_0$, trajectories in $A$ are entirely  unconnected, and each trajectory is chosen randomly with probability $\Pr[s^A_{1,i}|s_0]=\frac{\rho}{r}$. This yields that if $s^A_{1,i}\notin F$, then  $s^A_{j,i}\notin F$ for every $j\in [H]$. As a result, every time we add  a new $s^A_{1,i}$ to $F$, we add $H-1$ more states to $F$.
Let $N$ denote the number of trajectories sampled with states from $A$. The probability that their intersection contains only $s_0$ is 
\[
\frac{r\cdot (r-1)\cdot \ldots \cdot (r-N)}{r^N}\geq \left(\frac{r-N}{r}\right)^N=\left(1-\frac{N}{r}\right)^N\geq 1-\frac{N^2}{r}=1-\gamma.
\]
From the structure of the MDP, we have that $\E[N]=\rho m$. 
Therefore, with probability $\geq 1-\gamma$, 
\[
\E[|F|]\geq \E[N]\cdot H=\rho \cdot m \cdot H\geq \rho \cdot \frac{1}{\beta}\ln k\cdot H =kH\ln k.
\]

The second property follows from Lemma~\ref{lemma:Simulation}.
\end{proof}

\subsection{Greedy at Each Step}

\begin{algorithm}
    \caption{Greedy at Each Step}
    \begin{algorithmic}
        \STATE Input: $\rho>0, \{p(s)\}_{s\in\cS}$ 
         \STATE $F\leftarrow \{s_0\}$ 
        \FOR {$i=1 \ldots H$}
        \STATE Sort states in $\cS_i$,  $p(s^1_i)\geq\ldots \geq p(s^{|\cS_i|}_i) $
        
        \STATE $j^* \leftarrow \argmin_{j\in[|\cS_i|]} \sum_{r=1}^j p(s^r_i) \geq 1-\rho$
        \STATE $F\leftarrow  F \cup \left\{s^1_i,\ldots s^{j^*}_i\right\}$
        \ENDFOR
        \STATE return $F$
    \end{algorithmic}
    \label{alg:greedy_alg}
\end{algorithm}

\lemmaGreedy*
\begin{proof}
Take a random trajectory 
$\tau=(s_1,s_2,\dots)$. For every $s_i\in \tau$, the probability that $s_i\notin F$ 
is bounded by $\rho$, thus using union bound, the probability that $\tau$ has state $s_i$ such that $s_i\notin F$ is at most $\rho H$. 

The construction of $F$ guarantees that $F$ is the minimal subset of states such that for every $i$, the probability that $s_i$ is in the subset is at least  $1-\rho$. 
Assume by contradiction that $|F|>k^*$. Then there is a time step $i$ such that $\Pr[s_i \in F^*]<1-\rho$, which is a contradiction, since $\Pr[\tau \in F^*]\leq \min_i\Pr[s_i \in F^*]$. 

\end{proof}

\begin{lemma}\label{lemma:Greedy_each_step_lower_bound}
For any $\rho\in (0,1)$, there is an MDP and an integer $k$ such that there is a $(\rho,k)-${\newprobNoSpace}, but \textsc{Greedy At Each Step} Algorithm returns $F$ with escape probability $\Delta(F)\geq \Omega(H \rho).$
\end{lemma}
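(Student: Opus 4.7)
The proof exhibits a layered MDP in which Greedy leaves out a fresh $\rho$-mass at every one of the $H$ layers, and where those misses correspond to disjoint escape events so their probabilities add up instead of cancelling. Take layers $0,1,\ldots,H$ with $\cS_0=\{s_0\}$ and $\cS_i=\{m_i,e_i\}$ for each $i\ge 1$, and make the transition function identical at both states of every layer: from every $s\in\cS_{i-1}$, move to $m_i$ with probability $1-\rho$ and to $e_i$ with probability $\rho$. A one-line induction on $i$ then gives $p(m_i)=1-\rho$ and $p(e_i)=\rho$ for every $i\ge 1$, since the total mass at each layer equals $1$ and each unit of mass splits in the same ratio.

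\textbf{Greedy's output and $\Delta(F)$.} Because $p(m_i)=1-\rho\ge 1-\rho$, \textsc{Greedy at Each Step} selects exactly $\{m_i\}$ at layer $i$ and outputs $F=\{s_0,m_1,\ldots,m_H\}$. Let $Y_i$ be the indicator that the trajectory's layer-$i$ state is $e_i$. Since the transition into layer $i$ is distributed as $(1-\rho,\rho)$ regardless of the state at layer $i-1$, the variables $Y_1,\ldots,Y_H$ are i.i.d.\ Bernoulli$(\rho)$. Hence
\[
\Pr[\tau\subseteq F] \;=\; \Pr[Y_1=\cdots=Y_H=0] \;=\; (1-\rho)^H,
\]
so $\Delta(F)=1-(1-\rho)^H$. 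Using $(1-\rho)^H\le e^{-\rho H}$ together with the concavity of $x\mapsto 1-e^{-x}$ on $[0,1]$ gives $\Delta(F)\ge (1-e^{-1})\min(\rho H,1)=\Omega(\rho H)$, matching the upper bound in \cref{lemma:Greedy} up to a constant.

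\textbf{Existence of a $(\rho,k)$-\newprob\ and main obstacle.} For the existence clause, take $F'=\cS\setminus\{e_1\}$, of size $k=2H$; since $\tau\not\subseteq F'$ iff $\tau$ visits $e_1$, an event of probability exactly $\rho$, the set $F'$ is a $(\rho,2H)$-\newprob\ in this MDP. The one mildly subtle step is the independence of the $Y_i$'s: this is precisely what makes the per-layer $\rho$ misses accumulate to $\Omega(\rho H)$ rather than cancel out, and it rests crucially on the ``memoryless across states'' design of the transitions, since any coupling between the $m$- and $e$-rows would induce dependence between layers and destroy the clean product form $(1-\rho)^H$.
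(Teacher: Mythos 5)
Your proof is correct, and it takes a genuinely different route from the paper's. The paper constructs a layered MDP with $k+4 = 3H+5$ states per layer, carefully engineered so that at each layer there are three states of equal probability $\rho/2$ (two ``good'' and one ``bad''); it then relies on Greedy breaking the ties adversarially, choosing the bad state at every layer, and asserts (without a detailed verification) that the resulting set has escape probability $\geq \rho H/4$. Your construction is substantially leaner: two states per layer, a memoryless $(1-\rho,\rho)$ split regardless of the current state, and a deterministic Greedy output $\{s_0, m_1, \ldots, m_H\}$ whose escape probability is exactly $1-(1-\rho)^H$. The payoff is twofold: first, no tie-breaking is needed, so the conclusion does not depend on how the argmin resolves ambiguity; second, the per-layer miss indicators are genuinely i.i.d.\ Bernoulli$(\rho)$, which makes the lower bound $\Delta(F) \geq (1-e^{-1})\min(\rho H,1)$ a clean two-line calculation instead of an unproven ``observation.'' Both proofs (and the lemma statement itself, since $\Delta(F)\leq 1$) implicitly live in the regime $\rho H = O(1)$; your $\min(\rho H,1)$ makes that caveat explicit rather than hiding it. One cosmetic difference worth noting: in your MDP the witness $(\rho,k)$-\newprob\ has $k = 2H = |\cS|-1$, i.e.\ is nearly the whole state space, whereas the paper's optimal set is a small fraction of $\cS$; this does not affect the validity of the lemma, which only asks for existence of some $(\rho,k)$-\newprobNoSpace, but it does mean your example does not simultaneously exhibit a ``small'' safe zone that Greedy overshoots in size.
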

\begin{proof}
Fix $\rho\in(0,1)$ and take $k=3H+1$.

Consider the MDP illustrated in  Figure~\ref{fig:greedyFails}. The set $\{s_0\}\cup \{s^i_1\}_i \cup \{s^i_2\}_i \cup \{s^i_3\}_i$ form a $(\rho, 3H+1)-$\newprob. 

\begin{figure}[!ht]
\centering
\includegraphics[width=1\linewidth]{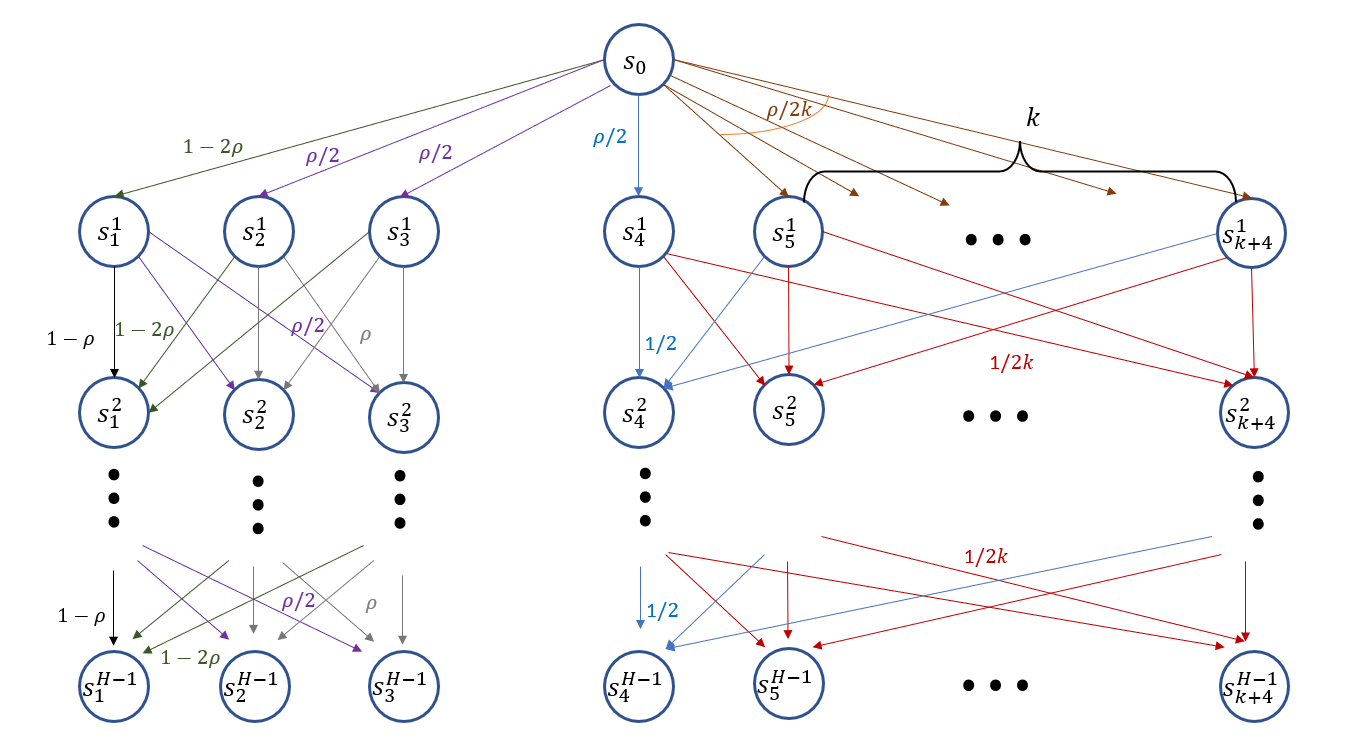}
\caption{Lower bound for \textsc{Greedy at Each Step} Algorithm.}
\label{fig:greedyFails}
\end{figure}

We will prove by induction that the for every time $i$,
\begin{itemize}
    \item $p(s^i_1)=1-2\rho$, 
    \item $p(s^i_2)=p(s^i_3)=p(s^i_4)=\frac{\rho}2$, and
    \item For every $j\in \{5,\ldots,k+4\}$, $p(s^i_j)=\frac{\rho}{2k}$.
\end{itemize}

It is easy to see that the two properties hold for $i=1$. 

For $i>1$,
 $$p(s^i_1)=p(s^{i-1}_1)(1-\rho)+p(s^{i-1}_2)\frac\rho2+p(s^{i-1}_3)\frac\rho2=(1-2\rho)(1-\rho)+2(1-2\rho)\frac{\rho}2=1-2\rho$$
 $$p(s^i_2)=p_{i-1}(s^{i-1}_1)\frac{\rho}2+p(s^{i-1}_2)\rho+p(s^{i-1}_3)\rho=(1-2\rho)\frac{\rho}2+\frac{\rho^2}2+\frac{\rho^2}2=\frac{\rho}2$$
Similarly, $p(s^{i}_3)=\frac{\rho}2$.
$$p(s^{i}_4)=\frac{1}2 p(s^{i-1}_4)+\sum_{j=5}^{k+4}\frac{p(s^{i-1}_j)}{2}=\frac{\rho}4+k\frac{\rho}{4k} =\frac{\rho}2$$
For every $j\in \{5,\ldots,k+4\}$, 
$$p(s^{i}_j)=\frac{1}{2k} p(s^{i-1}_4)+\sum_{m=5}^{k+4}\frac{p(s^{i-1}_m)}{2k}=\frac{\rho}{4k}+k\frac{\rho}{4k^2} =\frac{\rho}{2k}.$$

The algorithm might return $\{s_0\}\cup \{s^i_1\}_i \cup \{s^i_2\}_i \cup \{s^i_4\}_i$, i.e., instead of taking $\cup_i\{s^i_3\}_i$ it takes $\cup_i\{s^i_4\}_i.$ Finally, the observation $\Delta(\{s_0\}\cup \{s^i_1\}_i \cup \{s^i_2\}_i \cup \{s^i_4\}_i)\geq \frac{\rho H}{4}$ completes the proof.
\end{proof}

\section{Proofs of Section \ref{sec:algo}}\label{sup:algo}
For convenience, we state here Hoeffding’s inequality.

\begin{lemma}\label{lem:hoeffding}[Hoeffding’s Inequality]
Let $y_1,\dots,y_N$ be independent random variables such that $y_i\in[a,b]$ for every $y_i$ with probability $1$. 
Then, for any $\epsilon>0$,
\[
\Pr\left[\left|\frac{1}{N}\sum_{i=1}^N y_i-\E[y_i]\right|\geq \epsilon\right]\leq 2e^{-2N\epsilon^2/(b-a)^2}.
\]
\end{lemma}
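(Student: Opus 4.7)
The proof I would follow is the standard Chernoff/Cramér moment-generating-function argument combined with Hoeffding's lemma for centered bounded random variables. Let $X_i := y_i - \E[y_i]$, so each $X_i$ is independent, mean zero, and supported on an interval of length $b-a$. Set $S_N := \sum_{i=1}^N X_i$. The goal becomes bounding $\Pr[S_N \geq N\epsilon]$ (and symmetrically $\Pr[S_N \leq -N\epsilon]$), which then combine via a union bound to produce the factor of $2$ in the stated inequality.

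First I would apply the exponential Markov inequality: for any $s>0$,
\[
\Pr[S_N \geq N\epsilon] \;=\; \Pr\bigl[e^{s S_N} \geq e^{sN\epsilon}\bigr] \;\leq\; e^{-sN\epsilon}\,\E\bigl[e^{s S_N}\bigr].
\]
By independence of the $X_i$, the moment generating function factorizes, $\E[e^{sS_N}] = \prod_{i=1}^N \E[e^{s X_i}]$, reducing the problem to bounding each $\E[e^{sX_i}]$ individually.

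The main obstacle — and the only nontrivial step — is Hoeffding's lemma: for a mean-zero random variable $X$ supported in an interval of length $b-a$, one has $\E[e^{sX}] \leq \exp\!\bigl(s^2(b-a)^2/8\bigr)$. I would prove this by writing any $x$ in the support as a convex combination of the interval endpoints, applying convexity of $u \mapsto e^{su}$ to bound $e^{sX}$ pointwise, taking expectations (using $\E[X]=0$), and then reducing the resulting expression to a function $\varphi(t)$ of a single parameter $t = s(b-a)$. A Taylor expansion of $\log \varphi$ around $0$, together with the fact that $\varphi''$ is bounded by $1/4$ (it is the variance of a Bernoulli-type variable), yields the clean Gaussian-type bound $e^{s^2(b-a)^2/8}$.

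Combining these pieces gives $\Pr[S_N \geq N\epsilon] \leq \exp\!\bigl(-sN\epsilon + N s^2(b-a)^2/8\bigr)$. Optimizing the right-hand side over $s>0$ (the minimizer is $s^* = 4\epsilon/(b-a)^2$) yields the one-sided bound $\exp\!\bigl(-2N\epsilon^2/(b-a)^2\bigr)$. Repeating the argument with $-X_i$ in place of $X_i$ gives the matching lower-tail bound, and a union bound over the two tails produces the advertised factor of $2$ and completes the proof.
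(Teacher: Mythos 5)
Your proof is correct and is the standard Chernoff--Hoeffding argument (exponential Markov inequality, factorization of the MGF by independence, Hoeffding's lemma bounding $\E[e^{sX}]$ by $e^{s^2(b-a)^2/8}$, optimization over $s$ giving $s^* = 4\epsilon/(b-a)^2$, and a union bound over the two tails). The paper states this lemma without proof, as a quoted textbook fact, so there is no alternative argument to compare against; your derivation would be the natural one to cite.
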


\subsection{Proof of  \cref{lemma:main_lemma}}
In this section, we provide a complete proof for \cref{lemma:main_lemma}. Throughout the section, we define a few terms and notions.
We will start with proving guarantees regarding a single iteration of the while--loop. 

Recall that $F^*$ denotes a minimal $\rho-$safe set (of size $k^*$). If there are multiple optimal solutions, choose one arbitrarily. 
For the convince of analysis, we denote the  values of the algorithm variables at the end of each iteration $i$ of the while--loop by  $\tau_i,F_i,accept_i$. Let $j(i)$ denote the value of variable $j$ during the $i-$th call to \textit{EstimateSafety} Subroutine.  In addition, let $N_i$ denote the number of trajectories sampled for the $j-$th time of calling Subroutine \textit{EstimateSafety}, i.e., $N_i= \frac{1}{2\epsilon^2}\ln\frac{2}{\lambda_{j(i)}}$ for $j(i)\leq i$. 

For ease of presentation, we recall some of the definitions from the proof technique description.
We say that a trajectory $\tau$ is  \textit{good} if all the states in $\tau$ are in $F^*$ and \textit{bad} if it escapes it. I.e., a trajectory is good if $\tau\subseteq F^*$ and bad if $\tau\not\subseteq F^*$. 
Additionally,  we say that a state $s\in \cS$ is \emph{good} if it is in $F^*$ and \emph{bad} otherwise. Namely, a state $s$ is good if  $s\in F^*$ and bad if $s\notin F^*$.
Let $G_i(F_{i-1})$ and $B_i(F_{i-1})$ be the number of good and bad states added to $F_{i-1}$ in iteration $i$, respectively (notice that $G_i(F_{i-1})$ and $B_i(F_{i-1})$ are random variables that depends on $F_{i-1}$). For short, whenever it is clear from the context, we write $G_i$ and $B_i$ respectively.
The following lemma bounds the error in approximating the escape probability. 
\begin{restatable}{lemma}{lemma3events}\label{lemma:3events}
Let $F_{i-1}\subseteq \cS$ be a subset of of states and $\epsilon,\lambda_j>0$ be some parameters. Let $S_i$ be a sample of 
$N_i \geq \frac{1}{2\epsilon^2}\ln\frac{2}{\lambda_{j(i)}}$
i.i.d. random trajectories. Then, 
\begin{equation*}
\Pr_{S_i}\left[\left|  \widehat{\Delta}(F_{i-1})-\Delta(F_{i-1})\right|\geq 
\epsilon\right]\leq \lambda_j.
\end{equation*}
Also, as  $\lambda_j=\frac{3\lambda}{2(\pi  j)^2}$, 
\begin{equation*}
\Pr\left[\exists i \; \left|  \widehat{\Delta}(F_{i-1})-\Delta(F_{i-1})\right|\geq 
\epsilon\right]\leq \lambda/4,
\end{equation*}
Where the last probability is over all the samples $S_i$ made by \textit{EstimateSafety} Subroutine.
\end{restatable}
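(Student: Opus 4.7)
The plan is to proceed in two steps, one per inequality. For the single-iteration bound, I would apply Hoeffding's inequality directly. By inspection of \textit{EstSafety}, for each trajectory $\tau$ in the sample $S_i$, the contribution to $\widehat{\Delta}(F_{i-1})$ is $\frac{1}{N_i}\mathbb{I}[\tau\not\subseteq F_{i-1}]$, so $\widehat{\Delta}(F_{i-1})$ is the sample mean of i.i.d. Bernoulli random variables $X_\tau = \mathbb{I}[\tau\not\subseteq F_{i-1}] \in [0,1]$ with common expectation $\Delta(F_{i-1})$. Applying Lemma~\ref{lem:hoeffding} with $a=0$, $b=1$ yields
\begin{equation*}
\Pr\!\left[\left|\widehat{\Delta}(F_{i-1}) - \Delta(F_{i-1})\right| \geq \epsilon\right] \leq 2\exp(-2N_i\epsilon^2),
\end{equation*}
and substituting $N_i \geq \frac{1}{2\epsilon^2}\ln\frac{2}{\lambda_{j(i)}}$ makes the right-hand side at most $\lambda_{j(i)}$.

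For the uniform-over-iterations bound, I would apply a union bound. The estimator $\widehat{\Delta}$ is only updated on iterations when a trajectory is accepted, and these updates are indexed by $j = 1, 2, \ldots$ with associated failure parameter $\lambda_j = \frac{3\lambda}{2(\pi j)^2}$. Using Basel's identity $\sum_{j=1}^\infty \frac{1}{j^2} = \frac{\pi^2}{6}$, the union of the bad events has probability at most
\begin{equation*}
\sum_{j=1}^\infty \lambda_j \;=\; \frac{3\lambda}{2\pi^2} \cdot \frac{\pi^2}{6} \;=\; \frac{\lambda}{4},
\end{equation*}
which establishes the second claim.

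There is no real obstacle: the only point requiring care is the bookkeeping between the while-loop iteration $i$ and the call index $j(i)$ to \textit{EstSafety} (the latter advances only on acceptance), so that the union bound is taken over the indices $j$ at which $\widehat{\Delta}$ actually changes, which is exactly what the choice $\lambda_j \propto 1/j^2$ is engineered to accommodate.
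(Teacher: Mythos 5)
Your proof is correct and mirrors the paper's argument: both apply Hoeffding's inequality to the i.i.d.\ indicator variables $\mathbb{I}[\tau\not\subseteq F_{i-1}]$ to get the per-call bound, and then union-bound over the call index $j$ using $\sum_{j\geq 1}\lambda_j = \frac{3\lambda}{2\pi^2}\sum_{j\geq 1}\frac{1}{j^2}=\frac{\lambda}{4}$. Your explicit remark about the distinction between the while-loop index $i$ and the call index $j(i)$ is precisely the bookkeeping the paper flags at the step marked $(*)$.
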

\begin{proof}
The first part follows directly from Hoeffding's inequality by taking $y_i= \mathbbm{I}[\tau\not\subseteq F]$. 

Assigning $\lambda_j=\frac{3\lambda}{2(\pi  j)^2}$ and applying union bound, we get
\begin{align*}
\Pr\left[\exists i \; \left|  \widehat{\Delta}(F_{i-1})-\Delta(F_{i-1})\right|\geq 
\epsilon\right]&\leq \sum_{i} \Pr_{S_i}\left[ \left|  \widehat{\Delta}(F_{i-1})-\Delta(F_{i-1})\right|\geq 
\epsilon \right]\\
&\leq_{(*)}  \sum_{j(i)} \lambda_{j(i)}\leq\sum_{j=1}^{\infty} \lambda_j=\sum_{j=1}^{\infty} \frac{3\lambda}{2(\pi  j)^2}= \frac{\lambda}{4}.
\end{align*}
The inequality marked by $(*)$ follows from the fact that $\Delta(F)$ is estimated once for every time $j$ increases.
\end{proof}

We define the event that \textit{EstimateSafety} always provides good estimations by $$\mathcal{E}=\{\forall i \; \left|  \widehat{\Delta}(F_{i-1})-\Delta(F_{i-1})\right|\leq
\epsilon\}.$$
By the above, we have that $\Pr[\mathcal{E}]\geq 1-\lambda/4$.

In the following lemma we assume that if the current escape probability is at least $2\rho$, then the fraction of bad trajectories that escape $F_{i-1}$ is bounded from above by the fraction of good trajectories that escape $F_{i-1}$.

\begin{lemma}\label{lemma:xs}
Let $\rho>0$ and assume that $\Delta(F_{i-1})\geq 2\rho$. Then,
\[
\Pr_\tau[new_{F_{i-1}}(\tau)\ne 0\wedge \tau \not\subseteq  F^*]\leq \Pr_\tau[new_{F_{i-1}}(\tau)\ne 0\wedge \tau \subseteq F^*],
\]
where the probabilities are over random trajectories. 
\end{lemma}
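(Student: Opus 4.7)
The plan is to exploit two simple observations: (i) the event $\{new_{F_{i-1}}(\tau)\ne 0\}$ is exactly the event $\{\tau\not\subseteq F_{i-1}\}$, whose probability equals the escape probability $\Delta(F_{i-1})$; and (ii) since $F^*$ is $\rho$-safe, $\Pr_\tau[\tau\not\subseteq F^*]\le \rho$. Combining (i) with the hypothesis $\Delta(F_{i-1})\ge 2\rho$ lets us lower bound the RHS, while (ii) directly upper bounds the LHS, and the assumption $\Delta(F_{i-1})\ge 2\rho$ provides the factor of two needed to close the gap.

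More concretely, the first step is to bound the left-hand side from above by the trivial inclusion
\[
\Pr_\tau[new_{F_{i-1}}(\tau)\ne 0\wedge \tau \not\subseteq F^*]\le \Pr_\tau[\tau\not\subseteq F^*]\le \rho,
\]
where the last inequality is the $\rho$-safeness of $F^*$.

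The second step is to bound the right-hand side from below by writing the event $\{new_{F_{i-1}}(\tau)\ne 0\}$ as the disjoint union of the events $\{new_{F_{i-1}}(\tau)\ne 0\wedge \tau\subseteq F^*\}$ and $\{new_{F_{i-1}}(\tau)\ne 0\wedge \tau\not\subseteq F^*\}$. Using observation (i) and then subtracting the previous upper bound, this gives
\[
\Pr_\tau[new_{F_{i-1}}(\tau)\ne 0\wedge \tau\subseteq F^*] = \Delta(F_{i-1}) - \Pr_\tau[new_{F_{i-1}}(\tau)\ne 0\wedge \tau\not\subseteq F^*] \ge 2\rho - \rho = \rho,
\]
where the hypothesis $\Delta(F_{i-1})\ge 2\rho$ is used in the final inequality. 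Chaining the two bounds yields $\mathrm{LHS}\le \rho\le \mathrm{RHS}$, which is exactly the claim.

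There is really no obstacle here—the content of the lemma is just the arithmetic $2\rho - \rho \ge \rho$ combined with the definition of escape probability. The only minor care needed is to make sure that the identification $\{new_{F_{i-1}}(\tau)\ne 0\}=\{\tau\not\subseteq F_{i-1}\}$ is invoked explicitly, since this is what turns the hypothesis $\Delta(F_{i-1})\ge 2\rho$ into a usable probability bound, and to note that the two sub-events we split along partition $\{new_{F_{i-1}}(\tau)\ne 0\}$ so that the subtraction above is an equality rather than an inequality.
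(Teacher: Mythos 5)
Your proof is correct and follows essentially the same route as the paper's: upper bound the left-hand side by $\rho$ using the $\rho$-safeness of $F^*$, lower bound the right-hand side by $\rho$ by decomposing $\Pr_\tau[new_{F_{i-1}}(\tau)\ne 0]=\Delta(F_{i-1})\ge 2\rho$ into the two disjoint sub-events and subtracting off the first bound. The only cosmetic difference is that you phrase the second step as an exact subtraction while the paper chains inequalities; the content is identical.
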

\begin{proof}
To prove the lemma, we will bound the probability $\Pr_\tau[new_{F_{i-1}}(\tau)\ne 0\wedge \tau \not\subseteq  F^*]$ from above and the probability  $\Pr_\tau[new_{F_{i-1}}(\tau)\ne 0\wedge \tau \subseteq F^*]$ from below.
Since $\Delta(F^*)\leq\rho$, 
    \begin{equation}\label{eq:EXupperBound} \Pr_\tau[new_{F_{i-1}}(\tau)\ne 0\wedge \tau \not\subseteq  F^*]\leq \Pr_\tau[\tau\not\subseteq F^*]\leq \rho.
    \end{equation}

The assumption $\Delta(F_{i-1})\geq 2\rho$ implies that 
    \begin{align*}
        2\rho 
        &\leq \Delta(F_{i-1})= \Pr_\tau[new_{F_{i-1}}(\tau)\ne 0]=\Pr_\tau[new_{F_{i-1}}(\tau)\ne 0\wedge \tau \subseteq F^*]+\Pr_\tau[new_{F_{i-1}}(\tau)\ne 0\wedge \tau \not\subseteq F^*]\\
        &\leq \Pr_\tau[new_{F_{i-1}}(\tau)\ne 0\wedge \tau \subseteq F^*]+\Pr_\tau[\tau \not\subseteq F^*] \leq \Pr_\tau[new_{F_{i-1}}(\tau)\ne 0\wedge \tau \subseteq F^*]+\rho,
    \end{align*}
    hence 
    \begin{equation}\label{eq:EYlowerBound}
       \rho
       \leq \Pr_\tau[new_{F_{i-1}}(\tau)\ne 0\wedge \tau \subseteq F^*].
    \end{equation}
Putting (\ref{eq:EXupperBound}) and (\ref{eq:EYlowerBound}) together yields the statement.
\end{proof}
Now, as long as the algorithm is inside the while--loop (i.e., the escape probability holds  $\widehat{\Delta}(F)> 2\rho+\epsilon$), it follows that $\Delta(F)\geq 2\rho$ with high probability from  \cref{lemma:3events}. Combining it with \cref{lemma:xs} would yield that with high probability over a random trajectory, if the trajectory escapes $F$ then in expectation, it is at least as likely to be good as it is to be bad. 

We move on to show the main  ingredient of the proof, namely  that for any iteration, with high probability, the expected number of good states  added to the current set $F$ is larger or equal to the expected number of bad states.

For every iteration $i$ in which we sample $\tau_i$ both $G_i$ and $B_i$ depends on the following: 
\begin{enumerate}
    \item The realizations of the sampled trajectory, $\tau_i$, and in particular on  $new_{F_{i-1}}(\tau_i)$.
    \item The probability of adding it to $F$, i.e., $1/new_{F_{i-1}}(\tau_i)$. 
\end{enumerate}
Next, we prove  \cref{eq:bigiTech}.
\begin{lemma}\label{lemma:BiGi}
Assume event $\mathcal{E}$ holds. Thus, for all iterations $i$ inside the while--loop we have
\[
\E[B_i|F_{i-1}]\leq \E[G_i|F_{i-1}],
\]
where the expectation is over the trajectory $\tau$ that is sampled from the MC dynamics and added to $F_{i-1}$ according to $Q_{F_{i-1}}$.
\end{lemma}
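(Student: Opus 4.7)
The plan is to combine three ingredients already assembled in the proof sketch: the pointwise comparisons of $G_i$ and $B_i$ to $new_{F_{i-1}}(\tau)$, the identity for $\E_{\tau\sim Q_{F_{i-1}}}[new_{F_{i-1}}(\tau)\cdot \mathbb{I}[\tau\in T]]$ derived in \eqref{eq:proof_new_states_in_trajectory}, and the structural bound from \cref{lemma:xs}. The key preliminary step is to verify that whenever we are inside the while--loop and $\mathcal{E}$ holds, the true escape probability $\Delta(F_{i-1})$ is at least $2\rho$, so that \cref{lemma:xs} applies.

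First I would fix an iteration $i$ and condition on $F_{i-1}$. The loop condition gives $\widehat{\Delta}(F_{i-1})>2\rho+\epsilon$, and on the event $\mathcal{E}$ we have $|\widehat{\Delta}(F_{i-1})-\Delta(F_{i-1})|\leq \epsilon$, so $\Delta(F_{i-1})>2\rho$. The rejection-sampling step (sample $\tau$ from the MC dynamics, accept with probability $1/new_{F_{i-1}}(\tau)$ when $new_{F_{i-1}}(\tau)\ne 0$) yields, conditional on acceptance, exactly the distribution $Q_{F_{i-1}}$ described in the paper; so I can take the expectation in the lemma to be over $\tau\sim Q_{F_{i-1}}$.

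Next I would use the pointwise observations already noted: if the accepted trajectory $\tau$ satisfies $\tau\subseteq F^*$, then every newly added state is good, so $G_i=new_{F_{i-1}}(\tau)$ and $B_i=0$; otherwise, $G_i\ge 0$ trivially and $B_i\le new_{F_{i-1}}(\tau)$. Rewriting these with indicators and taking expectations,
\begin{align*}
\E[G_i\mid F_{i-1}] &\ge \E_{\tau\sim Q_{F_{i-1}}}\bigl[new_{F_{i-1}}(\tau)\,\mathbb{I}[\tau\subseteq F^*]\bigr],\\
\E[B_i\mid F_{i-1}] &\le \E_{\tau\sim Q_{F_{i-1}}}\bigl[new_{F_{i-1}}(\tau)\,\mathbb{I}[\tau\not\subseteq F^*]\bigr].
\end{align*}
Applying \eqref{eq:proof_new_states_in_trajectory} with $T=\{\tau:\tau\subseteq F^*\}$ and $T=\{\tau:\tau\not\subseteq F^*\}$ respectively turns the two right-hand sides into $\tfrac{1}{Z}\Pr_\tau[\tau\subseteq F^*\wedge new_{F_{i-1}}(\tau)\ne 0]$ and $\tfrac{1}{Z}\Pr_\tau[\tau\not\subseteq F^*\wedge new_{F_{i-1}}(\tau)\ne 0]$, with the same normalization constant $Z$.

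Finally, since $\Delta(F_{i-1})\ge 2\rho$, \cref{lemma:xs} gives
\[
\Pr_\tau[new_{F_{i-1}}(\tau)\ne 0\wedge \tau\not\subseteq F^*]\;\le\;\Pr_\tau[new_{F_{i-1}}(\tau)\ne 0\wedge \tau\subseteq F^*],
\]
and dividing by $Z$ yields $\E[B_i\mid F_{i-1}]\le \E[G_i\mid F_{i-1}]$, as required. The only subtle point, which I expect to be the main thing a careful reader will scrutinize, is the alignment between the ``sample then accept'' rejection scheme and the distribution $Q_{F_{i-1}}$: one has to justify that conditioning on acceptance produces $Q_{F_{i-1}}$ with the stated normalization and that the identity \eqref{eq:proof_new_states_in_trajectory} is applicable without any extra factor from the acceptance probability. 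Everything else is a direct assembly of previously stated pieces.
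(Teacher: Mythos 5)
Your proposal is correct and takes essentially the same approach as the paper's proof: both reduce $\E[B_i\mid F_{i-1}]$ and $\E[G_i\mid F_{i-1}]$ to $\Pr_\tau[new_{F_{i-1}}(\tau)\ne 0\wedge\tau\not\subseteq F^*]$ and $\Pr_\tau[new_{F_{i-1}}(\tau)\ne 0\wedge\tau\subseteq F^*]$ by letting the $1/new_{F_{i-1}}(\tau)$ acceptance probability cancel the $new_{F_{i-1}}(\tau)$ factor, and then apply Lemma~\ref{lemma:xs} after deducing $\Delta(F_{i-1})>2\rho$ from the loop condition and $\mathcal{E}$. The only cosmetic difference is that you condition on acceptance (so a common normalizer $Z$ appears and cancels), whereas the paper's appendix computation takes the unconditional expectation over the joint sample--accept process with $B_i=G_i=0$ on rejection, avoiding $Z$ altogether; the ``subtle point'' you flag resolves exactly as you anticipate.
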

\begin{proof}

Since event $\mathcal{E}$ holds, we have that $\Delta(F_{i-1})\geq 2\rho $ as long as we do not terminate in iteration $i$.

We can use it to bound $\E_\tau[B_i|F_{i-1}]$ by 
\begin{eqnarray*}
\E_\tau[B_i|F_{i-1}] &\leq&
\sum_{h=1}^H \frac{\Pr_\tau[new_{F_{i-1}}(\tau)=h \wedge \tau\not\subseteq F^*]}{h}\cdot h\\
&=&\Pr_\tau[new_{F_{i-1}}(\tau)\ne 0 \wedge \tau\not\subseteq F^*]\underbrace{\leq}_{\cref{lemma:xs}}  \Pr_\tau[new_{F_{i-1}}(\tau)\ne 0 \wedge \tau\subseteq F^*]\\
&=&  \sum_{h=1}^H \frac{\Pr_\tau[new_{F_{i-1}}(\tau)=h \wedge \tau \subseteq F^*]}{h}\cdot h\leq \E_\tau[G_i|F_{i-1}].
\end{eqnarray*}
\end{proof}

\lemmaMainlemma*

\begin{proof}
Assume that the event $\mathcal{E}$ holds, and recall that 
\begin{equation}\label{eq:EBound}
\Pr[\mathcal{E}]\geq 1-\lambda/4.
\end{equation}
 
We start with the first clause.  
Since the event $\mathcal{E}$ holds,  \cref{lemma:3events} in particular implies that $\Delta(F)\leq 2\rho+2\epsilon$, hence the first clause holds. 
For second clause, we will bound $\E[|F|\;|\;\mathcal{E}]$ from above by $2k^*$.
Since $\mathcal{E}$ holds, we have that $\Delta(F_{i-1})\geq 2\rho$,
for every $i$ inside the while--loop, thus Lemma~\ref{lemma:BiGi} yields 
\[
\E[B_i|F_{i-1}]\leq \E[G_i|F_{i-1}]. 
\]
 This implies that
\begin{equation}\label{eq:sizeBound}
\E[|F| \;|\; \mathcal{E}] \leq 2 \sum_i \E_{F_{i-1}}[\E[G_i|F_{i-1}]]| \mathcal{E}]\leq 2k^*,
\end{equation}
where the last inequality follows from the definition of $G_i$, as   $\sum_{i} G_i\leq |F^*|= k^*$.

We continue with the third clause of the theorem.
Let $M$ denote the sample complexity of the algorithm, namely $M=M_F+M_E$ where $M_F$ is the expected  total number of trajectories sampled within the  \textsc{Finding SafeZone} Algorithm (without the samples made by \textit{EstimateSafety} Subroutine) and $M_E$ is the total number of trajectories sampled  using \textit{EstimateSafety}. We will bound each term separately. 

Since $\mathcal{E}$ holds, whenever we are inside the while--loop, $\Delta(F_i)\geq 2\rho$, which implies that it takes at most $1/2\rho$ trajectories in expectation to sample a trajectory that escapes $F_i$, and such trajectory is accepted with probability at least $1/H$. 

Thus, from Wald's identity, it follows that
\[
\E\left[M_F\right|\mathcal{E}]= \frac{H}{2\rho}\cdot \E[|F|\;|\mathcal{E}]\leq\frac{Hk^*}{\rho}.
\]

From Markov's inequality on the above inequality, with probability at least $1-\frac{\lambda}{4}$, 
\begin{equation}\label{eq:sampleSize}
\Pr\left[M_F\geq \frac{4Hk^*}{\rho\lambda}\big|\mathcal{E}\right]\leq \frac{\lambda}{4}.
\end{equation}

Moving on to bound $M_E$.
Since $\mathcal{E}$ holds, it follows from \cref{eq:sizeBound} and Markov's inequality that 
\begin{equation}\label{eq:markovSize}
\Pr\left[|F|\geq \frac{8k^*}{\lambda}\;\big|\;\mathcal{E}\right]=\Pr\left[|F|\geq 2k^*\cdot  \frac{4}{\lambda}\;\big|\;\mathcal{E}\right]=\Pr\left[|F|\geq \E[|F| \;|\; \mathcal{E}]\cdot  \frac{4}{\lambda}\;\big|\;\mathcal{E}\right]\leq \frac{\lambda}{4}.
\end{equation}

If $|F|\leq \frac{8k^*}{\lambda}$, the number of calls for Subroutine \textit{EstimateSafety} is also bounded by $8\pi k^*/\lambda$ (we only call   \textit{EstimateSafety} after we added states to $F$). It also implies that $\frac{3\lambda^3}{2(8\pi k^*)^2}\leq \lambda_j$ for every $j\geq 1$. Thus, if $|F|\leq \frac{8k^*}{\lambda}$,
\begin{align*}
\begin{split}
M_E=\sum_{j=1}^{|F|} N_{i}&\leq
\sum_{j}^{\frac{8k^*}{\lambda}}\frac{1}{2\epsilon^2}\ln\frac{2}{\lambda_{j}}
\leq 
\sum_{j}^{\frac{8k^*}{\lambda}}\frac{1}{2\epsilon^2}\ln\frac{2}{\frac{3\lambda^3}{2(8\pi k^*)^2}}\leq
\sum_{j}^{\frac{8k^*}{\lambda}}\frac{1}{2\epsilon^2}\ln\frac{86(\pi k^*)^2}{\lambda^3}
\\&=
\frac{8k^*}{2\lambda\epsilon^2}\ln\frac{86(\pi k^*)^2}{\lambda^3}=
\frac{4k^*}{\lambda\epsilon^2}\ln\frac{86(\pi k^*)^2}{\lambda^3}
\end{split}
\end{align*}
Combining  the above with \cref{eq:markovSize}, we get 
\begin{align}\label{eq:sampleComp}
\begin{split}
\Pr\left[M_E>\frac{4k^*}{\lambda\epsilon^2}\ln\frac{86(\pi k^*)^2}{\lambda^3}\;\big|\;\mathcal{E}\right]\leq  \frac\lambda4
\end{split}
\end{align}

As $M=M_F+M_E$, union bound over \cref{eq:EBound}, \cref{eq:sampleSize} and  \cref{eq:sampleComp} implies that with probability $\geq  1-3\lambda/4>1-\lambda$,
\begin{equation}\label{eq:sampCompBound}
M=O\left(\frac{k^*}{\lambda \epsilon^2}\ln\frac{k^*}{\lambda}+\frac{Hk^*}{\rho\lambda}\right)
\end{equation}
For each trajectory we sample we run in time $O(H)$, e.g., by using a lookup table for maintaining the current set $F$. Consequently, if the event in \cref{eq:sampCompBound} holds then the running time of the algorithm is bounded by
\[
O\left(\frac{Hk^*}{\lambda \epsilon^2}\ln\frac{k^*}{\lambda}+\frac{H^2k^*}{\rho\lambda}\right).
\]
Overall, all the clauses in the lemma hold with probability $\geq 1-\lambda$.

\end{proof}

\subsection{Proof of Theorem~\ref{thm:main_thm}}

\thmMainthm*
\begin{proof}
Assume we run \textsc{Finding} \newprob Algorithm for $m=\frac{2\ln 300}{\delta}$ times and denote each algorithm output by $F^i$. Return the smallest set $F=\text{argmin}_{F^i}|F^i|$.

It follows from \cref{lemma:main_lemma} that for every $\lambda\in (0,1)$, each  $F^i$ is of expected size $\E[|F^i|]\leq 2k^*$, and is $(2\rho+2\epsilon)-$safe with probability $\geq 1-\lambda.$ Choosing  $\lambda=\frac{0.01}{3m}$ implies
\begin{align}\label{eq:final1}
    \Pr[\Delta(F)>2\rho+2\epsilon]\leq \frac{0.01}{3}.
\end{align}

In addition, from Markov's inequality it follows that for every $\delta>0$,  
\begin{align*}
     \Pr\Big[|F^i|> (2+\delta) k^*\Big] &\leq  \Pr\Big[|F^i|> (2+\delta) k^*|\mathcal{E}\Big]+\Pr[\mathcal{E}]\\
     &\leq  \frac{2k^*}{(2+\delta)k^*}+\lambda\\
    & =1-\frac{\delta/2}{1+\delta/2}+\lambda
    \\
    & =1-\frac{\delta/2-\lambda-\lambda\delta/2}{1+\delta/2}
\end{align*}
From the independence of the algorithm runs, for 
$m=\frac{2\ln 300}{\delta}$,
\begin{align*}
    \Pr[|F|>  (2+\delta)k^*]&\leq  \Pr[\forall i: (|F^i|>(2+\delta)k^*)]\\ &\leq \prod_{i\in[m]} \Pr[|F^i|>(2+\delta)k^*]\\
    &\leq  
    \left(1-
\frac{\delta/2-\lambda-\lambda\delta/2}{1+\delta/2}\right)^m\\
&\leq 
    e^{-m(
\frac{\delta/2-\lambda-\lambda\delta/2}{1+\delta/2})}\leq 
   \frac{0.01}{3}.
\end{align*}

Hence
\begin{align}\label{eq:final2}
        \Pr[|F|>  (2+\delta)k^*]\leq 
   \frac{0.01}{3}.
\end{align}

As for the sample complexity, let $M_i$ denote the (random) sample complexity of the $i-$th run, and let us denote  $$\bar{M}=\frac{4k^*}{\lambda\epsilon^2}\ln\frac{86(\pi k^*)^2}{\lambda^3}+\frac{4Hk^*}{\rho\lambda}.$$

From \cref{lemma:main_lemma}, $M_i> \bar{M}$ with probability $<\lambda$.
 
By taking the union bound on the sample complexity bound per one run, we get
\begin{align*}
   \Pr\left[\exists i: M_i>
  \bar{M}\right]
    \leq \sum_{i\in[m]} \Pr\left[M_i>\bar{M}\right]\leq m\cdot \lambda=
    \frac{0.01}{3}.
\end{align*}
Where the last inequality follows from \cref{lemma:main_lemma}, and $\lambda=\frac{0.01}{3m}$.

Assigning $m=\frac{2\ln 300}{\delta}$ and $\lambda=\frac{0.01}{3m}=\frac{0.01\delta}{6\ln 300}$, we get that with probability $\geq 1-\frac{0.01}{3}$,
\begin{align}\label{eq:final3}
    \sum_{i=1}^{m} M_i=O\left(\frac{mk^*}{\lambda \epsilon^2}\ln\frac{k^*}{\lambda}+\frac{mHk^*}{\rho\lambda}\right)=O\left(\frac{k^*}{\delta^2 \epsilon^2}\ln\frac{k^*}{\delta}+\frac{Hk^*}{\rho\delta^2}\right)
\end{align}
Since the algorithm runs in time $O(H)$ for every trajectory sampled, if the sample complexity is bounded by the above term, then the total running time is bounded by  $O\left(\frac{Hk^*}{\delta^2 \epsilon^2}\ln\frac{Hk^*}{\delta}+\frac{Hk^*}{\rho\delta^2}\right)$.\\
Finally, from union bound over \cref{eq:final1}, \cref{eq:final2} and \cref{eq:final3} all the theorem properties hold with probability $\geq 0.99$.
\end{proof}

% \section{Proofs of Section  \ref{sec:hardness}}\label{sup:hardness}
% \input{supplementary/sup-6_hardness}

\section{Additional Figures for Section~\ref{sec:empirical_demo}}\label{sup:emp}\subsection{Comparing {\newprob} of two policies}\label{subsec:empirical_two_policies} 
In this section, we empirically explore the {\newprobNoSpace} of two different policies within the same MDP.  
The first policy, described in the previous section, first goes right and then to the middle, and the second policy first goes to the middle and then goes right. See \cref{fig:policies} in the appendix. These seemingly similar policies induce very different \textsc{SafeZones} as can be seen in \cref{fig:policy_visits} which depicts the number of visits in each state. It shows that the second policy requires fewer states to achieve the same level of safety, even though in terms of minimizing the number of steps to get to the goal state it is outperformed by the first policy (intuitively, the second policy has more fail attempts to go up in expectation  since the lowest row of the grid cannot get worst).
In Figure~\ref{fig:emp_patial_coverage_policy_2} we see that already with $14\%$ of the states, all three algorithms achieve trajectory coverage of more than $85\%$. 

\begin{figure*}[ht]
\centering     
\subfigure[Set chosen by \textsc{Greedy at Each Step} Algorithm.]{\label{fig:emp_greedy_example}\includegraphics[width=63mm]{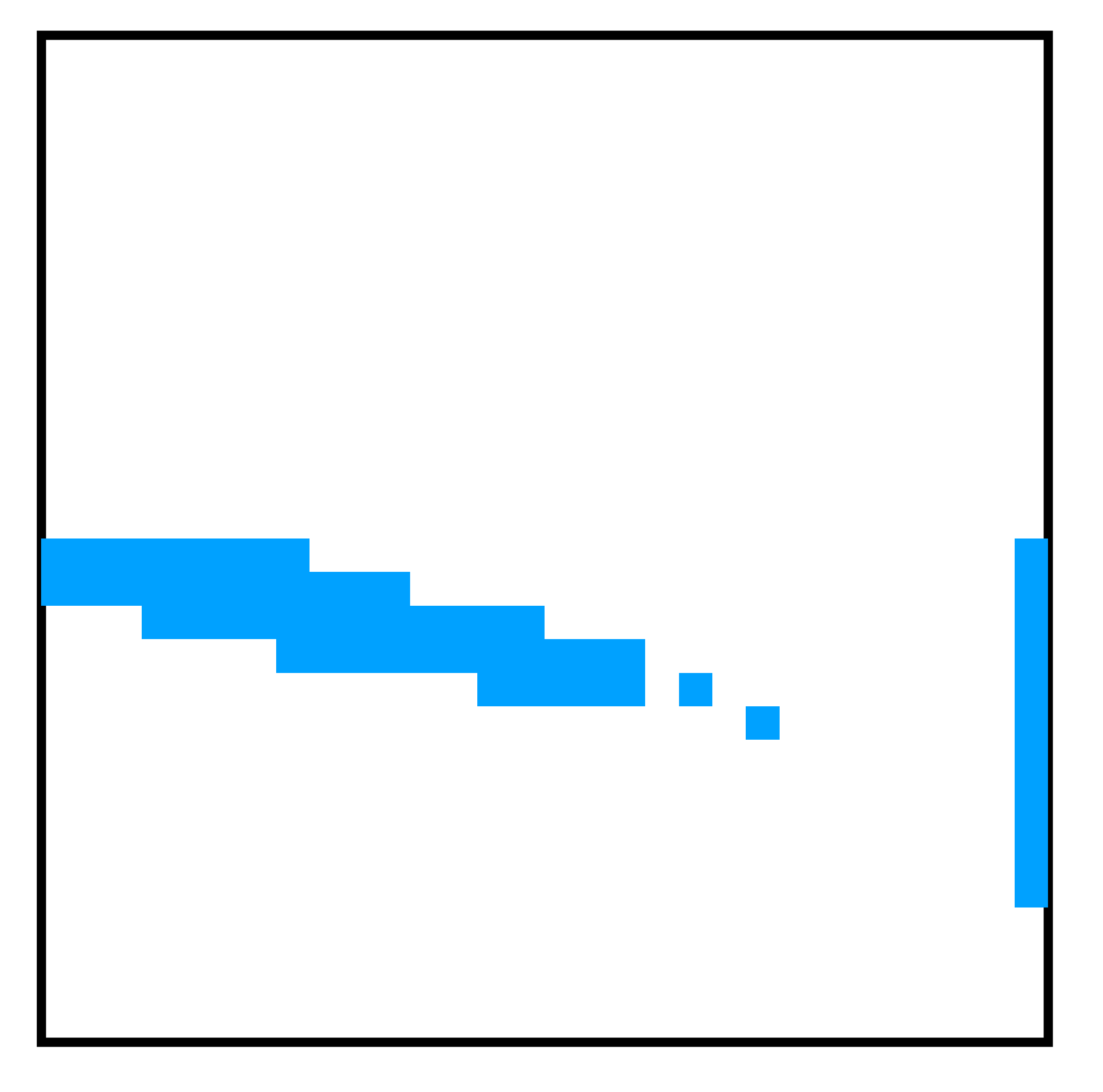}}
\hfill
\subfigure[Set chosen by \newprob  Algorithm.]{\label{fig:emp_safe_zone_example}\includegraphics[width=62mm]{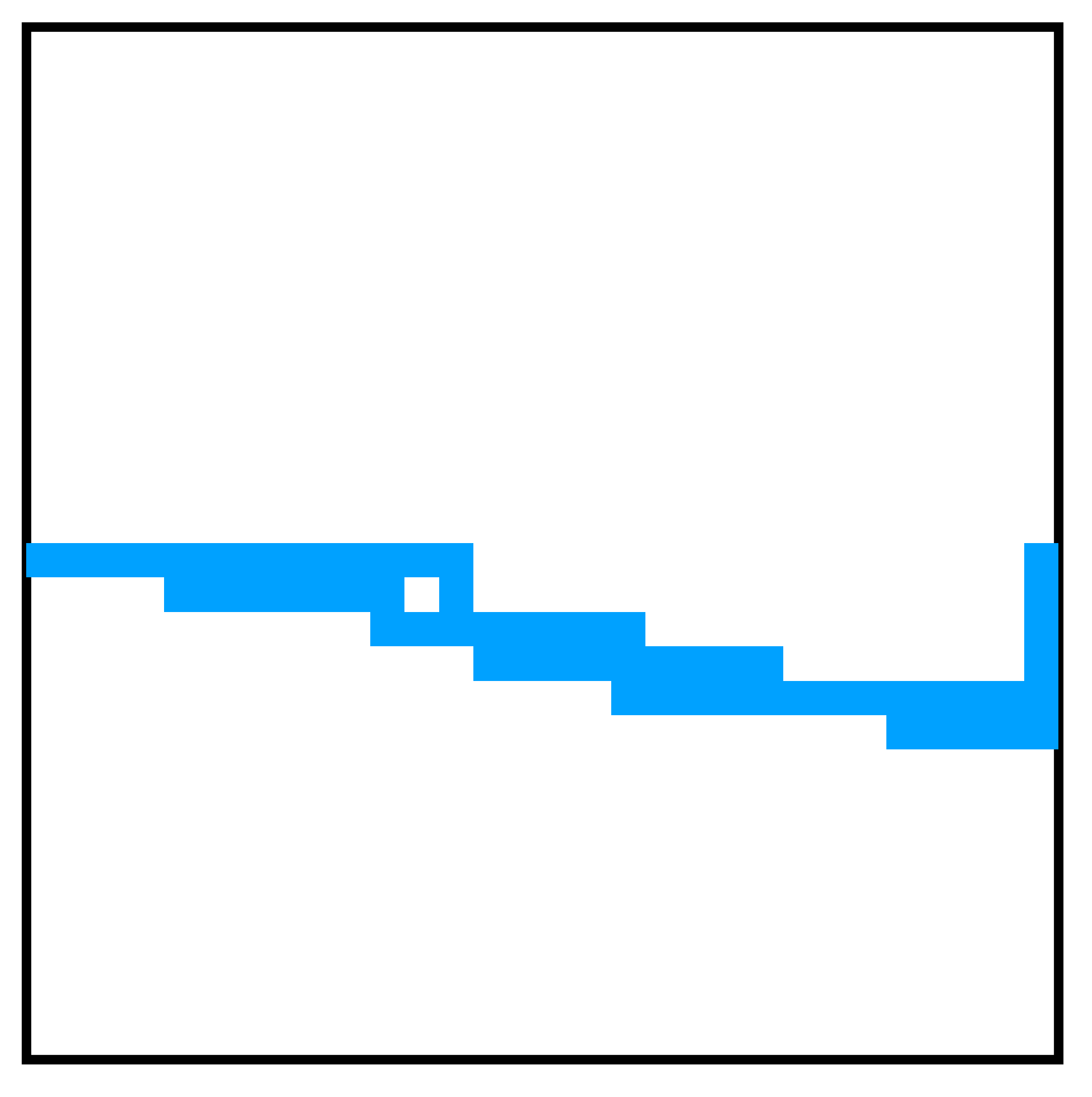}}
\caption{Empirical results regarding Coverage of the different algorithms, \textsc{Finding SafeZones} and state visit frequency.}
\label{fig:emp_examples}
\end{figure*}

\begin{figure*}[ht]
\centering     %%% not \center
\subfigure[\%Coverage: %percentage:
difference from \textsc{Greedy} Algorithm.]{\label{fig:emp_results_comapred_greedy_app}\includegraphics[width=75mm]{figures/safe_zone_result_coverage_compared_to_greedy_new.png}}
\hfill
\subfigure[\%Coverage: %percentage:
absolute values.]{\label{fig:emp_results_abs_app}\includegraphics[width=75mm]{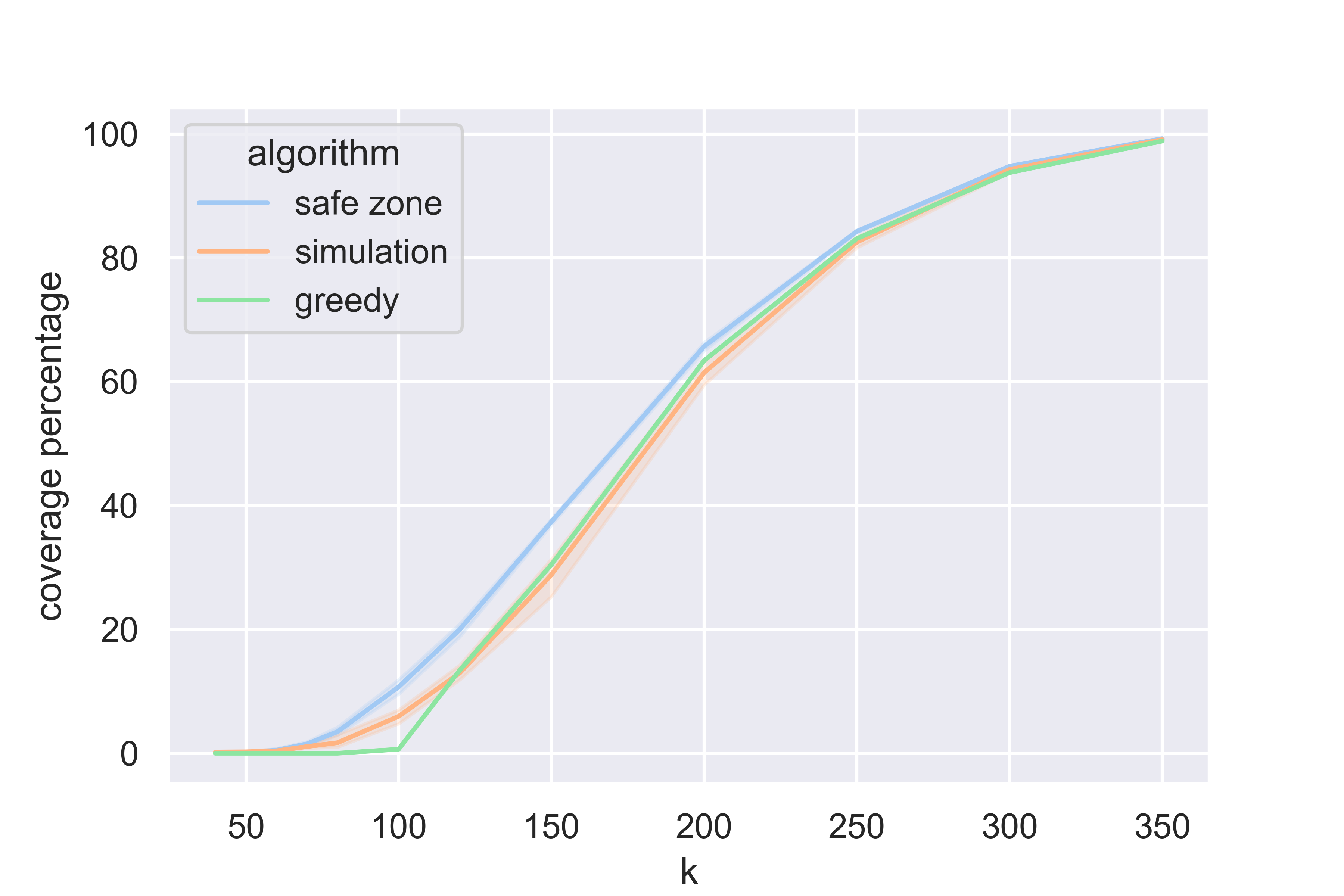}}
\label{fig:emp_results_app}
\end{figure*}
\begin{figure*}[ht]
\centering     %%% not \center
\subfigure[Total number of visits at each state from $2000$ episodes. Zero visits in %white
grey.]{\label{fig:emp_visits_app}\includegraphics[width=51mm]{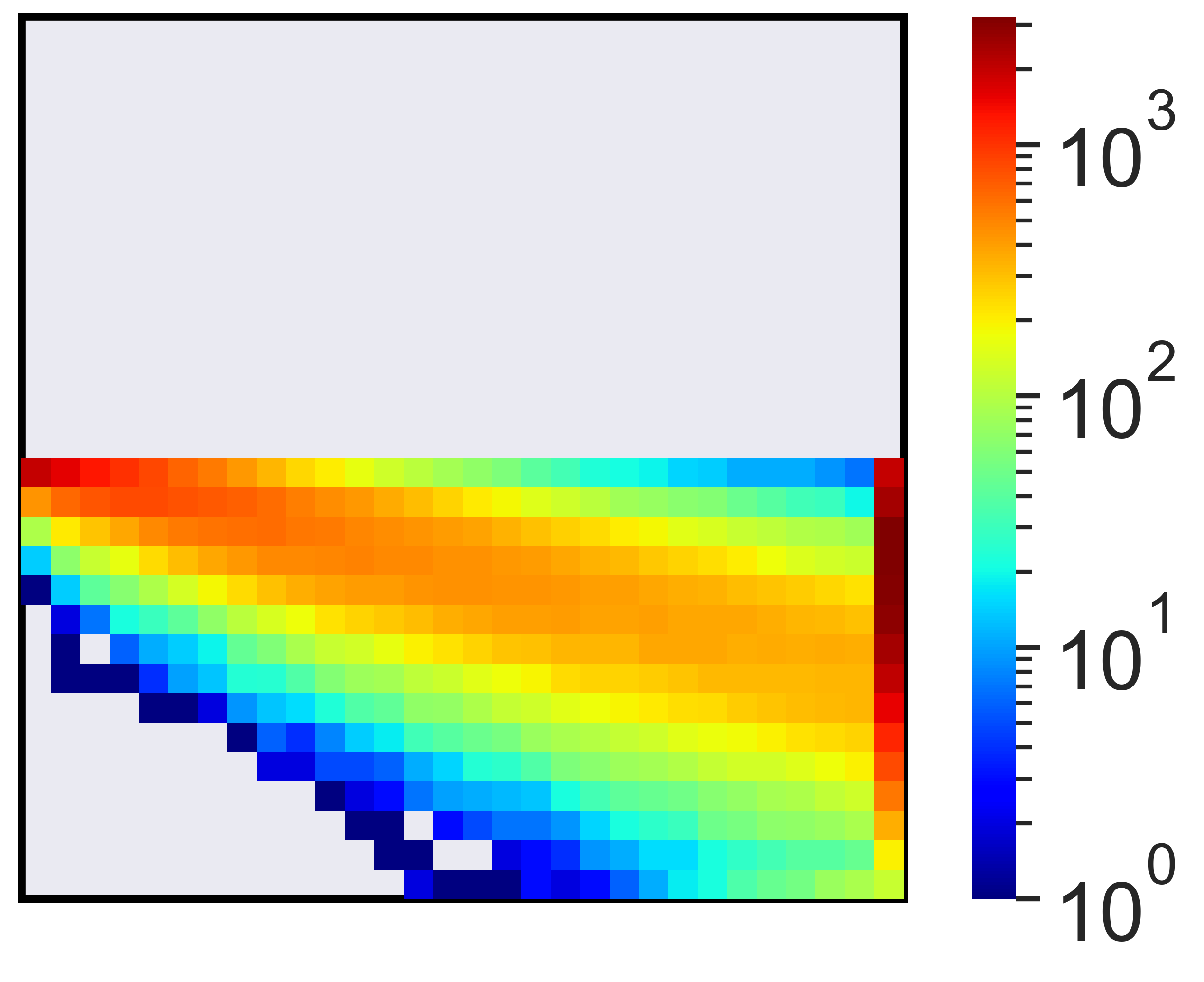}}
\label{fig:emp_examples_app}
\end{figure*}

\cref{fig:policies}  depicts the two policies discussed in the paper when $N=7.$ 
\begin{figure*}[ht!]
\centering     %%% not \center
\subfigure[Go right and then to the goal state.]{\label{fig:policy_1}\includegraphics[width=85mm]{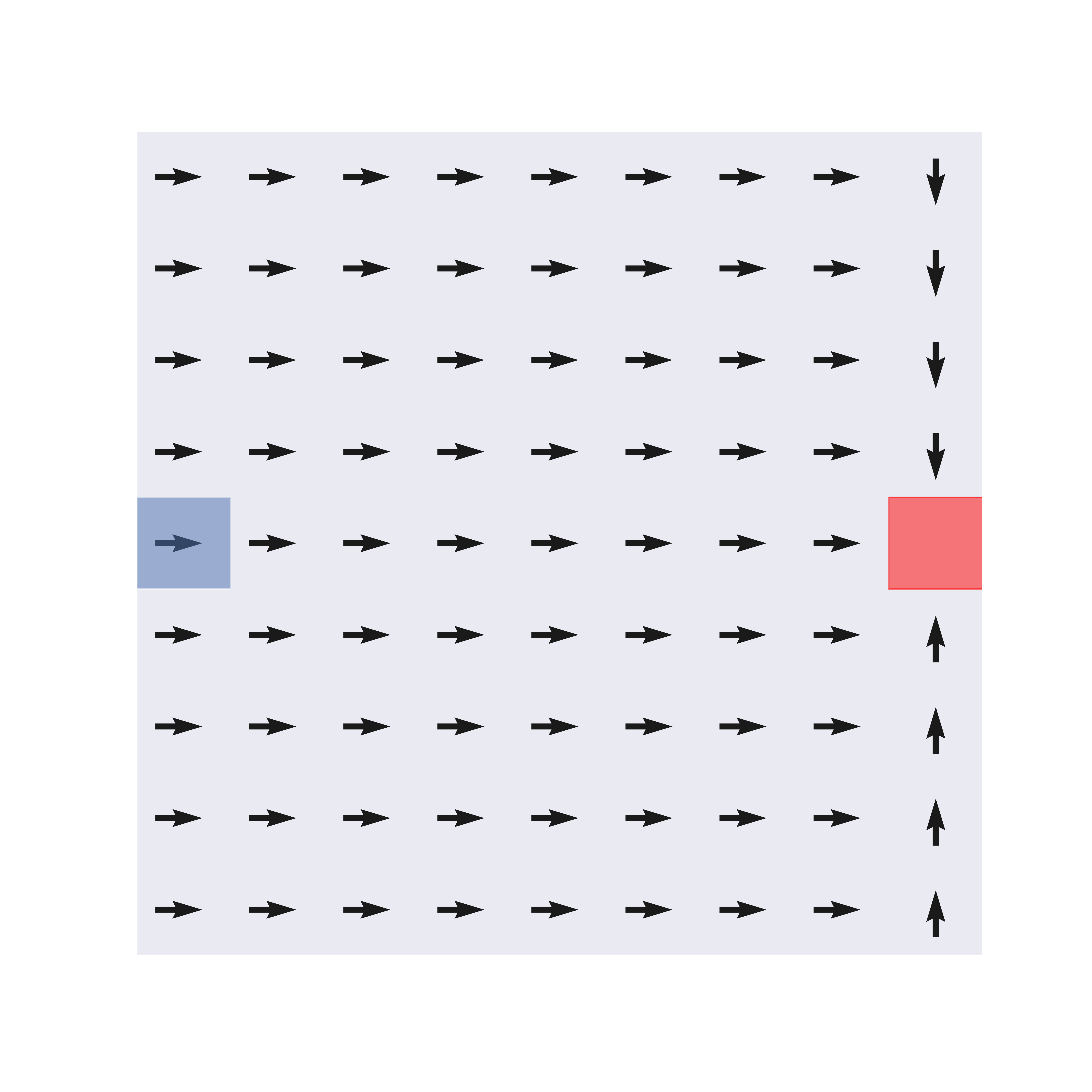}}
\subfigure[Go to the middle and then right.]{\label{fig:policy_2}\includegraphics[width=85mm]{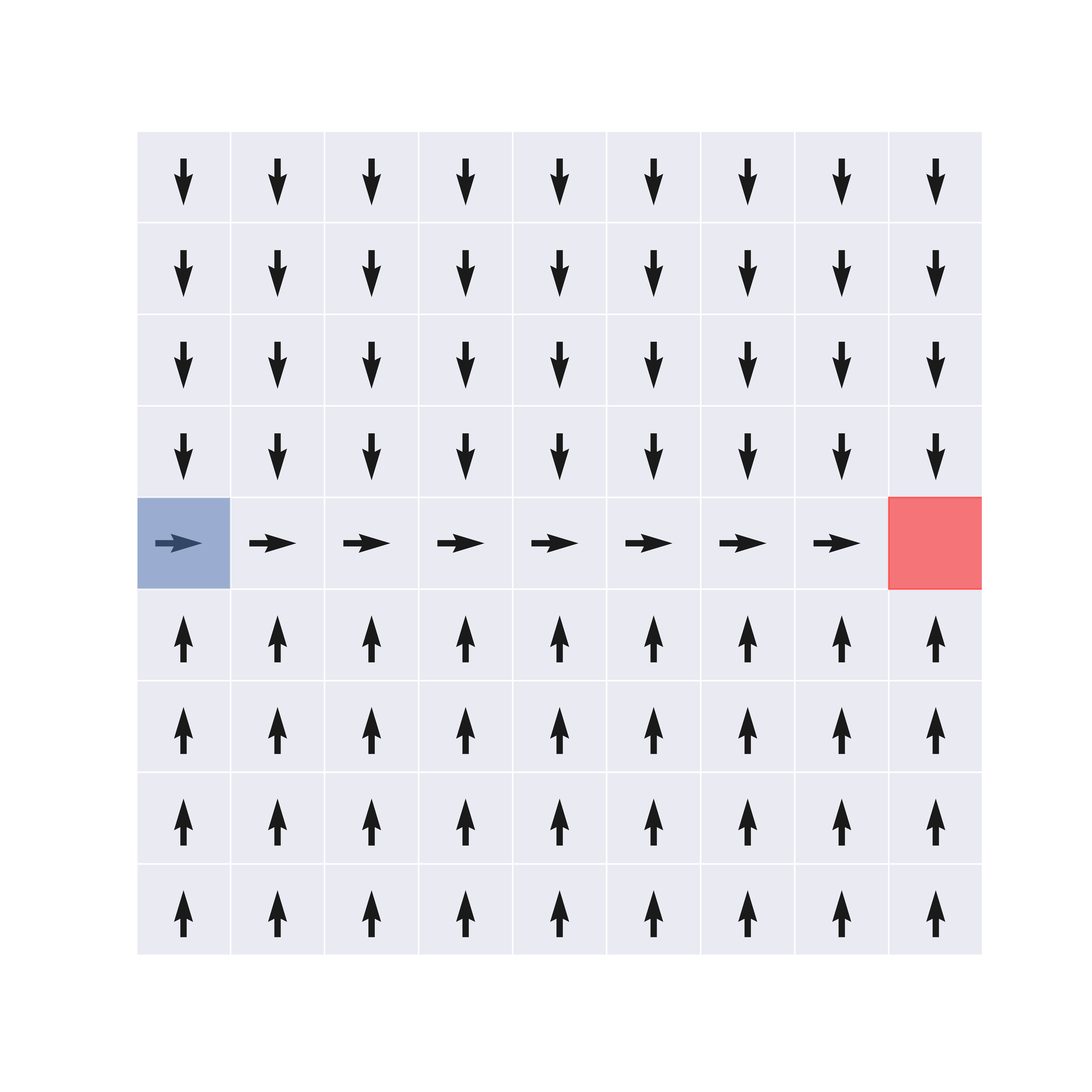}}
\caption{Two policies for the same MDP with $N=7$. Starting state, $s_0$, in blue, and the goal state in red.}\label{fig:policies}
\end{figure*}

\cref{fig:emp_patial_coverage_policy_2} depicts coverage percentage for the different algorithms discussed in the paper when applied to the second policy.
\begin{figure}[ht!]
\centering
\includegraphics[scale=0.6]{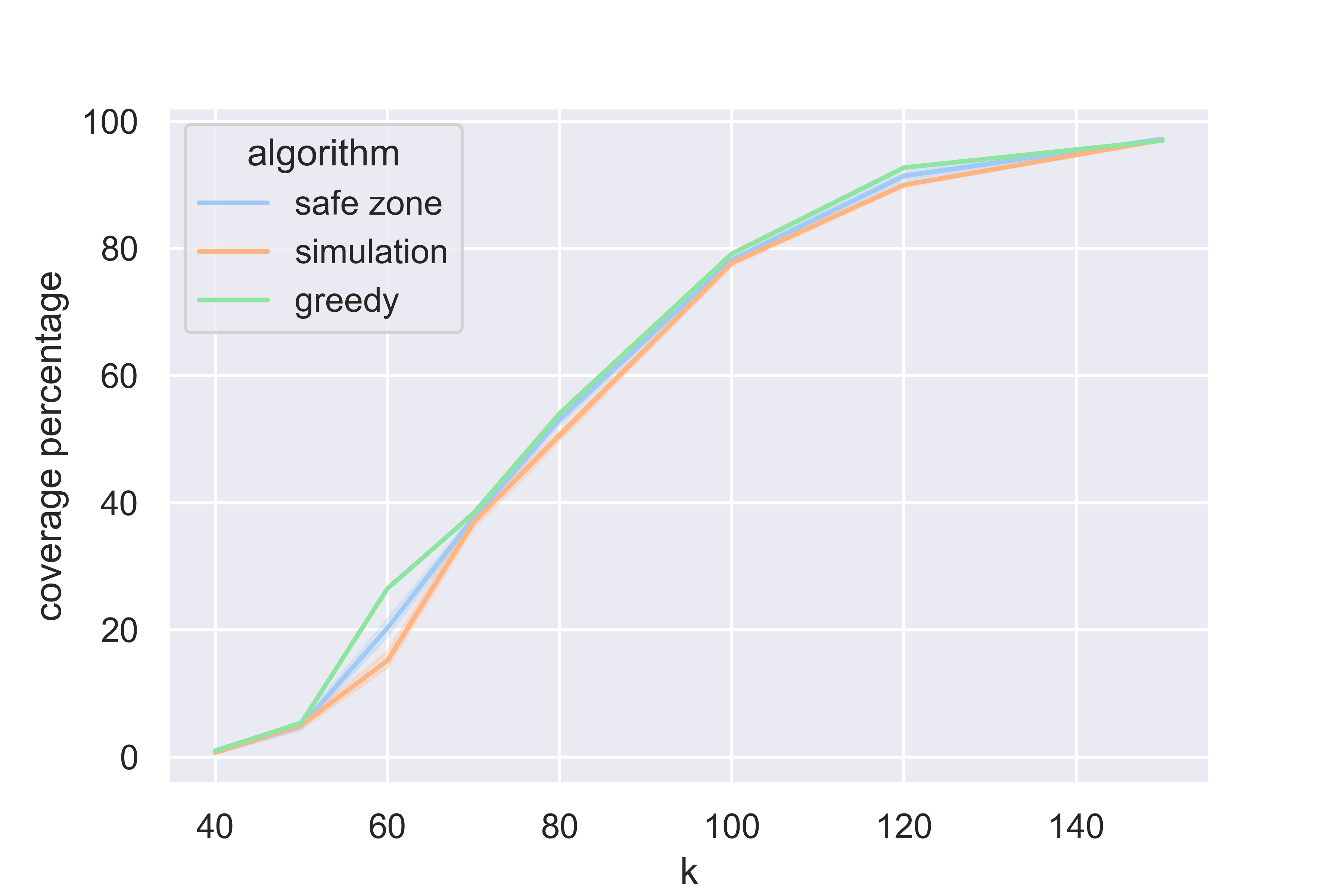}
\caption{ {\newprob} coverage for the  second policy.}    
\label{fig:emp_patial_coverage_policy_2}
\end{figure}

Figure~\ref{fig:emp_visits_app} depicts the number of total visits at each state using the described policy. 

\cref{fig:policy_visits} shows the visits of the policies described in the main paper for $N=30$. It is immediately clear that the {\newprob} of the two policies are fundamentally different. As mentioned, this affects their \newprob sizes. Namely, when trying to go right from a current state in the lowest row it is impossible to get to a square that is lower than that, and the first policy takes advantage of this. In contrast, the second policy keeps trying to go up from the lowest row, which implies that in expectation it goes down more times compared to the first.

\begin{figure*}[ht!]
\centering     %%% not \center
\subfigure[Number of visits at each state for policy ``Go right and then to the middle'']{\label{fig:policy_1_visit}\includegraphics[width=83mm]{figures/safe_zone_policy_1_visit.png}}
\hfill
\subfigure[Number of visits at each state for policy ``Go to the middle and then right'']{\label{fig:policy_2_visit}\includegraphics[width=83mm]{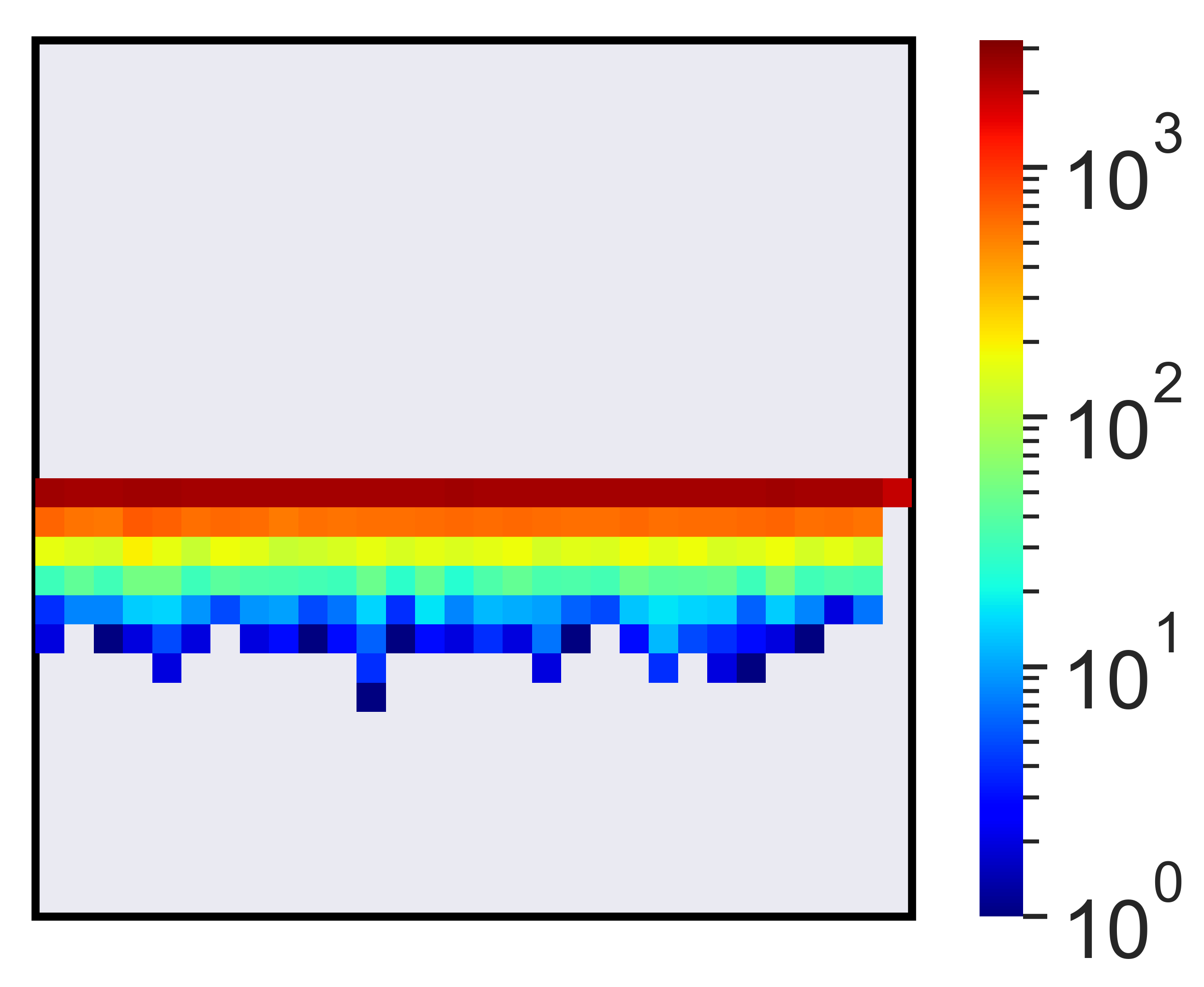}}
\caption{ Total number of visits for the two policies.}\label{fig:policy_visits}
\end{figure*}

\section{Exact Computation 
}\label{sup:exact}
In this section, we assume that the transition function is known to the algorithm and show how to compute $\Delta(F)$.

Given a Markov Chain $\langle  \cS,P,s_0 \rangle$ and a set $F\subseteq \cS$ we create a new Markov Chain $\langle  \cS',P',s_0 \rangle$ as follows.
We add a new state $s_{sink}\not\in\cS$, and set $\cS'=F\cup\{s_{sink}\}$. For each transition from a state $s\in F$ to a state $s'\not\in F$ we modify and make the transition in $P'$ to the sink $s_{sink}$. In $P'$, when we are in $s_{sink}$ we always stay in $s_{sink}$. More formally: (1) if $s,s'\in F$ then $P'(s'|s)=P(s'|s)$, (2) we set $P'(s_{sink}|s)=\sum_{s'\not\in F} P(s'|s)$ and (3) $P'(s_{sink}|s_{sink})=1$ and  $P'(s|s_{sink})=0$ for $s\ne s_{sink}$.

Now we claim that $\Delta(F)=\Pr_{P'}[s_H=s_{sink}]$, since any trajectory that reaches a state not in $F$ will reach the sink in $P'$ and stay there.
We can compute $\Pr_{P'}[s_H=s_{sink}]$ using standard dynamics programming.

The running time of constructing $\langle  \cS',P',s_0 \rangle$ is $O(|\cS|^2)$. Computing the probability of $\Pr_{P'}[s_H=s_{sink}]$ takes $O(H|\cS|^2)$. Therefore we have established the following.

\begin{lemma}\label{lemma:EstimateSafetyExact}
Given a Markov chain  $\langle \cS,P,s_0 \rangle$ and a set $F\subseteq \cS$ we can compute $\Delta(F)$ in time $O(|\cS|^2H)$.
\end{lemma}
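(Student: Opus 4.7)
The plan is to follow the two-step recipe already outlined in the text: (i) reduce the escape probability to an absorption probability in a modified chain, and (ii) compute that absorption probability by straightforward dynamic programming over the horizon.

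First I would construct the auxiliary chain $\langle \cS', P', s_0\rangle$ explicitly. Set $\cS' = F \cup \{s_{sink}\}$ for a fresh state $s_{sink}\notin\cS$. Define the transitions by: $P'(s'\mid s) = P(s'\mid s)$ for $s,s'\in F$; $P'(s_{sink}\mid s) = \sum_{s'\notin F} P(s'\mid s)$ for $s\in F$; and $P'(s_{sink}\mid s_{sink})=1$ with $P'(s\mid s_{sink})=0$ for all $s\ne s_{sink}$. Building this chain requires scanning every entry of $P$ once to decide whether it stays inside $F$ or gets redirected to the sink, which is $O(|\cS|^2)$. I would also verify it is a valid stochastic matrix on $\cS'$: for each $s\in F$, $\sum_{s'\in F} P(s'\mid s) + \sum_{s'\notin F} P(s'\mid s) = 1$, so rows sum to $1$.

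Next I would argue correctness of the identity $\Delta(F) = \Pr_{P'}[s_H = s_{sink}]$. The key coupling argument is: a trajectory $\tau = (s_0,\dots,s_H)$ escapes $F$ iff there is a first time $t^*\in[H]$ with $s_{t^*}\notin F$. Under $P'$, the same trajectory prefix has identical probability up to time $t^*-1$ (all those states are in $F$ and transitions between $F$-states are unchanged), and at step $t^*$ the total probability mass going to any $s'\notin F$ under $P$ equals the probability of transitioning to $s_{sink}$ under $P'$. After absorption, the chain stays at $s_{sink}$ forever, so $s_H=s_{sink}$. Conversely, if $s_H=s_{sink}$ under $P'$, there was a first absorption time, which corresponds bijectively to an escape event under $P$. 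Summing over the first escape/absorption time therefore gives equality of probabilities.

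Finally I would describe the dynamic program. Let $\mu_t \in \R^{\cS'}$ denote the distribution at time $t$, with $\mu_0 = \mathbb{I}[s_0]$. Update via $\mu_{t+1}(s') = \sum_{s\in\cS'} \mu_t(s) P'(s'\mid s)$, a vector-matrix product costing $O(|\cS'|^2) = O(|\cS|^2)$ per step. After $H$ steps, read off $\mu_H(s_{sink})$. Total cost: $O(|\cS|^2)$ for building $P'$ plus $O(|\cS|^2 H)$ for the $H$ updates, giving $O(|\cS|^2 H)$ overall. There is no real obstacle here; the only small thing to be careful about is making sure the construction of $P'$ actually fits within the stated bound (which it does, since $|\cS'|\le |\cS|+1$) and that the correctness argument cleanly handles trajectories that never escape (they stay inside $F$ in both chains, contributing zero to both sides).
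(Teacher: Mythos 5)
Your proposal is correct and follows essentially the same route as the paper: redirect all transitions leaving $F$ into an absorbing sink state, observe that $\Delta(F)$ equals the probability of being at the sink at time $H$, and compute that by propagating the state distribution for $H$ steps at cost $O(|\cS|^2)$ per step. Your first-escape-time coupling argument simply spells out in more detail the correctness claim the paper states briefly, so there is no substantive difference.
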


Note that the above lemma implements an exact version of the $EstimateSafety$ Subroutine.

\end{document}